\def\eqref#1{equation~\ref{#1}}
\def\1{\bm{1}}
\DeclareMathAlphabet{\mathsfit}{\encodingdefault}{\sfdefault}{m}{sl}
\SetMathAlphabet{\mathsfit}{bold}{\encodingdefault}{\sfdefault}{bx}{n}
\def\gT{{\mathcal{T}}}
\def\sT{{\mathbb{T}}}
\def\sT{{\mathbb{T}}}
\theoremstyle{plain}
\newtheorem{theorem}{Theorem}
\newtheorem*{theoremstar}{Theorem}
\newtheorem{prop}{Proposition}
\newtheorem{lemma}{Lemma}
\newtheorem*{lemmastar}{Lemma}
\newtheorem{corollary}{Corollary}
\newtheorem*{corollarystar}{Corollary}
\theoremstyle{definition}
\newtheorem{definition}{Definition}
\newtheorem{assumption}{Assumption}
\newtheorem{remarkapp}{Remark}[section]
\DeclareMathOperator{\sgn}{sign}
\DeclareMathOperator{\cA}{\mathcal{A}}
\DeclareMathOperator{\cS}{\mathcal{S}}
\DeclareMathOperator{\cD}{\mathcal{D}}
\title{Neural PPO-Clip Attains Global Optimality: \\ A Hinge Loss Perspective}
\author{%
  Nai-Chieh Huang\textsuperscript{1}, Ping-Chun Hsieh\textsuperscript{1}, Kuo-Hao Ho\textsuperscript{1}, Hsuan-Yu Yao\textsuperscript{2}, Kai-Chun Hu\textsuperscript{1},\\
  \textbf{Liang-Chun Ouyang\textsuperscript{1}, I-Chen Wu\textsuperscript{1,2}}\\
  \textsuperscript{1}Department of Computer Science, National Yang Ming Chiao Tung University, Hsinchu, Taiwan\\
  \textsuperscript{2}Research Center for Information Technology Innovation, Academia Sinica, Taipei, Taiwan\\
  \texttt{\{naich.cs09,pinghsieh\}@nycu.edu.tw}
}
\begin{document}

\maketitle

\begin{abstract}
Policy optimization is a fundamental principle for designing reinforcement learning algorithms, and one example is the proximal policy optimization algorithm with a clipped surrogate objective (PPO-Clip), which has been popularly used in deep reinforcement learning due to its simplicity and effectiveness. Despite its superior empirical performance, PPO-Clip has not been justified via theoretical proof up to date. 
In this paper, we establish the first global convergence rate of PPO-Clip under neural function approximation.
We identify the fundamental challenges of analyzing PPO-Clip and address them with the two core ideas: (i) We reinterpret PPO-Clip from the perspective of hinge loss, which connects policy improvement with solving a large-margin classification problem with hinge loss and offers a generalized version of the PPO-Clip objective. (ii) 
Based on the above viewpoint, we propose a two-step policy improvement scheme, which facilitates the convergence analysis by decoupling policy search from the complex neural policy parameterization with the help of entropic mirror descent and a regression-based policy update scheme. 
Moreover, our theoretical results provide the first characterization of the effect of the clipping mechanism on the convergence of PPO-Clip.
Through experiments, we empirically validate the reinterpretation of PPO-Clip and the generalized objective with various classifiers on various RL benchmark tasks.
\end{abstract}

\section{Introduction}
\label{section:intro}

Policy optimization is a well-known technique for solving reinforcement learning problems, which iteratively updates the parameters to optimize the designated objective. Policy gradient methods were introduced as a technique to directly solve policy optimization problems. The goal is to find an optimal policy that maximizes the total expected reward through interactions with an environment \citep{williams1992simple, sutton1999policy, kakade2001natural}. The step size is an important hyperparameter that significantly affects the performance of policy gradient algorithms.
Due to the difficulty of selecting a proper step size, Trust Region Policy Optimization (TRPO) \citep{schulman2015trust} was formulated to address this issue. 
TRPO leverages the trust-region method with the second-order approximation to attain strict policy improvement. 
In contrast to TRPO, which is computationally expensive, Proximal Policy Optimization (PPO) \citep{schulman2017proximal} enforces policy improvement only via first-order derivatives. 
PPO has two variants, PPO-KL and PPO-Clip, which augment the objective with the Kullback-Leibler divergence as a penalty and the clipping of probability ratio, respectively. 
Both of the above approaches are extraordinary in various environments, and PPO shines in achieving better computational efficiency \citep{chen2018adaptive, ye2020mastering, byun2020proximal}.

Due to the empirical successes of these policy optimization algorithms, a plethora of recent works make important progress in terms of their theoretical guarantees. In particular, \citep{agarwal2019theory, agarwal2020optimality,bhandari2019global} prove the global convergence result of the policy gradient algorithm under several different settings. Moreover, \citep{mei2020global} establishes the convergence rates of the softmax policy gradient in both the standard and the entropy-regularized settings. Moreover, it has been shown that various policy gradient algorithms also enjoy global convergence \citep{fazel2018global,liu2020improved, wang2021global}.
With regard to TRPO and PPO, \citep{shani2020adaptive} leverages the mirror descent method and establishes the convergence rate of adaptive TRPO under the standard and entropy-regularized setting.
Moreover, \citep{liu2019neural} proves the convergence rate of PPO-KL and TRPO under neural function approximation. 
By contrast, despite that PPO-Clip is a computationally efficient and empirically successful method, the following question about the theory of PPO-Clip remains largely open: \textit{Does PPO-Clip enjoy provable global convergence or have any convergence rate guarantee?}

{In this paper, we answer the above question in the affirmative by establishing the first global convergence rate guarantee for a neural variant of PPO-Clip. 
We achieve this by tackling two fundamental challenges of analyzing PPO-Clip under neural function approximation: (i) \textit{PPO-Clip does not have a simple closed-form expression for each policy update due to clipping}: The clipping mechanism essentially introduces an indicator function regarding the probability ratio, and the indicator output could keep changing during an optimization subroutine of the PPO-Clip objective. This makes it challenging to obtain a closed-form expression of the policy update, especially in neural settings. (ii) \textit{The clipping behavior is tightly coupled with the error of neural function approximation due to the design of the PPO-Clip objective:} In neural PPO-Clip, the advantage function is approximately learned by minimizing some temporal difference loss with the help of a neural network. The approximation error of the advantage function would propagate through the PPO-Clip objective and influence the policy improvement step.}

{Given the above challenges, we highlight the core ideas behind our convergence analysis:}
\vspace{-1mm}
\begin{itemize}[leftmargin=*]
    \item {We first reinterpret and generalize the idea of PPO-Clip from the perspective of \textit{hinge loss} by connecting state-wise policy improvement with solving a large-margin classification problem. Specifically, the process of policy improvement in PPO-Clip can be cast as training a binary classifier with hinge loss via empirical risk minimization, where the sign of the advantage function plays the role of the label.
    Accordingly, this reinterpretation facilitates the analysis of the effect of the estimated advantage function on the clipping behavior.
    Interestingly, this reinterpretation also provides a natural way to generalize PPO-Clip by leveraging various types of classifiers.}
    \item {To obtain a tractable expression for the policy update of PPO-Clip, we propose a two-step policy improvement framework that leverages Entropic Mirror Descent Algorithm (EMDA) \citep{beck2003mirror} to \textit{decouple the policy search and policy parameterization} and then utilizes a regression-based policy update scheme for the neural networks. This approach enables an explicit characterization of the effect of the clipping function on the convergence. Moreover, this framework allows us to easily extend the analysis of PPO-Clip to other variants with different classifiers.}
\end{itemize}


\noindent \textbf{Our Contributions.} We summarize the main contributions of this paper as follows:
\vspace{-1mm}
\begin{itemize}[leftmargin=*]
    \item {To establish the global convergence of PPO-Clip, we first identify the inherent challenges imposed by the clipping function and the neural function approximation in analyzing PPO-Clip. To tackle the challenges, we reinterpret PPO-Clip through the lens of hinge loss and present an EMDA-based two-step policy improvement framework, which makes the analysis tractable by decoupling policy search and the policy parameterization.}
    \item {We establish the first global convergence result and explicitly characterize the $O(1/\sqrt{T})$ convergence rate of PPO-Clip and hence provide an affirmative answer to one critical open question about PPO-Clip. Our theoretical results also provide a sharp characterization of the effect of the clipping mechanism on convergence. We also show that the analysis can be readily extended to other variants of PPO-Clip with different classifiers.}
    \item {We also empirically validate the reinterpretation of PPO-Clip with various classifiers through experiments on various RL benchmark tasks. The experimental results further demonstrate the promise of generalizing PPO-Clip from the hinge loss perspective.}
\end{itemize}

\section{Preliminaries}
\label{section:pre}

\textbf{Markov Decision Processes.}
Consider a discounted Markov Decision Process $(\mathcal{S}, \mathcal{A}, \mathcal{P}, R, \gamma, \mu)$, where $\mathcal{S}$ is the state space (possibly \textit{infinite}), $\mathcal{A}$ is a \textit{finite} action space, $\mathcal{P}: \mathcal{S} \times \mathcal{A} \times \mathcal{S} \rightarrow [0, 1]$ is the transition dynamic of the environment, $R: \mathcal{S} \times \mathcal{A} \rightarrow [0, R_{\max}]$ is the bounded reward function, $\gamma \in (0,1)$ is the discount factor, and $\mu$ is the initial state distribution.
Given a policy $\pi: \mathcal{S} \rightarrow \Delta(\mathcal{A})$, where $\Delta(\mathcal{A})$ is the unit simplex over $\mathcal{A}$, we define the state-action value function $Q^{\pi}(\cdot, \cdot) \coloneqq \mathbb{E}_{a_t \sim \pi(\cdot|s_t), s_{t+1} \sim \mathcal{P}(\cdot|s_t, a_t)}[\sum_{t=0}^{\infty} \gamma^t R(s_t, a_t) \rvert s_0 = s, a_0 = a]$.
Moreover, we define $V^{\pi}(s) \coloneqq \mathbb{E}_{a \sim \pi(\cdot\rvert s)}[Q^{\pi}(s, a)]$ and $A^{\pi}(s, a) \coloneqq Q^{\pi}(s, a) - V^{\pi}(s)$.
Also, we denote $\pi^*$ as an optimal policy that attains the maximum total expected reward and denote $\pi_0$ as the uniform policy. We introduce $\nu_{\pi}(s) = (1 - \gamma) \sum_{t=0}^{\infty} \gamma^t \mathbb{P}(s_t = s | s_0 \sim \mu, \pi)$ as the discounted state visitation distribution induced by $\pi$ and $\sigma_{\pi}(s, a) = \nu_{\pi}(s) \cdot \pi(a|s)$ as the state-action visitation distribution induced by $\pi$. In addition, we define the distribution $\nu^*$ and $\sigma^*$ as the discounted state visitation distribution and the state-action visitation distribution induced by the optimal policy $\pi^*$, respectively. Moreover, we define $\tilde{\sigma}_{\pi} = \nu_{\pi} \pi_0$ as the state-action distribution induced by interactions with the environment through $\pi$, sampling actions from the uniform policy $\pi_0$. 
We use $\mathbb{E}_{\nu_{\pi}}[\cdot]$ and $\mathbb{E}_{\sigma_{\pi}}[\cdot]$ as the shorthand notations of $\mathbb{E}_{s \sim \nu_{\pi}}[\cdot]$ and $\mathbb{E}_{(s,a) \sim \sigma_{\pi}}[\cdot]$, respectively.

For the convergence property, we define the total expected reward over the state distribution $\nu^*$ as
\begin{align}
\label{eq:cL}
    \mathcal{L}(\pi) := \mathbb{E}_{\nu^*}[V^{\pi}(s)].
\end{align}
Here, a maximizer of (\ref{eq:cL}) is equivalent to the original definition of the optimal policy $\pi^*$. We will prove the global convergence by analyzing the difference in $\mathcal{L}$ between our policy and the optimal policy and show that the total expected reward monotonically increases.

\vspace{1mm}
\noindent\textbf{Proximal Policy Optimization (PPO).}
PPO is an empirically successful algorithm that achieves monotonic policy improvement by maximizing a surrogate lower bound of the original objective, either through the Kullback-Leibler penalty (termed PPO-KL) or the clipped probability ratio (termed PPO-Clip). PPO-KL and PPO-Clip are the two variants of PPO. In this paper, our focus is PPO-Clip.
Let $\rho_{s, a}(\theta)$ denote the probability ratio $\frac{\pi_{\theta}(a|s)}{\pi_{\theta_{t}}(a|s)}$. PPO-Clip avoids large policy updates by applying a simple heuristic that clips the probability ratio by the clipping range $\epsilon$ and thereby removes the incentive for moving $\rho_{s,a}(\theta)$ away from 1. Specifically, the objective of PPO-Clip is
\begin{align}
\label{eq:clipobject}
    L^{\text{clip}}(\theta) = \mathbb{E}_{\sigma_{t}}[\min\{&\rho_{s, a}(\theta) A^{\pi_{\theta_{t}}}(s, a), \text{clip}(\rho_{s, a}(\theta), 1-\epsilon, 1+\epsilon) A^{\pi_{\theta_{t}}}(s, a)\}].
\end{align}

\noindent\textbf{Neural Networks.}
We present the notations and assumptions about neural networks. Without loss of generality, suppose $(s, a) \in \mathbb{R}^d$ for every $(s, a) \in \mathcal{S} \times \mathcal{A}$. We use two-layer neural network as $\text{NN}(\alpha;m)$ to parameterize our policy $\pi_{\theta}$ and $Q$ function, the parameterized function of $\text{NN}(\alpha;m)$ is
\begin{align}
    u_{\alpha}(s, a) = \frac{1}{\sqrt{m}} \sum_{i=1}^{m} b_i \cdot \sigma([\alpha]_i^{\top} (s, a)),
\end{align}
where $m$ is the width, $\alpha = ([\alpha]_1^{\top}, \dots, [\alpha]_m^{\top})^{\top} \in \mathbb{R}^{md}$ is the input weights where $[\alpha]_i \in \mathbb{R}^d$, $b_i \in \{-1, 1\}$ are the weights of the output, and $\sigma(\cdot)$ is the Rectified Linear Unit (ReLU) activation function. The initialization of $\alpha(0)$ and $b_i$ is as follows:
\begin{align}
\label{init}
    b_i \sim \text{Unif}(\{1, -1\}), [\alpha(0)]_i \sim \mathcal{N}(0, I_d/d),
\end{align}
where both $b_i$ and $[\alpha(0)]_i$ are i.i.d. for each $i \in [m]$. We fixed the $b_i$ after the initialization, we only train for the weights $\alpha$. Also, to ensure that the local linearization properties hold, we use the projection to restrict the training weights $\alpha$ in an $\ell_2$-ball centered at $\alpha(0)$. We denote the ball as $B_{\alpha} = \{ \alpha: \lVert\alpha - \alpha(0) \rVert_2 \le R_{\alpha}\}$ and define the projection as $\prod_{B_{\alpha}}(\alpha') \coloneqq \arg \min_{\alpha \in B_{\alpha}} \lVert\alpha - \alpha'\rVert_2$.

{Our analysis of neural networks relies on the following assumptions, which are both commonly used regularity conditions for neural networks and neural tangent kernel (NTK) in the reinforcement learning literature \citep{liu2019neural, antos2007fitted, munos2008finite, farahmand2010error, farahmand2016regularized, tosatto2017boosted}:}

\begin{assumption}[Q-Value Function Class]
\label{assump:func}
    For any $R>0$, define a function class
    \begin{align}
    \label{function_class}
        \mathcal{F}_{R,m} = \Big\{\frac{1}{\sqrt{m}} \sum_{i=1}^{m} b_i \cdot \mathds{1}\{[\alpha(0)]_i^{\top} (s, a) > 0\} \cdot [\alpha]_i^{\top} (s, a) \Big\},
    \end{align}
    for all $\alpha$ satisfying $\lVert\alpha - \alpha(0)\rVert_2 \le R$, where $b_i$ and $\alpha(0)$ are initialized as (\ref{init}).
    We assume that $Q^{\pi}(s, a) \in \mathcal{F}_{R_Q, m_Q}$ for any policy $\pi$.
\end{assumption}

\begin{assumption}[Regularity of Stationary Distribution]
\label{assump:reg}
    Given any state-action visitation distribution $\sigma_{\pi}$, there exists a upper bounding constant $c > 0$ for any weight vector $z \in \mathbb{R}^d$ and $\zeta > 0$, such that $\mathbb{E}_{\sigma_{\pi}}[\mathds{1}\{|z^{\top}(s, a)| \le \zeta\} | z] \le c \cdot \zeta/ \lVert z \rVert_2$ holds almost surely.
\end{assumption}
Since $\mathcal{T}^{\pi} Q^{\pi}$ is still a $Q$ function, Assumption \ref{assump:func} give us the {closedness} of our function class under the Bellman operator $\mathcal{T}^{\pi}$. Assumption \ref{assump:reg} states that the distribution $\sigma_{\pi}$ is sufficiently regular, which is required to analyze the neural network error. {Moreover, Assumption \ref{assump:reg} can be interpreted as a variant of the commonly-used concentrability-type assumption, under which the state-action visitation  distribution always has an upper bounded distribution \citep{liu2019neural}}. 

\noindent
\textbf{Notations:} We use $\langle a, b \rangle$ and $a \circ b$ to denote the inner product and the Hadamard product, respectively. Let $\mathds{1}[\cdot]$ denote the indicator function. Let $I_d$ denote the $d \times d$ identity matrix.

\section{{Reinterpreting PPO-Clip via Hinge Loss}}
\label{section:HPO}
{To begin with, we first motivate the connection between hinge loss and PPO-Clip by stating the following lemma about state-wise policy improvement \citep{kakade2002,hu2020rethinking}.}

\begin{lemma}
\label{prop:second}
    Given policies $\pi_{1}$ and $\pi_{2}$, $V^{\pi_{1}}(s)\geq V^{\pi_{2}}(s)$ for all $s\in \cS$ if the following holds:
    \begin{equation}
        (\pi_{1}(a|s)-\pi_{2}(a|s))A^{\pi_{2}}(s,a)\geq0,\ \forall(s,a)\in\mathcal{S}\times\mathcal{A}.
        \label{eq:prop 2 condition}
    \end{equation}
\end{lemma}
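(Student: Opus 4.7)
The plan is to reduce the state-wise hypothesis to the classical policy improvement theorem. First, I would sum the condition (\ref{eq:prop 2 condition}) over $a \in \mathcal{A}$ at an arbitrary state $s$, and use the identity $\sum_{a} \pi_2(a|s)\, A^{\pi_2}(s,a) = 0$ (the advantage has zero mean under the policy defining it). This immediately yields the state-wise bound $\mathbb{E}_{a \sim \pi_1(\cdot|s)}[A^{\pi_2}(s,a)] \geq 0$, or equivalently $\mathbb{E}_{a \sim \pi_1(\cdot|s)}[Q^{\pi_2}(s,a)] \geq V^{\pi_2}(s)$ for every $s \in \mathcal{S}$. This step reduces a comparison of two probability vectors on each state to a sign condition on a single expectation.

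With that inequality in hand, two routes close the proof. The first is iterated unfolding of the Bellman equation: substitute $Q^{\pi_2}(s,a) = R(s,a) + \gamma\, \mathbb{E}_{s' \sim \mathcal{P}(\cdot|s,a)}[V^{\pi_2}(s')]$, apply the Step~1 inequality again at each rolled-out next state, and continue for $n$ steps. The result is $V^{\pi_2}(s) \leq \mathbb{E}^{\pi_1}\bigl[\sum_{t=0}^{n-1} \gamma^t R(s_t,a_t)\ \big|\ s_0 = s\bigr] + \gamma^n\, \mathbb{E}^{\pi_1}[V^{\pi_2}(s_n) \mid s_0 = s]$. Since $0 \leq V^{\pi_2} \leq R_{\max}/(1-\gamma)$ uniformly and $\gamma \in (0,1)$, the remainder vanishes as $n \to \infty$ while the leading sum converges to $V^{\pi_1}(s)$, giving $V^{\pi_2}(s) \leq V^{\pi_1}(s)$. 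Alternatively, one can invoke the performance-difference identity $V^{\pi_1}(s) - V^{\pi_2}(s) = \tfrac{1}{1-\gamma}\, \mathbb{E}_{s' \sim \nu^{\pi_1}_s}\mathbb{E}_{a \sim \pi_1(\cdot|s')}[A^{\pi_2}(s',a)]$, where $\nu^{\pi_1}_s$ is the discounted state-visitation distribution under $\pi_1$ starting from $s$; the Step~1 inequality makes the integrand nonnegative at every $s'$, finishing the proof in one line.

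The main obstacle is essentially bookkeeping rather than a genuine technical hurdle: the state space is allowed to be infinite, so I would justify interchanging the infinite sum and expectation via the uniform bound $V^{\pi_2} \leq R_{\max}/(1-\gamma)$ together with $\gamma < 1$ (dominated convergence). No regularity assumption on neural networks is invoked here because the statement is purely about MDPs. The real content of the lemma for the paper is not the proof — it is the factored hypothesis $(\pi_1(a|s) - \pi_2(a|s)) \cdot A^{\pi_2}(s,a) \geq 0$, a pointwise sign-alignment between the direction of the policy update and the sign of the advantage, which is exactly the geometric condition later reinterpreted as the margin constraint in a hinge-loss classification problem.
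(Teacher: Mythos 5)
Your proof is correct and follows essentially the same route the paper takes: summing the pointwise condition over $a$ and using $\sum_{a}\pi_{2}(a|s)A^{\pi_{2}}(s,a)=0$ reduces the hypothesis to $\sum_{a}\pi_{1}(a|s)A^{\pi_{2}}(s,a)\geq 0$ for all $s$, which is exactly the sufficient condition of Lemma~\ref{prop:first} in the appendix, itself a direct consequence of the performance difference lemma of \citep{kakade2002}. Both of your closing arguments (Bellman unrolling with the dominated-convergence justification, or the performance-difference identity) are valid; the latter is the one the paper invokes.
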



Notably, Lemma \ref{prop:second} offers a useful insight that policy improvement can be achieved by simply adjusting the action distribution based solely on the \textit{sign of the advantage} of the state-action pairs, regardless of their magnitude. 
Interestingly, one can draw an analogy between (\ref{eq:prop 2 condition}) in Lemma \ref{prop:second} and learning a linear binary classifier: (i) \textit{Features}: The state-action representation can be viewed as the feature vector of a training sample; (ii)
\textit{Labels}: The sign of $A^{\pi_2}(s,a)$ resembles a binary label; (iii) \textit{Classifiers}: $\pi_{1}(a|s)-\pi_{2}(a|s)$ serves as the 
prediction of a linear classifier. 
{Next, we substantiate this insight and rethink PPO-Clip via hinge loss.}

\vspace{-1mm}
\subsection{Connecting PPO-Clip and Hinge Loss}
In PPO-Clip, the policy stops being updated when the probability ratio is out of the clipping range. 
This behavior coincides with the large-margin classification where the classifier intends to “push” the predicted label out of a margin \citep{pi2020low}.
Specifically, the gradient of the clipped objective is indeed the negative of the gradient of hinge loss objective, i.e.,
\begin{align}
    \nonumber&\frac{\partial}{\partial\theta}   \min\{\rho_{s,a}(\theta){A}^{\pi}(s,a),\text{clip}(\rho_{s,a}(\theta),1-\epsilon,1+\epsilon){A}^{\pi}(s,a)\} \\
    &=-\frac{\partial}{\partial\theta}\ \lvert {A}^{\pi}(s,a)\rvert\ \ell(\sgn({A}^{\pi}(s,a)), \rho_{s,a}(\theta)-1, \epsilon),
\end{align}
where $\ell(y_{i}, f_{\theta}(x_i), \epsilon)$ is the hinge loss defined as $\max\{0, \epsilon-y_{i} \cdot f_{\theta}(x_i)\}$, $\epsilon$ is the margin, $y_{i}\in\{-1,1\}$ the label corresponding to the data $x_{i}$, and $f_\theta(x_{i})$ serves as the binary classifier. Please see Appendix \ref{app:compare PPO-Clip and HPO} for a detailed comparison of the two objectives.
Once $y_{i}f_{\theta}(x_i)$ is larger than the margin, $\ell(y_{i}, f_{\theta}(x_i), \epsilon)$ will equal zero, which reflects the sample clipping mechanism in PPO-Clip.
Note that hinge loss has been commonly used for large-margin classification, most notably for support vector machines \citep{freund1999large}.
From the above, maximizing the objective in (\ref{eq:clipobject}) can be rewritten as minimizing the following loss: 
\begin{align}
    \label{eq:hingeobject}
    L(\theta) = \sum_{s\in\mathcal{S}}d_{\mu}^{\pi}(s)\sum_{a\in\mathcal{A}}&\Big(\pi(a|s)\lvert {A}^{\pi}(s,a)\rvert\cdot \ell(\sgn({A}^{\pi}(s,a)), \rho_{s,a}(\theta)-1, \epsilon)\Big). 
\end{align}
In practice, we draw a batch of state-action pairs and use the sample average to approximately minimize the loss in (\ref{eq:hingeobject}).

\vspace{-1mm}
\subsection{{A Generalized PPO-Clip Objective}}
\label{section:generalized obj}
{Based on the above reinterpretation of PPO-Clip, we provide a general form of the PPO-Clip loss function from a hinge loss perspective as follows,}
\begin{equation}
    {L_{\text{Hinge}}(\theta)}=\frac{1}{\lvert\mathcal{D}\rvert}\sum_{(s,a)\in\mathcal{D}}\text{weight}\times\ell(\text{label},\text{classifier},\text{margin}).\label{eq:HPO loss}
\end{equation}
Different combinations of classifiers, margins, and weights lead to different loss functions and hence represents different algorithms{, where PPO-Clip is a special case of (\ref{eq:HPO loss}) with a specific classifier $\rho_{s,a}(\theta)-1$. As another example, a variant of PPO-Clip can be obtained by using a subtraction classifier, i.e., $\pi_{\theta}(a|s) - \pi_{\theta_t}(a|s)$ (termed as PPO-Clip-sub in this paper). 
We demonstrate the empirical evaluation of these variants in Section \ref{section:Discussions}. Given the above examples, the proposed objective provides to generalizing PPO-Clip via various classifiers.
}

{From this perspective, we are able to prove almost-sure asymptotic convergence for PPO-Clip in the tabular setting as a side product. As our main focus is the theoretical guarantee for PPO-Clip under neural function approximation, we defer the results of the tabular case to Appendix \ref{app:mini-batch thm}.}

\section{Neural PPO-Clip}
\label{section:Neural}
{In this section, we provide the details of the variant of PPO-Clip of interest {under neural function approximation}.}

\vspace{-1mm}
\subsection{{EMDA-Based Policy Search}}
\label{section:NeuralHPO:NAPS}
{Given the hinge loss perspective, we proceed to present our two-step policy improvement scheme based on EMDA \citep{beck2003mirror}, which is a classic algorithm for optimizing an objective function subject to simplex constraints, and we call it EMDA-based Policy Search.
Specifically, this scheme consists of two subroutines: }
\begin{itemize}[leftmargin=*]
    \item {\textbf{Direct policy search}:
In this step, we directly search for an improved policy in the policy space by EMDA.
More specifically, in each iteration $t$, we do a policy search by applying EMDA with direct parameterization to minimize the generalized PPO-Clip objective in (\ref{eq:HPO loss}) for finitely many iterations $K$ and thereby obtain an improved policy $\widehat{\pi}_{t+1}$ as the target policy. The pseudo code of EMDA is provided in Algorithm \ref{algo:2}. Notably, under EMDA, we can obtain an explicit expression of the target policy $\widehat{\pi}_{t+1}$. } 
\item {\textbf{Neural approximation for the target policy}: Given the target policy $\widehat{\pi}_{t+1}$ obtained by EMDA, we then approximate it in the parameter space by utilizing the representation power of neural networks via a regression-based policy update scheme (e.g., by using the mean-squared error loss).
The detailed neural parameterization will be described in the next subsection.}
\end{itemize}
{The motivations and benefits of using EMDA with direct parameterization are mainly two-fold:}
\begin{itemize}[leftmargin=*]
    \item {\textbf{Decoupling improvement and approximation:} One major goal of this paper is to provide rigorous theoretical guarantees for PPO-Clip under neural function approximation. To make the analysis tractable and general, we would like to decouple policy improvement and function approximation of the policy. To achieve this, we take the EMDA-based two-step approach, as described above.}
    \item {\textbf{EMDA-induced closed-form expression of the target policy:} For the analysis of a policy optimization method, in general, we would like to derive a closed-form optimal solution to the policy improvement objective as the ideal target policy. However, such a closed-form optimal solution of an \textit{arbitrary} objective function does not always exist. In particular, the loss function of PPO-Clip is an example that does not have a simple closed-form optimal solution. From this view, EMDA, which enjoys closed-form updates, substantially facilitates the convergence analysis.}
\end{itemize}

\subsection{{Neural PPO-Clip}}

\noindent \textbf{Parameterization Setting.} We parameterize our policy at each iteration $t$ as an energy-based policy $\pi_{\theta_t}(a|s) \propto \exp\{\tau_t^{-1} f_{\theta_t}(s, a)\}$, where $\tau_t$ is the temperature parameter and $f_{\theta_t}(s, a) = \text{NN}(\theta_t;m_f)$ are the energy functions, where $m_f$ is the width of neural network $f_\theta$ defined in Section \ref{section:pre}.
Similarly, we parameterize our state-action value function $Q_{\omega}(s, a) = \text{NN}(\omega;m_Q)$, where $m_Q$ is the width of neural network $Q_\omega$. We define $V_{\omega}(s)$ as the value function obtained using the Bellman Expectation Equation. Moreover, we define $A_{\omega}(s, a) := Q_{\omega}(s, a) - V_{\omega}(s)$.

\noindent \textbf{Policy Improvement.}
According to the {EMDA-based Policy Search} framework presented above, we first give the closed-form of the obtained target policy of Neural PPO-Clip as follows. The detailed proof is in Appendix \ref{app:B}.

\begin{prop}[EMDA Target Policy]
\label{pp:PI}
    For the target policy obtained by the EMDA subroutine at the $t$-th iteration, we have
    \begin{align}
        \log \widehat{\pi}_{t+1}(a|s) \propto C_t(s, a) A_{\omega_t}(s, a) + \tau_{t}^{-1} f_{\theta_t}(s, a), 
    \end{align}
    where $C_t(s, a) A_{\omega_t}(s, a) = -\sum_{k=0}^{K-1} \eta g_{s,a}^{(k)}$ as given in Algorithm \ref{algo:2}.
\end{prop}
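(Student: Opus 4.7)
The plan is to derive the closed form of $\widehat{\pi}_{t+1}$ by unrolling the EMDA iterations on the direct parameterization of the policy. I would start from the initial inner iterate $\pi^{(0)} = \pi_{\theta_t}$, which by our energy-based parameterization satisfies $\log \pi_{\theta_t}(a|s) = \tau_t^{-1} f_{\theta_t}(s,a) - \log Z_t(s)$ for a state-dependent normalizing constant $Z_t(s)$. The standard EMDA update on the probability simplex, induced by the negative-entropy mirror map, takes the multiplicative-exponential form $\pi^{(k+1)}(a|s) \propto \pi^{(k)}(a|s)\, \exp\!\bigl(-\eta\, g_{s,a}^{(k)}\bigr)$, where $g_{s,a}^{(k)}$ denotes the (sub)gradient of the generalized hinge loss (\ref{eq:HPO loss}) with respect to $\pi(a|s)$ at the current iterate $\pi^{(k)}$, as specified in Algorithm \ref{algo:2}.

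Next, I would telescope the $K$ inner updates and take logarithms, absorbing all $a$-independent normalizers into the $\propto$ relation, to obtain
\begin{equation*}
\log \widehat{\pi}_{t+1}(a|s) \;\propto\; \log \pi_{\theta_t}(a|s) \;-\; \sum_{k=0}^{K-1} \eta\, g_{s,a}^{(k)} \;\propto\; \tau_t^{-1} f_{\theta_t}(s,a) \;-\; \sum_{k=0}^{K-1} \eta\, g_{s,a}^{(k)}.
\end{equation*}
Then I would invoke the definition from Algorithm \ref{algo:2} that identifies $C_t(s,a)\,A_{\omega_t}(s,a) = -\sum_{k=0}^{K-1}\eta\, g_{s,a}^{(k)}$. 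To justify that this identification is well posed (i.e., that $A_{\omega_t}$ can indeed be factored out), I would examine the structure of $g_{s,a}^{(k)}$: in (\ref{eq:HPO loss}) the advantage $A_{\omega_t}(s,a)$ enters only as the weight $\lvert A_{\omega_t}(s,a)\rvert$ paired with the label $\sgn(A_{\omega_t}(s,a))$ inside the hinge $\ell$, so differentiation with respect to $\pi(a|s)$ preserves this multiplicative coupling and yields $g_{s,a}^{(k)} = h^{(k)}(s,a)\,A_{\omega_t}(s,a)$ for a scalar factor $h^{(k)}(s,a)$ that aggregates the classifier derivative and the hinge subgradient. Setting $C_t(s,a) := -\sum_{k=0}^{K-1}\eta\, h^{(k)}(s,a)$ then closes the argument.

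The main obstacle I foresee is handling the non-smoothness of the hinge loss at the margin when verifying the factorization $g_{s,a}^{(k)} = h^{(k)}(s,a)\,A_{\omega_t}(s,a)$. For the PPO-Clip classifier $\rho_{s,a}(\theta)-1$ the chain rule is immediate almost everywhere in $\pi(a|s)$; at the kink I would select a subgradient consistent with the factorization, which is possible because both one-sided derivatives of the hinge retain the $A_{\omega_t}$ weight. An analogous argument extends to the generalized classifiers of Section \ref{section:generalized obj} whose dependence on $\pi(a|s)$ is smooth, which is why the same closed form of $\widehat{\pi}_{t+1}$ carries over to the variants considered later in the paper.
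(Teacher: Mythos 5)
Your proposal is correct and follows essentially the same route as the paper's proof in Appendix~\ref{app:B}: unroll the multiplicative EMDA updates, take logarithms, absorb the state-dependent normalizers $\sum_{k}\log(\langle w_s,\tilde{\theta}^{(k)}\rangle)$ and $\log Z_t(s)$ into the proportionality, and invoke the definition of $C_t(s,a)$ from Algorithm~\ref{algo:2}. Your extra discussion of the factorization $g_{s,a}^{(k)} = h^{(k)}(s,a)\,A_{\omega_t}(s,a)$ is a reasonable addition but is not needed, since Algorithm~\ref{algo:2} defines $C_t(s,a)$ directly via the accumulation $C_t(s,a) \leftarrow C_t(s,a) - \eta g_{s,a}^{(k)}/A_{\omega_t}(s,a)$ for $A_{\omega_t}(s,a)\neq 0$, which makes the identification hold by construction.
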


Recall that the target policy $\widehat{\pi}$ is the direct parameterization in the policy space, but our policy $\pi_{\theta}$ is an energy-based (softmax) policy that is proportional to the exponentiated energy function. 
This explains why we consider the $\log \widehat{\pi}_{t+1}(a|s)$ in Proposition \ref{pp:PI}.
{Another benefit of using EMDA is that it closely matches the energy-based policies considered in Neural PPO-Clip due to the inherent exponentiated gradient update.}

Then, we discuss the details of the neural function approximation of our policy. After obtaining the target policy by Proposition \ref{pp:PI}, we solve the Mean Squared Error (MSE) subproblem with respect to $\theta$ to approximate the target policy as follows:
\begin{align}
    \mathbb{E}_{\tilde{\sigma}_t}[(f_{\theta}(s, a) - \tau_{t+1} (C_t(s, a) A_{\omega_t}(s, a) + \tau_{t}^{-1} f_{\theta_t}(s, a)))^2].
\end{align}
Notice that we consider the state-action distribution $\tilde{\sigma}_t$ which samples the action through a uniform policy $\pi_0$. In this manner, we use more exploratory data to improve our current policy. In particular, we use the SGD to tackle the above subproblem, and the pseudo code is provided in Appendix \ref{app:pseudo_code}.

\noindent \textbf{Policy Evaluation.} To evaluate $Q$, we use a neural network to approximate the true state-action value function $Q^{\pi_{\theta_t}}$ by solving the Mean Square Bellman Error (MSBE) subproblem. The MSBE subproblem is to minimize the following objective with respect to $\omega$ at each iteration $t$:
\begin{align}
    \mathbb{E}_{\sigma_t}[(Q_{\omega}(s, a) - [\mathcal{T}^{\pi_{\theta_t}}  Q_{\omega}](s, a))^2],
\end{align}
where $\mathcal{T}^{\pi_{\theta_t}} $ is the Bellman operator of policy $\pi_{\theta_t}$ such that
\begin{align}
    &[\mathcal{T}^{\pi_{\theta_t}} Q_{\omega}](s, a)\nonumber\\&= \mathbb{E}[r(s, a) + \gamma Q_{\omega}(s',a') \mid s' \sim \mathcal{P}(\cdot|s, a), a' \sim \pi_{\theta_t}(\cdot|s')].
\end{align}
The pseudo code of neural TD update for state-action value function $Q_{\omega}$ is in Appendix \ref{app:pseudo_code}.
It is worth mentioning that this variant of Neural PPO-Clip is not a fully on-policy algorithm. Although we interact with the environment by our current policy, we sample the actions by the uniform policy $\pi_0$ for policy improvement. {We provide the pseudo code of Neural PPO-Clip as the following Algorithm \ref{algo:1 incomplete} (please refer to Algorithm \ref{algo:1} in Appendix \ref{app:pseudo_code} for the complete version) and the pseudo code of EMDA as Algorithm \ref{algo:2}.
The pseudo code of Algorithms \ref{algo:3}-\ref{algo:4} is in Appendix \ref{app:pseudo_code}.}

\vspace{-1mm}
\begin{algorithm}[!htbp]
\caption{Neural PPO-Clip}
\label{algo:1 incomplete}
    \begin{algorithmic}[1]
        \State {\bfseries Input:} $L_{\text{Hinge}}(\theta)$, $T$, $\epsilon$, EMDA step size $\eta$, number of EMDA iterations $K$, number of SGD and TD update iterations $T_{\text{upd}}$;
        \State {\bfseries Initialization:} the policy $\pi_{\theta_0}$ as a uniform policy\;
        \For{$t=1,\cdots,T-1$}
            \State Set temperature parameter $\tau_{t+1}$\;
            \State Sample the tuple $\{s_i, a_i, a_i^0, s_i',a_i'\}_{i=1}^{T_{\text{upd}}}$\;
            \State Run EMDA as Algorithm \ref{algo:2} with $L_{\text{Hinge}}(\theta)$\;
            \State Run TD as Algorithm \ref{algo:3}: $Q_{\omega_t} = \text{NN}(\omega_t;m_{Q})$\;
            \State Calculate $V_{\omega_t}$ and the advantage $A_{\omega_t} = Q_{\omega_t} - V_{\omega_t}$\;
            \State Run SGD as Algorithm \ref{algo:4}: $f_{\theta_{t+1}} = \text{NN}(\theta_{t+1};m_f)$\;
            \State Update the policy $\pi_{\theta_{t+1}} \propto \exp \{ \tau_{t+1}^{-1} f_{\theta_{t+1}}\}$\;
        \EndFor
    \end{algorithmic}
\end{algorithm}
\vspace{-4mm}
\begin{algorithm}[!htbp]
\caption{EMDA}
\label{algo:2}
    \begin{algorithmic}[1]
        \State {\bfseries Input:} $L_{\text{Hinge}}(\theta)$, EMDA step size $\eta$, number of EMDA iterations $K$, initial policy $\pi_{\theta_{t}}$, sample batch $\{s_i\}_{i=1}^{T_{\text{upd}}}$\;
        \State {\bfseries Initialization:} $\tilde{\theta}^{(0)} = \pi_{\theta_{t}}$, $C_t(s, a) = 0$, for all $s, a$\;
        \For{$k=0,\cdots,K-1$}
            \For{\text{each state} $s$ \text{in the batch}}
            \State Find $g_{s,a}^{(k)} = \left.\frac{\partial L_{\text{Hinge}}(\theta)}{\partial \theta_{s, a}}\right|_{\theta = \tilde{\theta}^{(k)}}$, for each $a$\;
            \State Let $w_s = (e^{-\eta g_{s,1}}, \dots, e^{-\eta g_{s,|\mathcal{A}|}})$\;
            \State $\tilde{\theta}^{(k+1)} = \frac{1}{\langle w_s, \tilde{\theta}^{(k)} \rangle} (w_s \circ \tilde{\theta}^{(k)})$\;
            \State $C_t(s, a) \leftarrow C_t(s, a) - \eta g_{s,a}^{(k)} / A_{\omega_t}(s, a)$, for each 
            
            \ \ \ \ \ \ $a$ with $A_{\omega_t}(s, a) \neq 0$\;
            \EndFor
        \EndFor
        \State $\widehat{\pi}_{t+1} = \tilde{\theta}^{(K)}$\;
        \State {\bfseries Output:} Return the policy $\widehat{\pi}_{t+1}$, and $C_t$\;
    \end{algorithmic}
\end{algorithm}

\section{Main Results}
\label{section:analysis}
In this section, we present the convergence analysis of Neural PPO-Clip {as well as the insights about the clipping mechanism provided by our theoretical results}.

\vspace{-1mm}
\subsection{Convergence Guarantee of Neural PPO-Clip}
Inspired by the analysis of \citep{liu2019neural}, we analyze the convergence behavior of Neural PPO-Clip based on the NTK technique. Nevertheless, the analysis presents several unique technical challenges in establishing its convergence: (i) \textit{Tight coupling between function approximation error and the clipping behavior}: The clipping mechanism can be viewed as an indicator function. The function approximation for advantage would significantly influence the value of the indicator function in a highly complex manner. As a result, handling the error between the neural approximated advantage and the true advantage serves as one major challenge in the analysis; (ii) \textit{Lack of a closed-form expression of policy update}: Due to the clipping function in the hinge loss objective and the iterative updates in the EMDA subroutine, the new policy does not have a simple closed-form expression. This is one salient difference between the analysis of Neural PPO-Clip and other neural algorithms (cf. \cite{liu2019neural}); (iii) \textit{NTK technique on advantage function}: Another technicality is that the advantage function requires the NTK projection and linearization properties to characterize the approximation error. However, since we use the neural network to approximate the state-action value function instead of the advantage function, it requires additional effort to establish the error bound of the advantage function.
Throughout this section, we suppose Assumptions \ref{assump:func} and \ref{assump:reg} hold.

Given that we need to analyze the error between our approximation and the true function, we further define the target policy under the true advantage function $A^{\pi_{\theta_t}}$ as {$\pi_{t+1}(a|s) := \bar{C}_t(s, a)A^{\pi_{\theta_t}}(s, a) + \tau_t^{-1} f_{\theta_t}(s, a)$, where $\bar{C}_t(s, a)$ is the $C_t(s,a)$ obtained under $A^{\pi_{\theta_t}}$}.
Moreover, all the expectations about $A_{\omega}$ throughout the analysis are with respect to the randomness of the neural network initialization.
Below we state the convergence rate and the sufficient condition of Neural PPO-Clip, which is also the main theorem of our paper.
\begin{theorem}[{General} Convergence Rate of Neural PPO-Clip]
\label{thm:main}
    Let the followings hold for all $t$, 
    \begin{align}
        \label{suff:1}
        &\text{(i) } L_{C} \cdot |A^{\pi}(s, a)| \le \bar{C}_t(s, a) \cdot |A^{\pi}(s, a)| \le U_{C} \cdot |A^{\pi}(s, a)|, \\
        \label{suff:2}
        &\text{(ii) } L_C = \omega(T^{-1}), U_C = O(T^{-1/2})
    \end{align}
    {where $L_{C}, U_{C} > 0$ are finite values dependent on $T$}. Then, the policy sequence $\{\pi_{\theta_t}\}_{t=0}^{T}$ obtained by Neural PPO-Clip satisfies
    \begin{align}
    \label{thm:main:eq}
        &\min_{0\le t \le T} \{\mathcal{L}(\pi^*) - \mathcal{L}(\pi_{\theta_t})\} \le \frac{\log |\mathcal{A}| + \sum_{t=0}^{T-1} (\varepsilon_t + \varepsilon_t') + T U_{C}^2 (2 \psi^* + M)}{T L_{C} (1 - \gamma)},
    \end{align}
    where $\varepsilon_t = C_{\infty} \tau_{t+1}^{-1} \phi^* \epsilon_{t+1}^{1/2} + Y^{1/2} \psi^* \epsilon_t'^{1/2}$, $\varepsilon_t' = |\mathcal{A}| \cdot C_{\infty} \tau_{t+1}^{-2} \epsilon_{t+1}$, $M = 4\mathbb{E}_{\nu^*}[\max_{a} (Q_{\omega_0}(s, a))^2] + 4R_f^2$, $M' = 4\mathbb{E}_{\nu_t}[\max_{a} (Q_{\omega_0}(s, a))^2] + 4R_f^2$, and $Y = 2M' + 2(R_{\max} / (1 - \gamma))^2$.
    
\end{theorem}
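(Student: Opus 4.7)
The plan is to carry out a mirror-descent style convergence analysis with $\mathbb{E}_{\nu^*}[\KL(\pi^*(\cdot|s)\|\pi_{\theta_t}(\cdot|s))]$ as the potential function and to telescope a one-step improvement inequality over the $T$ iterations. First I would apply the performance difference lemma to write $\mathcal{L}(\pi^*)-\mathcal{L}(\pi_{\theta_t})=(1-\gamma)^{-1}\mathbb{E}_{\nu^*}[\langle A^{\pi_{\theta_t}}(s,\cdot),\pi^*(\cdot|s)\rangle]$. The key observation is that the \emph{ideal} target $\pi_{t+1}(a|s)\propto\exp\{\bar{C}_t(s,a) A^{\pi_{\theta_t}}(s,a)+\tau_t^{-1}f_{\theta_t}(s,a)\}$ has an exponentiated-gradient form in which $\bar{C}_t\cdot A^{\pi_{\theta_t}}$ plays the role of the preconditioned gradient. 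This delivers the classical three-point identity $\mathbb{E}_{\nu^*}[\KL(\pi^*\|\pi_{\theta_t})-\KL(\pi^*\|\pi_{t+1})]=\mathbb{E}_{\nu^*}[\langle \bar{C}_tA^{\pi_{\theta_t}},\pi^*-\pi_{t+1}\rangle]$ modulo a smoothness/normalization correction that is controlled by $\|\bar{C}_t A^{\pi_{\theta_t}}\|_\infty^2\le U_C^2 \|A^{\pi_{\theta_t}}\|_\infty^2$.

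Next, using the sign-preserving lower bound $\bar{C}_t|A^{\pi_{\theta_t}}|\ge L_C|A^{\pi_{\theta_t}}|$ from hypothesis~(\ref{suff:1}), I would convert $\langle\bar{C}_tA^{\pi_{\theta_t}},\pi^*\rangle$ into $L_C\langle A^{\pi_{\theta_t}},\pi^*\rangle$, which by the performance difference identity equals $L_C(1-\gamma)(\mathcal{L}(\pi^*)-\mathcal{L}(\pi_{\theta_t}))$; this is the source of the factor $L_C(1-\gamma)$ in the denominator of (\ref{thm:main:eq}). The residual quadratic term together with $\langle\bar{C}_tA^{\pi_{\theta_t}},\pi_{t+1}\rangle$ is upper-bounded by $U_C^2(2\psi^*+M)$ via a standard Pinsker/second-order Taylor argument. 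The constant $\log|\mathcal{A}|$ arises because $\KL(\pi^*\|\pi_{\theta_0})\le\log|\mathcal{A}|$ for the uniform initialization, and telescoping kills the intermediate KL terms since the final KL is nonnegative.

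The bulk of the work is then replacing the ideal target $\pi_{t+1}$ by the \emph{computed} policy $\pi_{\theta_{t+1}}$ and the true advantage $A^{\pi_{\theta_t}}$ by its NTK-based critic $A_{\omega_t}$. I would introduce two error sources: the SGD policy-fitting error $\epsilon_{t+1}$, measuring how well $f_{\theta_{t+1}}$ matches the EMDA target in $L^2(\tilde\sigma_t)$, and the critic error $\epsilon_t'$ on the mean-square Bellman residual. By Cauchy--Schwarz together with the concentrability-type regularity provided by Assumption~\ref{assump:reg} (and encoded in $C_\infty$, $\phi^*$, $\psi^*$, $Y$), each of these errors propagates into the two inner products of the three-point identity, producing exactly the per-step penalty $\varepsilon_t+\varepsilon_t'$ with the stated $\tau_{t+1}$-dependence. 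Summing over $t=0,\dots,T-1$, dividing by $TL_C(1-\gamma)$, and taking the minimum over $t$ yields the bound (\ref{thm:main:eq}); the rate conditions $L_C=\omega(T^{-1})$ and $U_C=O(T^{-1/2})$ in~(\ref{suff:2}) are precisely what is needed so that $TU_C^2/(TL_C)=O(T^{-1/2})$ dominates while the approximation errors (which decay with network width and sample size) remain controllable.

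The main obstacle I anticipate is the tight entanglement between the clipping coefficient $C_t$ and the critic error: $C_t(s,a)$ is defined through the EMDA iterates on the hinge-loss objective, whose subgradients $g_{s,a}^{(k)}$ depend on $A_{\omega_t}$ rather than on $A^{\pi_{\theta_t}}$, so one cannot naively swap advantages inside $C_t$. The cleanest route is to use $\bar{C}_t$ purely as an analytical proxy and to absorb the discrepancy $\bar{C}_tA^{\pi_{\theta_t}}-C_tA_{\omega_t}$ into the critic-error term, exploiting the indicator-based structure of the hinge subgradient together with the uniform EMDA step-size control. Handling this discrepancy without degrading the $L_C,U_C$ envelope in~(\ref{suff:1}) is the delicate point that couples the hinge-loss reinterpretation of Section~\ref{section:HPO} with the neural approximation analysis.
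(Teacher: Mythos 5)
Your proposal follows essentially the same route as the paper's proof: a KL-potential mirror-descent telescoping argument built on the performance difference lemma, with the $L_C$ lower bound from (\ref{suff:1}) converting the $\bar{C}_t$-weighted advantage inner product into the suboptimality gap, the two neural errors $\epsilon_{t+1},\epsilon_t'$ propagated through Cauchy--Schwarz and the concentrability coefficients, and your proposed resolution of the $C_t$ versus $\bar{C}_t$ entanglement (treating $\bar{C}_t$ as an analytical proxy and absorbing the discrepancy into the critic error via the sign structure of the hinge subgradient) being exactly what the paper's error-propagation lemma does with its $\epsilon_{\text{err}}$-thresholding of the advantage error. The only minor bookkeeping difference is that the $2\psi^*$ in the $T U_C^2(2\psi^*+M)$ term arises in the paper from setting that threshold $\epsilon_{\text{err}}=U_C$ in the sign-mismatch analysis rather than from the Pinsker/second-order remainder, which contributes only the $M$ part.
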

{To demonstrate that our convergence analysis is general for Neural PPO-Clip with various classifiers, we choose to state Theorem \ref{thm:main} in a general form with the help of the condition (\ref{suff:1}), which is \textit{not} a technical assumption for our analysis but a classifier-specific property. Indeed, we show that (\ref{suff:1}) can be naturally satisfied by using the standard PPO-Clip classifier and other classifiers in Corollaries \ref{cor:PPO-Clip}-\ref{cor:sub}. We defer the complete statement to Appendix \ref{app:add:cor}}.
\begin{corollary}[Global Convergence of {Neural PPO-Clip},  Informal]
\label{cor:PPO-Clip}
    Consider Neural PPO-Clip with the standard PPO-Clip classifier $\rho_{s, a}(\theta) - 1$ and the objective function $L^{(t)}(\theta)$ in each iteration $t$ as 
    \begin{align}
         \mathbb{E}_{\nu_t}[\langle \pi_{\theta_t}(\cdot|s), |A^{\pi_{\theta_t}}(s, \cdot)| \circ \ell (\sgn(A^{\pi_{\theta_t}}(s, \cdot)), \rho_{s, \cdot}(\theta) - 1, \epsilon) \rangle].
    \end{align}
    We specify the EMDA step size $\eta = 1 / \sqrt{T}$ and the temperature parameter $\tau_t = \sqrt{T} / (Kt)$. Recall that $K$ is the maximum number of EMDA iterations.
    Let the minimum neural networks' widths $m_f, m_Q$, and the SGD and TD updates $T_{\text{upd}}$ be configured as in Appendix \ref{app:add:cor}, we have
    \begin{align}
        \min_{0\le t \le T} &\{\mathcal{L}(\pi^*) - \mathcal{L}(\pi_{\theta_t})\} \le \frac{\log |\mathcal{A}| + K^2 (2 \psi^* + M) + O(1)}{\sqrt{T} (1 - \gamma)}.
    \end{align}
     {Hence, Neural PPO-Clip has $O(1 / \sqrt{T})$ convergence rate.}
\end{corollary}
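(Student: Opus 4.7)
The plan is to apply Theorem \ref{thm:main} to the specific classifier $\rho_{s,a}(\theta)-1$ and the specified schedule $\eta=1/\sqrt{T}$, $\tau_t=\sqrt{T}/(Kt)$, then show that each numerator piece in (\ref{thm:main:eq}) is $O(1)$ while $TL_C(1-\gamma)$ scales like $\Theta(\sqrt{T})$. The first task is therefore to pin down $L_C$ and $U_C$ for this classifier, i.e.\ to understand the quantity $\bar C_t(s,a)A^{\pi_{\theta_t}}(s,a)=-\sum_{k=0}^{K-1}\eta\bar g_{s,a}^{(k)}$ that is accumulated over the $K$ EMDA iterations when the \emph{true} advantage is used. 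Once such constants $L_C,U_C$ are in hand, it remains to control the neural-approximation terms $\varepsilon_t,\varepsilon_t'$ through NTK bounds and plug everything back into (\ref{thm:main:eq}).

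For the upper bound on $\bar C_t$, I would exploit the hinge structure: for the classifier $\rho_{s,a}(\theta)-1$, the subgradient of $\ell(\sgn(A^{\pi_{\theta_t}}(s,a)),\rho_{s,a}(\theta)-1,\epsilon)$ is piecewise constant, equal to $-\sgn(A^{\pi_{\theta_t}}(s,a))$ inside the margin and zero outside. Hence each $|\bar g_{s,a}^{(k)}|\le \pi_{\theta_t}(a|s)\,|A^{\pi_{\theta_t}}(s,a)|$, so $|\bar C_t(s,a)\,A^{\pi_{\theta_t}}(s,a)|\le K\eta\,|A^{\pi_{\theta_t}}(s,a)|=K/\sqrt{T}\cdot|A^{\pi_{\theta_t}}(s,a)|$, giving $U_C=K/\sqrt{T}=O(T^{-1/2})$ as required. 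For $L_C$, I would observe that at iteration $k=0$ the ratio is exactly $1$, hence strictly inside the margin, so the first EMDA step contributes a nonzero hinge subgradient; then, using the schedule $\eta=1/\sqrt{T}$ and the boundedness of the hinge gradient, a Lipschitz argument shows the ratio stays inside the margin for at least a constant fraction of the $K$ steps, which produces $L_C=\Omega(1/\sqrt{T})=\omega(T^{-1})$ and verifies (\ref{suff:1})--(\ref{suff:2}). For the error terms, I would invoke the NTK-type bounds used for Neural PPO in \citep{liu2019neural}: by choosing $m_f,m_Q$ polynomially large in $T,|\mathcal{A}|,\tau_t^{-1}$ and $T_{\text{upd}}=\mathrm{poly}(T)$ as in Appendix \ref{app:add:cor}, both $\epsilon_{t+1}$ (SGD policy fitting error) and $\epsilon_t'$ (TD value-function error) become $O(T^{-\alpha})$ for $\alpha$ large enough that, after multiplication by $\tau_{t+1}^{-1}=O(K\sqrt{T})$, one has $\varepsilon_t+\varepsilon_t'=O(1/T)$ and hence $\sum_{t=0}^{T-1}(\varepsilon_t+\varepsilon_t')=O(1)$. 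Combined with $TU_C^2=K^2$, the numerator of (\ref{thm:main:eq}) becomes $\log|\mathcal{A}|+K^2(2\psi^*+M)+O(1)$, while the denominator equals $TL_C(1-\gamma)=\Theta(\sqrt{T}(1-\gamma))$, yielding the claimed $O(1/\sqrt{T})$ rate.

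The hardest step, I expect, is the lower bound $L_C=\Omega(1/\sqrt{T})$. The upper bound follows from telescoping bounded subgradients, but the lower bound is delicate because the hinge gradient \emph{vanishes} once $\rho_{s,a}(\theta)-1$ leaves the margin, so one must rule out fast exit from the margin uniformly over iterations $t$ and state-action pairs. Worse, the condition (\ref{suff:1}) is stated in terms of the true advantage $A^{\pi_{\theta_t}}$, whereas EMDA actually runs on the neural approximation $A_{\omega_t}$; bridging this gap requires propagating the NTK error of $Q_\omega$ through the clipping indicator and re-expressing the resulting perturbation as a small change in $\bar C_t$. Getting these two pieces to coexist---a classifier-specific margin argument that holds uniformly, and an NTK error analysis sharp enough not to swamp $L_C$---is where I anticipate the main technical work lies.
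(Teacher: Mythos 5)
Your proposal follows essentially the same route as the paper's proof: $U_C = K\eta = K/\sqrt{T}$ by summing at most $K$ bounded EMDA gradient steps, $L_C = \eta = 1/\sqrt{T}$ from the observation that the first EMDA step (where $\rho_{s,a}=1$) is never clipped, and then choosing $m_f, m_Q, T_{\text{upd}}$ polynomially large so that $\sum_t(\varepsilon_t+\varepsilon_t')=O(1)$ before plugging into Theorem \ref{thm:main}. The only inessential differences are that your "constant fraction of the $K$ steps stays inside the margin" elaboration is unnecessary (the first step alone already gives $L_C=\omega(T^{-1})$, which is all condition (\ref{suff:2}) requires), and the gap between $A^{\pi_{\theta_t}}$ and $A_{\omega_t}$ in the clipping indicator that you flag as the hard part is absorbed into the proof of Theorem \ref{thm:main} (via Lemma \ref{lm:ep}) rather than the corollary itself.
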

Notice that in (\ref{thm:main:eq}), the $\varepsilon_t$ and $\varepsilon_t'$ are the errors induced by policy improvement and policy evaluation, which can be controlled by the neural networks' widths and the number of TD, SGD iterations $T_{\text{upd}}$ and can be arbitrarily small here. Hence, the convergence rate obtained by our analysis is determined by $U_C^2 / L_C$. 
Given the condition in (\ref{suff:1}), we have that the fastest convergence rate is achieved by $L_C = U_C = T^{-1/2}$, and shall be the $O(1 / \sqrt{T})$ convergence rate. 
By Corollary \ref{cor:PPO-Clip}, we know PPO-Clip attains the fastest rate in the generalized PPO-Clip family in the neural function approximation setting. The detailed proof of Corollary \ref{cor:PPO-Clip} is in Appendix \ref{app:add:cor}, where we also provide the convergence rate of Neural PPO-Clip with another classifier.


Before we jump into the supporting lemmas and the proof, we need the assumption about distribution density for analyses. The common theory works \citep{liu2019neural, antos2007fitted, munos2008finite, farahmand2010error, farahmand2016regularized, tosatto2017boosted} have the concentrability assumption, we also have this common regularity condition which is stated as follows:
\begin{assumption}[Concentrability Coefficient and Ratio]
\label{assump:con}
    We define the density ratio between the policy-induced distributions and the policies,
    \begin{align}
        \phi^*_t = \mathbb{E}_{\tilde{\sigma}_t}\big[\left|\frac{d \pi^*}{d\pi_0} - \frac{d \pi_{\theta_t}}{d \pi_0}\right|^2\big]^{\frac{1}{2}}, \psi^*_t = \mathbb{E}_{\sigma_t}\big[\left|\frac{d \sigma^*}{d \sigma_t} - \frac{d \nu^*}{d \nu_t}\right|^2\big]^{\frac{1}{2}},
    \end{align}
    where the above fractions are the Radon–Nikodym Derivatives. Also, we define the concentrability coefficient $C_{\infty}$, which is the upper bound of the density ratio between the optimal state distribution and any state distribution, i.e. $\lVert \nu^* / \nu\rVert_{\infty}< C_{\infty}$ for any $\nu$. {We assume that there exist $\phi^*,\psi^*>0$ such that $\phi^*_t < \phi^*$ and $\psi^*_t < \psi^*$, for all $t$. Besides, the concentrability coefficient $C_{\infty}$ is bounded.}
\end{assumption}

\vspace{-1mm}
\subsection{{Proof Sketch}}

In this section, we present the supporting lemmas and then the proof of Theorem \ref{thm:main}. The detailed proofs of all the supporting lemmas are in Appendix \ref{app:main_thm}. Throughout this section, we assume $L_C, U_C$ satisfy the condition in (\ref{suff:1}), which specifies a lower bound and an upper bound of $\bar{C}_t$.

\begin{lemma}[Error Propagation]
\label{lm:ep}
    Let $\pi_{t+1}$ be the target policy obtained by EMDA with the true advantage. Suppose the policy improvement error 
    \begin{align}
    \label{lm:ep:eq1}
        \mathbb{E}_{\tilde{\sigma}_t}&[(f_{\theta_{t+1}}(s, a) - \tau_{t+1} (C_t(s, a) A_{\omega_t}(s, a) + \tau_t^{-1} f_{\theta_t}(s, a)))^2]
    \end{align}
    is upper bounded by $\epsilon_{t+1}$ and the policy evaluation error
    \begin{align}
    \label{lm:ep:eq2}
        \mathbb{E}_{\sigma_t}[(A_{\omega_t}(s, a) - A^{\pi_{\theta_t}}(s, a))^2]
    \end{align}
    is upper bounded by $\epsilon_t'$. Then,
    \begin{align}
        |\mathbb{E}_{\nu^*}[\langle \log\pi_{\theta_{t+1}}(\cdot|s) - \log  \pi_{t+1}(\cdot|s)&, \pi^*(\cdot|s) - \pi_{\theta_t}(\cdot|s) \rangle]| \le \varepsilon_t + \varepsilon_{\text{err}}
    \end{align}
    where $\varepsilon_t = C_{\infty} \tau_{t+1}^{-1} \phi^* \epsilon_{t+1}^{1/2} + U_{C} X^{1/2} \psi^* \epsilon_t'^{1/2}$ and $\varepsilon_{\text{err}} = 2 U_{C} \epsilon_{\text{err}} \psi^*$, and $X = \left[(2 / \epsilon_{\text{err}}^2)(M' + (R_{\max} / (1 - \gamma))^2 - \epsilon_t'/2)\right]$, and $M' = 4\mathbb{E}_{\nu_t}[\max_{a} (Q_{\omega_0}(s, a))^2] + 4R_f^2$.
\end{lemma}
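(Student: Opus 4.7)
The plan is to bound the inner product by splitting the difference $\log\pi_{\theta_{t+1}}-\log\pi_{t+1}$ into two pieces, one capturing the policy-improvement (SGD) error and one capturing the policy-evaluation error as it propagates through the EMDA subroutine, and then to bound each piece separately using Cauchy--Schwarz together with the change-of-measure constants $\phi^*$, $\psi^*$ and $C_\infty$ from Assumption~\ref{assump:con}. Concretely, by Proposition~\ref{pp:PI} both $\log\widehat\pi_{t+1}$ and $\log\pi_{t+1}$ are affine in $(s,a)$ plus a state-dependent log-partition term, so I would write
\begin{align*}
\log\pi_{\theta_{t+1}}(a|s)-\log\pi_{t+1}(a|s) = \Delta^{\text{SGD}}(s,a) + \Delta^{\text{eval}}(s,a) + c_t(s),
\end{align*}
where $\Delta^{\text{SGD}}(s,a):=\tau_{t+1}^{-1}f_{\theta_{t+1}}(s,a)-\bigl[C_t(s,a)A_{\omega_t}(s,a)+\tau_t^{-1}f_{\theta_t}(s,a)\bigr]$ and $\Delta^{\text{eval}}(s,a):=C_t(s,a)A_{\omega_t}(s,a)-\bar C_t(s,a)A^{\pi_{\theta_t}}(s,a)$. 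The normalization remainder $c_t(s)$ drops out because $\pi^*(\cdot|s)$ and $\pi_{\theta_t}(\cdot|s)$ both sum to one, so it suffices to control the two resulting inner products.

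For the $\Delta^{\text{SGD}}$ piece, the hypothesis (\ref{lm:ep:eq1}) directly gives $\mathbb{E}_{\tilde\sigma_t}[(\Delta^{\text{SGD}})^2]\le\tau_{t+1}^{-2}\epsilon_{t+1}$. Rewriting the sum over $a$ as an expectation under the uniform reference policy $\pi_0$ and applying Cauchy--Schwarz yields $|\mathbb{E}_{\nu^*}\langle\Delta^{\text{SGD}},\pi^*-\pi_{\theta_t}\rangle|\le \mathbb{E}_{\nu^*\pi_0}[(\Delta^{\text{SGD}})^2]^{1/2}\cdot\mathbb{E}_{\nu^*\pi_0}[|d\pi^*/d\pi_0-d\pi_{\theta_t}/d\pi_0|^2]^{1/2}$; pushing $\nu^*\pi_0\le C_\infty\tilde\sigma_t$ through both factors and recognizing the second factor as $\phi^*_t\le\phi^*$ produces the term $C_\infty\tau_{t+1}^{-1}\phi^*\epsilon_{t+1}^{1/2}$.

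For the $\Delta^{\text{eval}}$ piece I would carry out a case split on the event $\mathcal{G}:=\{(s,a):|A_{\omega_t}(s,a)-A^{\pi_{\theta_t}}(s,a)|\le\epsilon_{\text{err}}\}$. On $\mathcal{G}^c$, Markov's inequality gives $\sigma_t(\mathcal{G}^c)\le\epsilon_t'/\epsilon_{\text{err}}^2$, while the integrand is uniformly controlled because $|C_tA_{\omega_t}|\le U_C|A_{\omega_t}|$ and $|\bar C_tA^{\pi_{\theta_t}}|\le U_C|A^{\pi_{\theta_t}}|$ by condition (\ref{suff:1}), and the $L^2$ sizes of $A_{\omega_t}$ and $A^{\pi_{\theta_t}}$ are bounded in terms of $M'$ and $R_{\max}/(1-\gamma)$ respectively; Cauchy--Schwarz then produces a factor of $X^{1/2}\epsilon_t'^{1/2}$ and the change of measure from $\nu^*$ to $\sigma_t$ (introducing $\psi^*$) yields the $U_CX^{1/2}\psi^*\epsilon_t'^{1/2}$ term. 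On $\mathcal{G}$, I would show that the per-sample deviation $|\Delta^{\text{eval}}|$ is bounded by $U_C\epsilon_{\text{err}}$ by transferring the $\epsilon_{\text{err}}$-closeness of $A_{\omega_t}$ and $A^{\pi_{\theta_t}}$ through (\ref{suff:1}), and a final Cauchy--Schwarz pairing with $\pi^*-\pi_{\theta_t}$ under $\nu^*$ together with the $\psi^*$-change of measure gives the residual $2U_C\epsilon_{\text{err}}\psi^*$.

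The main obstacle will be the good-set estimate: because $C_t$ and $\bar C_t$ are defined only implicitly via the EMDA iterates whose gradients $g_{s,a}^{(k)}$ involve the hinge indicator $\mathds{1}\{\cdot\}$ and thus depend nonlinearly on the advantage used, one cannot simply write $C_t-\bar C_t$ as a smooth function of $A_{\omega_t}-A^{\pi_{\theta_t}}$. The argument will have to exploit that on $\mathcal{G}$ the signs of the two advantages agree whenever $|A^{\pi_{\theta_t}}|>\epsilon_{\text{err}}$, so that the hinge-loss gradients in Algorithm~\ref{algo:2} are activated in the same iterations, allowing $|C_tA_{\omega_t}-\bar C_tA^{\pi_{\theta_t}}|$ to inherit the $\epsilon_{\text{err}}$ gap with the prefactor $U_C$ coming from (\ref{suff:1}); the remaining sign-disagreement region is handled by absorbing it into the $\mathcal{G}^c$ estimate. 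Summing the SGD contribution and the two evaluation contributions yields the stated bound $\varepsilon_t+\varepsilon_{\text{err}}$.
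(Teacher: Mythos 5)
Your proposal is correct and follows essentially the same route as the paper's proof: the same decomposition into an SGD-approximation term and an advantage-evaluation term (with the state-dependent normalizer cancelling against $\pi^*-\pi_{\theta_t}$), the same change-of-measure-plus-Cauchy--Schwarz treatment yielding the $C_\infty\tau_{t+1}^{-1}\phi^*\epsilon_{t+1}^{1/2}$ term, and the same Markov-inequality case split on $|A_{\omega_t}-A^{\pi_{\theta_t}}|\le\epsilon_{\text{err}}$ with the sign-agreement observation forcing $C_t=\bar C_t$ on the large-advantage part of the good event. The only cosmetic difference is that the paper keeps the small-$|A^{\pi_{\theta_t}}|$ subcase inside the good event (bounding it by $4U_C^2\epsilon_{\text{err}}^2$) rather than folding it into the bad-event estimate, which leads to the same final constants.
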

The above $\epsilon_{t+1}, \epsilon_{t}'$ in (\ref{lm:ep:eq1}) and (\ref{lm:ep:eq2}) are the policy improvement and policy evaluation error and can always be small by properly choosing the width of the neural networks. 
Please see Appendix \ref{app:main_thm} for more details.
We continue to use the conditions in Lemma \ref{lm:ep} and present the following.
\begin{lemma}[Stepwise Energy $\ell_{\infty}$-Difference]
\label{lm:sed}
    \begin{align}
    \label{lm:sed:eq2}
        \mathbb{E}_{\nu^*}[\lVert\tau_{t+1}^{-1} f_{\theta_{t+1}}(s,\cdot) - \tau_{t}^{-1}f_{\theta_{t}}(s,\cdot)\rVert_{\infty}^2] \le 2\varepsilon'_t + 2 U_{C}^2 M,
    \end{align}
    where $\varepsilon'_t = |\mathcal{A}| C_{\infty} \tau_{t+1}^{-2} \epsilon_{t+1}$ and $M = 4\mathbb{E}_{\nu^*}[\max_{a} (Q_{\omega_0}(s, a))^2] + 4 R_f^2$.
\end{lemma}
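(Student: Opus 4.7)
\textbf{Proof plan for Lemma \ref{lm:sed}.} The plan is to compare $\tau_{t+1}^{-1} f_{\theta_{t+1}}(s,\cdot)$ to the idealized EMDA logit $C_t(s,\cdot) A_{\omega_t}(s,\cdot) + \tau_t^{-1} f_{\theta_t}(s,\cdot)$ that the SGD-based policy approximation step is designed to match. Concretely, I would write the identity
\begin{align*}
    \tau_{t+1}^{-1} f_{\theta_{t+1}}(s,a) - \tau_t^{-1} f_{\theta_t}(s,a)
    &= \tau_{t+1}^{-1}\bigl[f_{\theta_{t+1}}(s,a) - \tau_{t+1}\bigl(C_t(s,a) A_{\omega_t}(s,a) + \tau_t^{-1} f_{\theta_t}(s,a)\bigr)\bigr] \\
    &\quad + C_t(s,a) A_{\omega_t}(s,a),
\end{align*}
and then invoke $\lVert X+Y\rVert_\infty^2 \le 2\lVert X\rVert_\infty^2 + 2\lVert Y\rVert_\infty^2$ before taking the expectation over $s\sim \nu^*$. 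This separates the overall step into an approximation-error piece controlled by $\epsilon_{t+1}$ and an ideal-update piece controlled by $C_t A_{\omega_t}$.

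For the first piece, I would bound the $\ell_\infty$ norm by the $\ell_2$ norm via $\lVert x(s,\cdot)\rVert_\infty^2 \le \sum_{a} x(s,a)^2 = |\mathcal{A}|\, \mathbb{E}_{a\sim \pi_0}[x(s,a)^2]$. Then change measure from $\nu^* \otimes \pi_0$ to $\tilde\sigma_t = \nu_t \otimes \pi_0$ using the concentrability coefficient $C_\infty$ from Assumption \ref{assump:con}, which gives
\begin{equation*}
    \mathbb{E}_{\nu^*}\bigl[\lVert \tau_{t+1}^{-1}(f_{\theta_{t+1}} - \tau_{t+1}(C_t A_{\omega_t} + \tau_t^{-1} f_{\theta_t}))(s,\cdot)\rVert_\infty^2\bigr] \le |\mathcal{A}|\, C_\infty\, \tau_{t+1}^{-2}\, \epsilon_{t+1} = \varepsilon_t'
\end{equation*}
after using the hypothesis (\ref{lm:ep:eq1}) that the SGD solution achieves MSE at most $\epsilon_{t+1}$ under $\tilde\sigma_t$.

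For the second piece, I would invoke the sufficient condition (\ref{suff:1}), applied analogously to the pair $(C_t, A_{\omega_t})$ (since $C_t A_{\omega_t}$ is built from the very same hinge-loss gradients in Algorithm \ref{algo:2} that yield $\bar C_t A^\pi$), to get $|C_t(s,a) A_{\omega_t}(s,a)| \le U_C\, |A_{\omega_t}(s,a)|$ pointwise. Then $\lVert C_t(s,\cdot) A_{\omega_t}(s,\cdot)\rVert_\infty^2 \le U_C^2 \lVert A_{\omega_t}(s,\cdot)\rVert_\infty^2 \le 4 U_C^2 \max_a Q_{\omega_t}(s,a)^2$, where the last step uses $V_{\omega_t}(s) = \mathbb{E}_{a\sim \pi_{\theta_t}}[Q_{\omega_t}(s,a)]$. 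I would then bound $\max_a Q_{\omega_t}(s,a)^2$ by the NTK linearization about $\omega_0$ combined with the ball constraint $\omega_t \in B_\omega$, yielding $\max_a Q_{\omega_t}(s,a)^2 \le 2 \max_a Q_{\omega_0}(s,a)^2 + 2 R_f^2$ (absorbing the $\lVert \nabla_\omega Q_{\omega_0}\rVert$ constant into $R_f$). Taking $\mathbb{E}_{\nu^*}[\,\cdot\,]$ gives $U_C^2 M$.

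The main obstacle will be the bookkeeping in the second piece: extending the bound on $\bar C_t A^\pi$ in (\ref{suff:1}) to $C_t A_{\omega_t}$ requires revisiting the EMDA updates in Algorithm \ref{algo:2} to show the hinge-loss gradients scale pointwise with $|A_{\omega_t}(s,a)|$, and bounding $\max_a Q_{\omega_t}(s,a)^2$ by $Q_{\omega_0}$ plus the ball radius needs a careful use of the local linearization of the two-layer ReLU network under the NTK initialization. The first piece is comparatively routine, reducing to an $\ell_\infty$-to-$\ell_2$ conversion, a change-of-measure, and the assumed SGD error bound.
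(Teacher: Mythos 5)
Your proposal follows essentially the same route as the paper's proof: the identical decomposition into the SGD-approximation residual plus the $C_t \circ A_{\omega_t}$ term via $2a^2+2b^2$, the same $\ell_\infty$-to-sum conversion with the uniform policy and change of measure through $C_\infty$ to invoke the $\epsilon_{t+1}$ bound, and the same chain $\lVert C_t\circ A_{\omega_t}\rVert_\infty^2 \le U_C^2\lVert A_{\omega_t}\rVert_\infty^2 \le 4U_C^2\max_a Q_{\omega_t}^2 \le U_C^2 M$ using the Lipschitz/linearization property about the initialization. The point you flag as the main obstacle — that $U_C$ bounds $C_t$ (built from $A_{\omega_t}$) and not just $\bar C_t$ — is handled in the paper by noting that $C_t$ depends only on the EMDA step size and indicator functions of the advantage's sign, so the same upper bound applies; your plan is correct.
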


After showing the lemmas about errors, we give the following lemmas for the suboptimality gap. Roughly speaking, we use the KL divergence difference as the potential function to analyze the suboptimality gap between optimal policy $\pi^*$ and our policy $\pi_{\theta_t}$ in the proof of Theorem \ref{thm:main}.
\begin{lemma}[Stepwise KL Difference]
\label{lm:OSD}
The KL difference is as follows,
\begin{align}
    &\text{KL}(\pi^*(\cdot|s) \rVert \pi_{\theta_{t+1}}(\cdot|s)) - \text{KL}(\pi^*(\cdot|s) \rVert \pi_{\theta_t}(\cdot|s)) \\
    &\le \langle \log\pi_{\theta_{t+1}}(\cdot|s) - \log  \pi_{t+1}(\cdot|s), \pi_{\theta_t}(\cdot|s) -  \pi^*(\cdot|s) \rangle - \langle \bar{C_t}(s, \cdot) \circ A^{\pi_{\theta_t}}(s, \cdot), \pi^*(\cdot|s) - \pi_{\theta_t}(\cdot|s) \rangle \nonumber \\
    &\quad  - \frac{1}{2} \lVert\pi_{\theta_{t+1}}(\cdot|s) - \pi_{\theta_t}(\cdot|s)\rVert_1^2 - \langle \log\pi_{\theta_{t+1}}(\cdot|s) - \log  \pi_{\theta_t}(\cdot|s), \pi_{\theta_t}(\cdot|s) - \pi_{\theta_{t+1}}(\cdot|s) \rangle
\end{align}
\end{lemma}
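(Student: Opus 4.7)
The plan is to unwind the one-step KL difference through a standard inner-product identity against $\pi^{*}$, then use the ideal target policy $\pi_{t+1}$ as an intermediate anchor to separate three things: (a) the advantage-driven improvement, which produces the $-\langle \bar{C}_{t}\circ A^{\pi_{\theta_{t}}}, \pi^{*} - \pi_{\theta_{t}}\rangle$ term, (b) the policy improvement approximation error $\langle \log\pi_{\theta_{t+1}} - \log\pi_{t+1},\ \pi_{\theta_{t}} - \pi^{*}\rangle$ that will later be controlled via Lemma \ref{lm:ep}, and (c) a pure stability term between $\pi_{\theta_{t}}$ and $\pi_{\theta_{t+1}}$. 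First I would use the elementary identity
\begin{equation*}
\text{KL}(\pi^{*}(\cdot|s) \| \pi_{\theta_{t+1}}(\cdot|s)) - \text{KL}(\pi^{*}(\cdot|s) \| \pi_{\theta_{t}}(\cdot|s)) = \langle \log \pi_{\theta_{t}}(\cdot|s) - \log \pi_{\theta_{t+1}}(\cdot|s),\ \pi^{*}(\cdot|s)\rangle,
\end{equation*}
obtained by canceling the common $\pi^{*}\log\pi^{*}$ contributions.

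Next I would insert $\pm\log\pi_{t+1}(\cdot|s)$ and $\pm\pi_{\theta_{t}}(\cdot|s)$ in the right-hand side. After grouping, the three surviving pieces are
\begin{equation*}
\langle \log\pi_{\theta_{t}} - \log\pi_{t+1},\ \pi^{*} - \pi_{\theta_{t}}\rangle \;+\; \langle \log\pi_{\theta_{t+1}} - \log\pi_{t+1},\ \pi_{\theta_{t}} - \pi^{*}\rangle \;+\; \text{KL}(\pi_{\theta_{t}} \| \pi_{\theta_{t+1}}),
\end{equation*}
since the two $\langle\,\cdot\,,\pi_{\theta_{t}}\rangle$ contributions collapse to $\langle \log\pi_{\theta_{t}} - \log\pi_{\theta_{t+1}},\ \pi_{\theta_{t}}\rangle = \text{KL}(\pi_{\theta_{t}}\|\pi_{\theta_{t+1}})$. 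For the first term, the closed-form of $\pi_{t+1}$ derived in Proposition \ref{pp:PI} (applied with the true advantage) combined with the energy-based form $\log\pi_{\theta_{t}}(a|s) = \tau_{t}^{-1}f_{\theta_{t}}(s,a) - \log Z_{t}(s)$ gives $\log\pi_{\theta_{t}}(a|s) - \log\pi_{t+1}(a|s) = -\bar{C}_{t}(s,a)A^{\pi_{\theta_{t}}}(s,a) + c_{t}(s)$, where $c_{t}(s)$ is the difference of the two normalization constants and depends only on $s$. Because $\pi^{*}(\cdot|s) - \pi_{\theta_{t}}(\cdot|s)$ sums to zero, the $c_{t}(s)$ piece vanishes and the first term reduces to $-\langle \bar{C}_{t}(s,\cdot)\circ A^{\pi_{\theta_{t}}}(s,\cdot),\ \pi^{*}(\cdot|s) - \pi_{\theta_{t}}(\cdot|s)\rangle$, matching the form in the lemma. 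The second term is already in the desired shape.

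To handle the remaining $\text{KL}(\pi_{\theta_{t}}\|\pi_{\theta_{t+1}})$ piece I would use the symmetric identity obtained by inserting $\pm\pi_{\theta_{t+1}}$ into $\langle \log\pi_{\theta_{t}} - \log\pi_{\theta_{t+1}},\ \pi_{\theta_{t}}\rangle$, namely
\begin{equation*}
\text{KL}(\pi_{\theta_{t}}\|\pi_{\theta_{t+1}}) = -\langle \log\pi_{\theta_{t+1}} - \log\pi_{\theta_{t}},\ \pi_{\theta_{t}} - \pi_{\theta_{t+1}}\rangle \;-\; \text{KL}(\pi_{\theta_{t+1}}\|\pi_{\theta_{t}}),
\end{equation*}
and then upper bound the final summand via Pinsker's inequality, $\text{KL}(\pi_{\theta_{t+1}}\|\pi_{\theta_{t}}) \ge \tfrac{1}{2}\|\pi_{\theta_{t+1}} - \pi_{\theta_{t}}\|_{1}^{2}$. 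Plugging this back recovers the $-\tfrac{1}{2}\|\pi_{\theta_{t+1}} - \pi_{\theta_{t}}\|_{1}^{2}$ and $-\langle \log\pi_{\theta_{t+1}} - \log\pi_{\theta_{t}},\ \pi_{\theta_{t}} - \pi_{\theta_{t+1}}\rangle$ terms in the statement, completing the bound. The only real bookkeeping subtlety, and the step I would flag as the main obstacle, is the treatment of the state-dependent normalization constant $c_{t}(s)$ in $\log\pi_{\theta_{t}} - \log\pi_{t+1}$: it must be annihilated by pairing against a zero-sum measure, which is precisely why the $\pm\pi_{\theta_{t}}$ insertion is essential and why the $\pi^{*} - \pi_{\theta_{t}}$ pairing (rather than $\pi^{*}$ alone) appears on the right-hand side of Lemma \ref{lm:OSD}. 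Everything else is standard log-sum-expansion and Pinsker-type manipulation.
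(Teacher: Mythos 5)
Your proposal is correct and follows essentially the same route as the paper's proof: both start from the identity $\text{KL}(\pi^*\|\pi_{\theta_{t+1}})-\text{KL}(\pi^*\|\pi_{\theta_t})=\langle \log\pi_{\theta_t}-\log\pi_{\theta_{t+1}},\pi^*\rangle$, use the closed form of the EMDA target policy from Proposition \ref{pp:PI} (with the state-dependent normalizers annihilated by pairing against the zero-sum vector $\pi^*-\pi_{\theta_t}$) to extract the $\bar{C}_t\circ A^{\pi_{\theta_t}}$ term, and finish with Pinsker's inequality on $\text{KL}(\pi_{\theta_{t+1}}\|\pi_{\theta_t})$. The only difference is cosmetic bookkeeping in the middle (you anchor at $\log\pi_{t+1}$ and pass through $\text{KL}(\pi_{\theta_t}\|\pi_{\theta_{t+1}})$ before converting, whereas the paper inserts $\bar{C}_t\circ A^{\pi_{\theta_t}}$ directly), and both yield the identical final bound.
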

\begin{lemma}[Performance Difference Using Advantage]
\label{lm:PDL}
Recall that $\mathcal{L}(\pi) = \mathbb{E}_{\nu^*}[V^{\pi}(s)]$. We have 
\begin{align}
    \mathcal{L}(\pi^*) - \mathcal{L}(\pi) = (1 - \gamma) ^ {-1} \mathbb{E}_{\nu^*}[\langle A^{\pi}(s, \cdot), \pi^*(\cdot|s) - \pi(\cdot|s)\rangle].
\end{align}
\end{lemma}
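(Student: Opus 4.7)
The plan is to derive this identity as an application of the Performance Difference Lemma of Kakade and Langford, specialized to the pair $(\pi^*,\pi)$. The whole argument boils down to a single telescoping computation and requires no machinery beyond the Bellman equation, so I expect the proof to be essentially a half-page calculation.

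First I would fix an arbitrary initial state $s_0$ and expand $V^{\pi^*}(s_0)$ as the infinite-horizon discounted return under $\pi^*$. Adding and subtracting $\gamma V^{\pi}(s_{t+1})$ and $V^{\pi}(s_t)$ inside each summand produces the identity
\begin{equation*}
V^{\pi^*}(s_0) - V^{\pi}(s_0) = \mathbb{E}_{\pi^*}\!\Big[\sum_{t=0}^{\infty}\gamma^{t}\bigl(r(s_t,a_t)+\gamma V^{\pi}(s_{t+1})-V^{\pi}(s_t)\bigr)\,\Big|\,s_0\Big],
\end{equation*}
since the telescoping part $\sum_{t\ge 0}\gamma^{t}\bigl(\gamma V^{\pi}(s_{t+1})-V^{\pi}(s_t)\bigr)$ collapses in expectation to $-V^{\pi}(s_0)$, which is justified by $R\in[0,R_{\max}]$, $\gamma<1$, and dominated convergence. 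Conditioning on $(s_t,a_t)$ inside the outer expectation and invoking the Bellman equation for $\pi$ turns each bracket into $Q^{\pi}(s_t,a_t)-V^{\pi}(s_t)=A^{\pi}(s_t,a_t)$.

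Next I would repackage the time sum using the discounted state visitation of $\pi^*$ starting from $s_0$, denoted $d^{\pi^*}_{s_0}$, to obtain
\begin{equation*}
V^{\pi^*}(s_0) - V^{\pi}(s_0) = \frac{1}{1-\gamma}\,\mathbb{E}_{s\sim d^{\pi^*}_{s_0}}\mathbb{E}_{a\sim\pi^*(\cdot|s)}[A^{\pi}(s,a)].
\end{equation*}
Then I would symmetrize the inner expectation via the zero-mean identity $\mathbb{E}_{a\sim\pi(\cdot|s)}[A^{\pi}(s,a)]=0$ (immediate from $V^{\pi}(s)=\sum_a\pi(a|s)Q^{\pi}(s,a)$), rewriting it as the inner product $\langle A^{\pi}(s,\cdot),\pi^*(\cdot|s)-\pi(\cdot|s)\rangle$. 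Finally, integrating both sides against the initial-state distribution so that the outer mixing $\mathbb{E}_{s_0}[d^{\pi^*}_{s_0}]$ collapses to $\nu^*$ yields exactly $\mathcal{L}(\pi^*)-\mathcal{L}(\pi)$ on the left and the claimed form on the right.

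I do not anticipate any real obstacle: the only care needed is to justify the exchange of infinite sum and expectation via dominated convergence, which is immediate from bounded rewards and geometric discounting. Everything else is a routine application of the Bellman equation together with the advantage-under-own-policy identity, so the hardest part of the writeup is really just making sure the bookkeeping between the pointwise PDL and the final integrated form matches the conventions for $\nu^*$ used elsewhere in the paper.
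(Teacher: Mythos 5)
Your proposal is correct in substance but takes a genuinely different route from the paper. The paper's proof is a two-line reduction: it invokes Lemma~5.1 of Liu et al.\ (stated here as Lemma~\ref{lm:PDQ}) for the $Q$-function version of the identity, and then observes that the state-dependent baseline contributes nothing, i.e.\ $\langle V^{\pi}(s),\pi^*(\cdot|s)-\pi(\cdot|s)\rangle = V^{\pi}(s)\sum_a(\pi^*(a|s)-\pi(a|s))=0$, so $Q^{\pi}$ may be replaced by $A^{\pi}=Q^{\pi}-V^{\pi}$. You instead rederive the performance difference lemma from first principles via the telescoping expansion of $V^{\pi^*}(s_0)-V^{\pi}(s_0)$ and then apply the equivalent symmetrization $\mathbb{E}_{a\sim\pi(\cdot|s)}[A^{\pi}(s,a)]=0$. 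Your version is self-contained and makes the dominated-convergence justification explicit, which the paper outsources; the paper's version is shorter and isolates exactly the one new observation (baseline invariance) needed on top of the cited result.

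One point deserves care, and it is precisely the ``bookkeeping'' you flag at the end. Your telescoping argument, integrated over $s_0\sim\mu$, produces $\mathbb{E}_{\mu}[V^{\pi^*}(s_0)-V^{\pi}(s_0)]$ on the left-hand side, whereas the paper defines $\mathcal{L}(\pi)=\mathbb{E}_{\nu^*}[V^{\pi}(s)]$ with $\nu^*$ the discounted visitation distribution of $\pi^*$, and these two left-hand sides are not literally the same quantity in general. Identifying them is exactly the convention baked into the cited Lemma~\ref{lm:PDQ}, so your proof lands on the same statement the paper proves, but you should state explicitly which left-hand side your derivation yields rather than asserting that the outer integration ``collapses to $\nu^*$'' on the left; the collapse to $\nu^*$ happens only on the right-hand side, where the time-discounted sum over visited states is repackaged.
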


Next, we provide the proof sketch of Theorem \ref{thm:main}. 
Please refer to Appendix \ref{app:main_thm} for the detailed proof. 


\noindent \textbf{Proof Sketch of Theorem \ref{thm:main}.} 
By taking the expectation over $\nu^*$ on the difference between KL in Lemma \ref{lm:OSD}, the following holds by using Hölder's inequality and $2xy - x^2 \le y^2$,
\begin{align}
    &\mathbb{E}_{\nu^*}[\text{KL}(\pi^*(\cdot|s) \rVert \pi_{\theta_{t+1}}(\cdot|s)) - \text{KL}(\pi^*(\cdot|s) \rVert \pi_{\theta_t}(\cdot|s))] \\
    &\le \varepsilon_t + \varepsilon_{\text{err}}
    - \mathbb{E}_{\nu^*}[\langle \bar{C}_t(s,\cdot) \circ A^{\pi_{\theta_t}}(s, \cdot), \pi^*(\cdot | s) - \pi_{\theta_t}(\cdot|s)\rangle] \nonumber \\
    &\quad - \frac{1}{2} \mathbb{E}_{\nu^*}[\lVert\pi_{\theta_{t+1}}(\cdot|s) - \pi_{\theta_{t}}(\cdot|s)\rVert_{1}^{2}] -\mathbb{E}_{\nu^*}[\langle \tau_{t+1}^{-1} f_{\theta_{t+1}}(s, \cdot) - \tau_{t}^{-1} f_{\theta_{t}}(s, \cdot), \pi_{\theta_{t}}(\cdot|s) - \pi_{\theta_{t+1}}(\cdot|s)\rangle] \\
    &\le \varepsilon_t + \varepsilon_{\text{err}}
    - \mathbb{E}_{\nu^*}[\langle \bar{C}_t(s,\cdot) \circ A^{\pi_{\theta_t}}(s, \cdot), \pi^*(\cdot |s) - \pi_{\theta}(\cdot|s) \rangle] \nonumber \\
    &\quad + \frac{1}{2} \mathbb{E}_{\nu^*}[\lVert\tau_{t+1}^{-1} f_{\theta_{t+1}}(s, \cdot) - \tau_{t}^{-1} f_{\theta_{t}}(s, \cdot)\rVert_{\infty}^{2}].
\end{align}

By the Lemma \ref{lm:sed} and rearranging the terms, we obtain that 
\begin{align}
\label{proof_sketch:eq:each_t}
    &\mathbb{E}_{\nu^*}[\langle \bar{C}_t(s,\cdot) \circ A^{\pi_{\theta_t}}(s, \cdot), \pi^*(\cdot | s) - \pi_{\theta_t}(\cdot|s)\rangle] \\
    &\le \mathbb{E}_{\nu^*}[\text{KL}(\pi^*(\cdot|s) \rVert \pi_{\theta_{t}}(\cdot|s)) - \text{KL}(\pi^*(\cdot|s) \rVert \pi_{\theta_{t+1}}(\cdot|s))] + \varepsilon_t + \varepsilon_{\text{err}} + \varepsilon_t' + U_{C}^2 M.
\end{align}

By (\ref{suff:1}), we have $L_{C}  \mathbb{E}_{\nu^*}[\langle A^{\pi_{\theta_t}}(s, \cdot), \pi^*(\cdot|s) - \pi_{\theta_t}(\cdot|s)\rangle] \le \mathbb{E}_{\nu^*}[\langle \bar{C}_t(s,\cdot) \circ A^{\pi_{\theta_t}}(s, \cdot), \pi^*(\cdot | s) - \pi_{\theta_t}(\cdot|s)\rangle]$. 
Then, by Lemma \ref{lm:PDL} and taking the telescoping sum of (\ref{proof_sketch:eq:each_t}) from $t = 0$ to $T-1$, we obtain
\begin{flalign}
    &(1 - \gamma) L_{C} \sum_{t=0}^{T-1} (\mathcal{L}(\pi^*) - \mathcal{L}(\pi_{\theta_t})) &\\ &\le \mathbb{E}_{\nu^*}[\text{KL}(\pi^*(\cdot|s) \rVert \pi_{\theta_{0}}(\cdot|s)) - \text{KL}(\pi^*(\cdot|s) \rVert \pi_{\theta_{T}}(\cdot|s))] + \sum_{t=0}^{T-1} (\varepsilon_t + \varepsilon_{\text{err}} + \varepsilon_t') + M T U_{C}^2.
\end{flalign}
By the facts that (i) $\mathbb{E}_{\nu^*}[\text{KL}(\pi^*(\cdot|s) \rVert \pi_{\theta_{0}}(\cdot|s))] \le \log |\mathcal{A}|$, (ii) KL divergence is nonnegative, (iii) $\sum_{t=0}^{T-1} (\mathcal{L}(\pi^*) - \mathcal{L}(\pi_{\theta_t})) \ge T \cdot \min_{0\le t \le T} \{\mathcal{L}(\pi^*) - \mathcal{L}(\pi_{\theta_t})\}$, we have
\begin{align}
    \min_{0\le t \le T} &\{\mathcal{L}(\pi^*) - \mathcal{L}(\pi_{\theta_t})\} \le \frac{\log |\mathcal{A}| + \sum_{t=0}^{T-1} (\varepsilon_t + \varepsilon_t') + T (\varepsilon_{\text{err}} + M U_{C}^2)}{T L_{C} (1 - \gamma)}.
\end{align}
Since we have $\varepsilon_{\text{err}} = 2 U_{C} \epsilon_{\text{err}} \psi^*$, we set $\epsilon_{\text{err}} = U_{C}$ and thereby obtain the result of (\ref{thm:main:eq}).
\hfill\qedsymbol

\section{Discussions}
\label{section:Discussions}
\begin{figure*}[!ht]
\centering
    \subfigure[Breakout]{\includegraphics[width=0.266\linewidth]{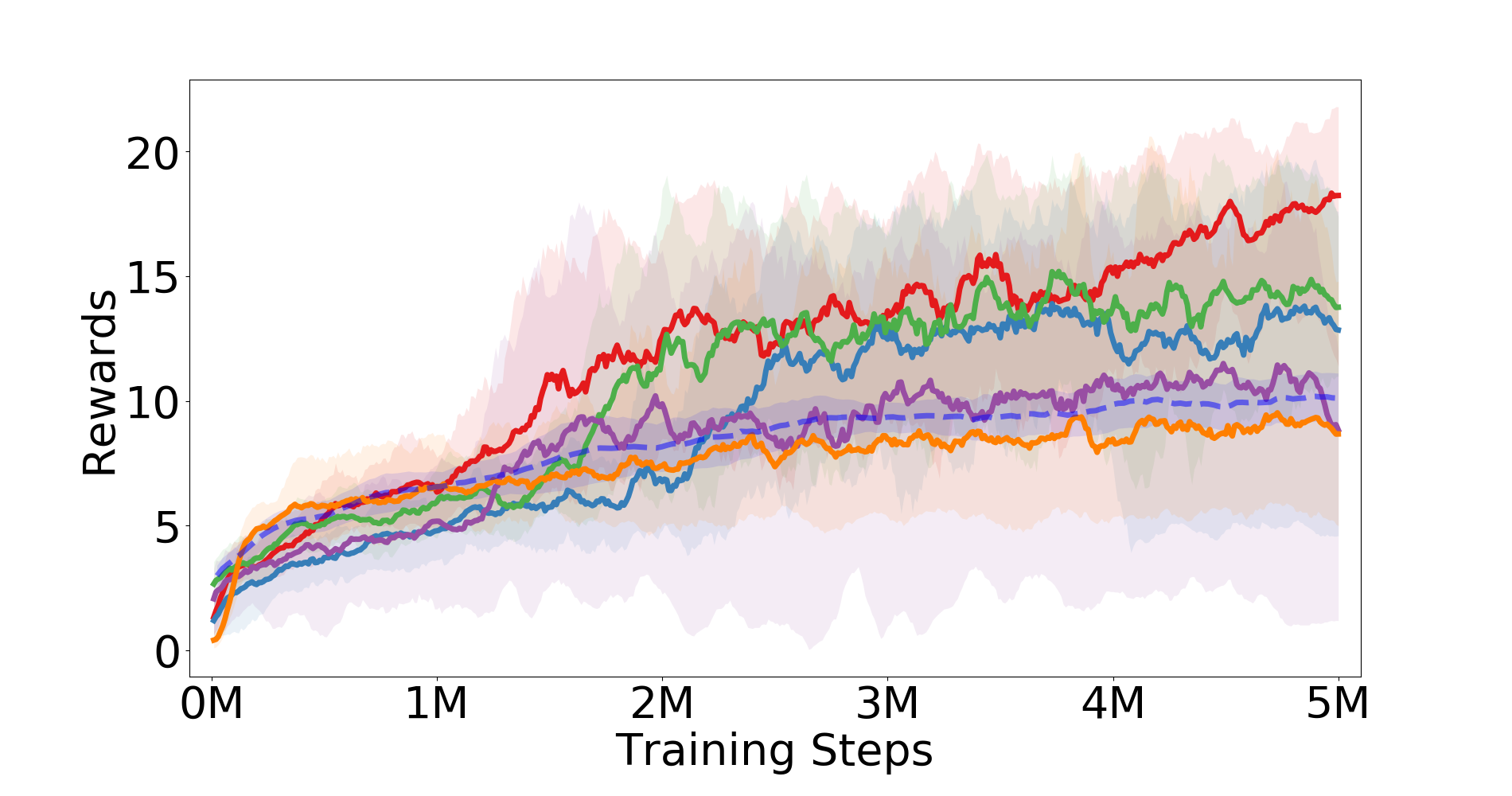}}
    \hfill
\hspace{-7mm}
    \subfigure[Space Invaders]{\includegraphics[width=0.266\linewidth]{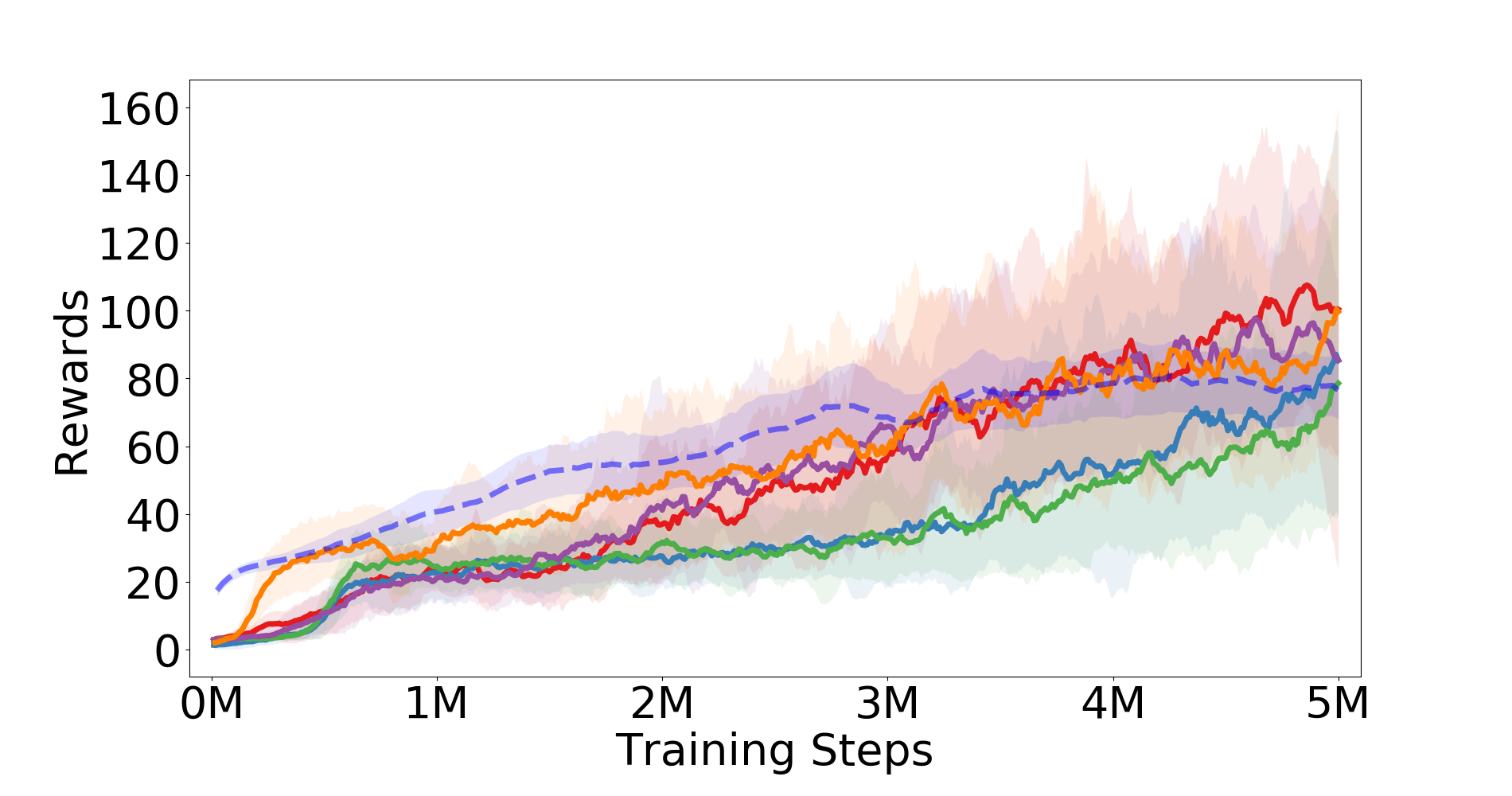}}
\hspace{-7mm}
 \hfill
    \subfigure[LunarLander]{\includegraphics[width=0.266\linewidth]{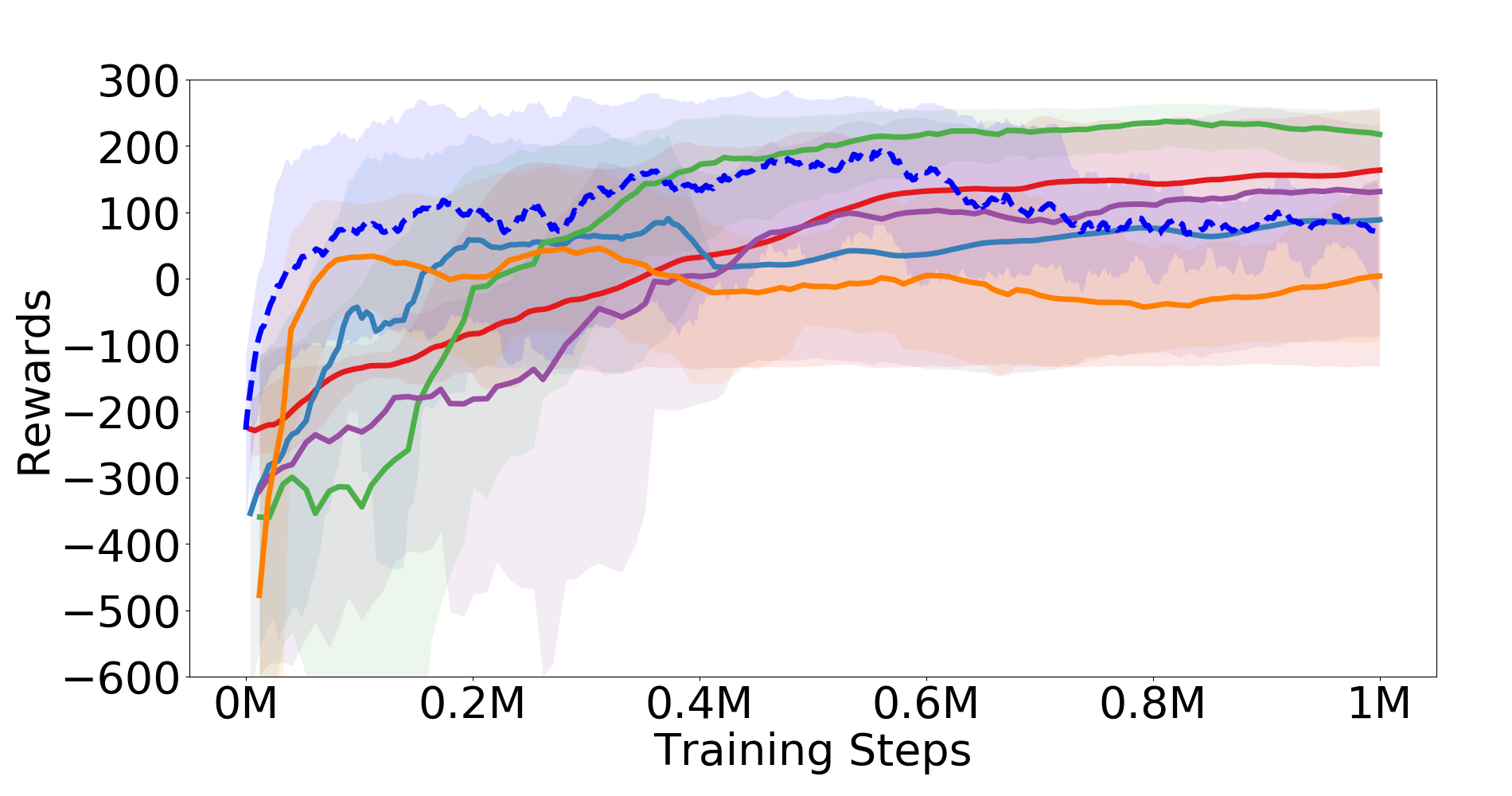}}
\hspace{-5.8mm}
 \hfill
    \subfigure[CartPole]{\includegraphics[width=0.266\linewidth]{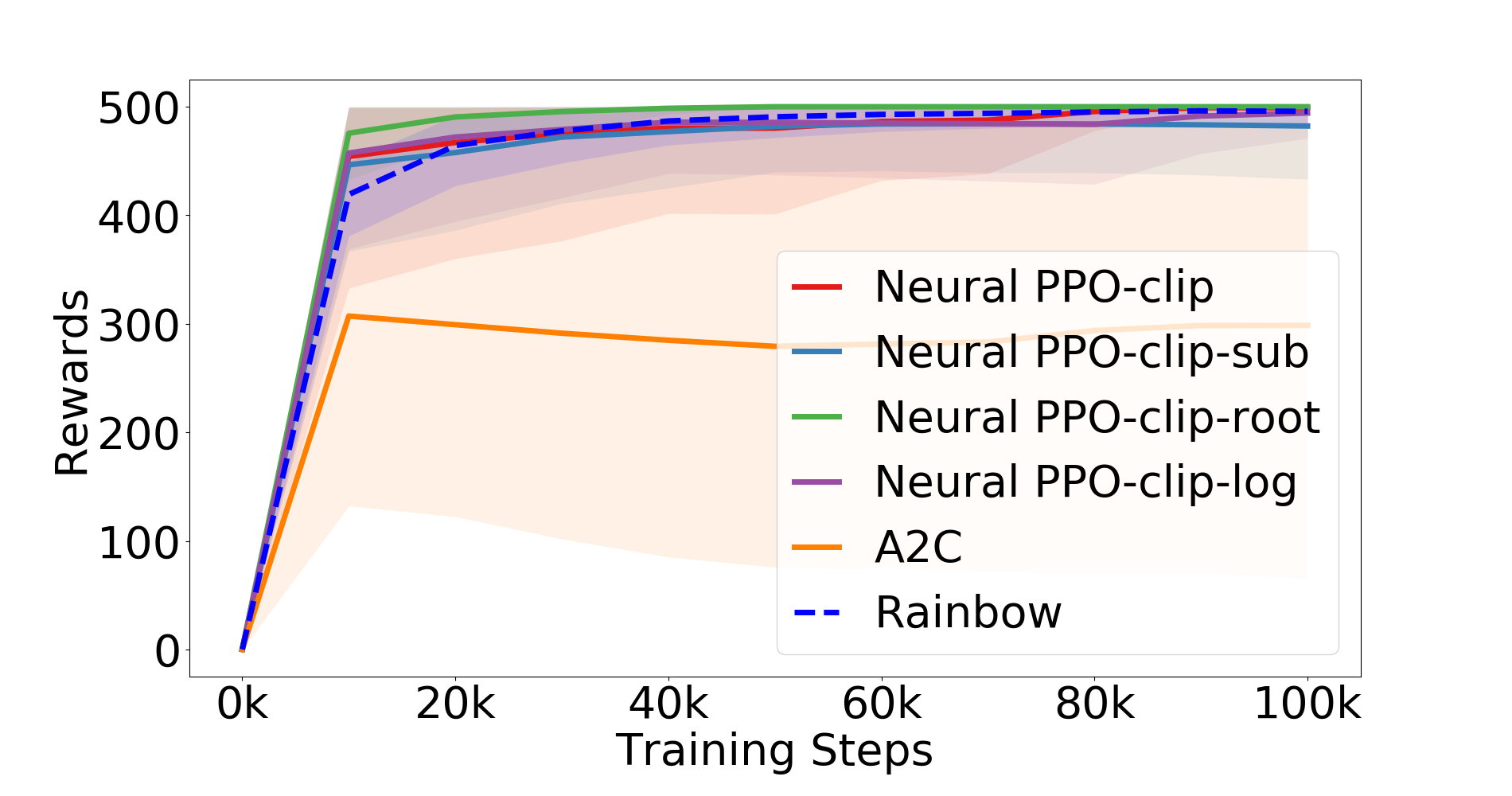}}
    \vspace{-2mm}
\caption{Evaluation of PPO-Clip with different classifiers and popular benchmark methods in MinAtar and OpenAI Gym.}
\vspace{-2mm}
\label{fig:experiments}
\end{figure*}
In this section, we take a deeper look into the convergence results and the generalized PPO-Clip objective.

\noindent\textbf{Our convergence analysis offers a sharp characterization of the effect of the clipping mechanism:} 
{Our analysis and results provide an interesting insight that the clipping range $\epsilon$ only affects the \textit{pre-constant} of the convergence rate of Neural PPO-Clip, and this is a somewhat surprising result since by intuition the clipping range $\epsilon$ is regarded as the counterpart of the penalty parameter of PPO-KL \citep{liu2019neural} which directly affects the convergence rate. On the other hand, we find that the EMDA step size $\eta$ plays an important role in determining the convergence rate, instead of the clipping range $\epsilon$. To illustrate this result, we can see that the clipping mechanism takes part in the EMDA subroutine through the indicator functions in the gradients. Moreover, as the clipping range $\epsilon$ is contained inside the indicator function, \textit{it only influences the number of effective EMDA updates but not the magnitude of each EMDA update}. Since we know that the convergence rate is determined by the magnitude of the gradient updates (i.e., $U_C, L_C$, which is $\eta$-dependent), the clipping range can only affect the pre-constant of the convergence rate and the rate would still be $O(1 / \sqrt{T})$.}

\vspace{1mm}
\noindent{\textbf{The reinterpretation via hinge loss enables variants of PPO-Clip with comparable empirical performance:}} 
{Given the convergence guarantees in Section \ref{section:analysis}, to better understand the empirical behavior of the generalized PPO-Clip objective, we further conduct experiments to evaluate Neural PPO-Clip with different classifiers. Specifically, we evaluate Neural PPO-Clip and Neural PPO-Clip-sub (as introduced in Section \ref{section:generalized obj}) as well as two additional classifiers $\log(\pi_{\theta}(a|s))-\log(\pi_{\theta_t}(a|s))$, $\sqrt{\rho_{s,a}(\theta)}-1$ (termed as Neural PPO-Clip-log and Neural PPO-Clip-root) against the benchmark approaches in several RL benchmark environments. Our implementations of Neural PPO-Clip are based on the RL Baseline3 Zoo framework \citep{rl-zoo3}. We test the algorithms in both MinAtar \citep{young19minatar} and OpenAI Gym environments \citep{brockman2016openai}. In addition, the algorithms are compared with popular baselines, including A2C and Rainbow. For A2C, we use the implementation and the default setting provided by RL Baseline3 Zoo. Regarding Rainbow, we use the same configuration of \citep{ceron2021revisiting}. Please refer to Appendix \ref{app:Experiments} for more details about our experiment settings.}

{Figure \ref{fig:experiments} shows the training curves of Neural PPO-Clip with various classifiers and the benchmark methods. Notably, we observe that Neural PPO-Clip with various classifiers can achieve comparable or better performance than the baseline methods in both RL environments. To be mentioned, the performance of Rainbow is consistent with the results reported by \citep{ceron2021revisiting}. To sum up, the above results demonstrate the applicability of the hinge loss reinterpretation of PPO-Clip in standard RL tasks.}

\section{Concluding Remarks}
\label{section:conclusion}

{The global convergence of PPO-Clip has remained a critical open problem.
This paper establishes the first global convergence result of PPO-Clip and offers new insights into PPO-Clip in two important aspects: (i) The reinterpretation via hinge loss offers a natural way to generalize the PPO-Clip method; (ii) Our convergence rate unveils the interplay between the convergence behavior of PPO-Clip with the clipping mechanism.
We expect that this work would spark further understanding of PPO-Clip in the RL community.}

{
\small
\bibliographystyle{unsrtnat}
\bibliography{aaai}
}

\appendix
\section*{Appendix}
\label{section:appendix}

\section{Pseudo Code of Algorithms}
\label{app:pseudo_code}

\begin{algorithm}[!htbp]
\caption{Neural PPO-Clip (A More Detailed Version of Algorithm \ref{algo:1 incomplete})}
\label{algo:1}
    \begin{algorithmic}[1]
        \State {\bfseries Input:} MDP $(\mathcal{S}, \mathcal{A}, \mathcal{P}, r, \gamma, \mu)$, Objective function $L$, EMDA step size $\eta$, number of EMDA iterations $K$, number of SGD and TD update iterations $T_{\text{upd}}$, number of Neural PPO-Clip iterations $T$, the clipping range $\epsilon$;
        \State {\bfseries Initialization:} the policy $\pi_{\theta_0}$ as a uniform policy\;
        \For{$t=1,\cdots,T-1$}
            \State Set temperature parameter $\tau_{t+1}$\;
            \State Sample the tuple $\{s_i, a_i, a_i^0, s_i',a_i'\}_{i=1}^{T_{\text{upd}}}$, where $(s_i, a_i) \sim \sigma_t, a_i^0 \sim \pi_0(\cdot|s_i)$, $s_i' \sim \mathcal{P}(\cdot|s_i, a_i)$ and $a_i' \sim \pi_{\theta_t}(\cdot|s_i')$\;
            \State {Use the states with nonzero advantage as the batch $\{s_i\}_{i=1}^{T_{\text{upd}}}$ for $L_{\text{Hinge}}(\theta)$ and obtain target policy $\widehat{\pi}_{t+1}$ and $C_t$ by using}
            
            {EMDA in Algorithm \ref{algo:2}}\;
            \State Solve for $Q_{\omega_t} = \text{NN}(\omega_t;m_{Q})$ by using TD update as Algorithm \ref{algo:3}\;
            \State Calculate $V_{\omega_t}$ by Bellman expectation equation and the advantage $A_{\omega_t} = Q_{\omega_t} - V_{\omega_t}$\;
            \State Solve for $f_{\theta_{t+1}} = \text{NN}(\theta_{t+1};m_f)$ by using SGD as Algorithm \ref{algo:4} based on the EMDA result\;
            \State Update the policy $\pi_{\theta_{t+1}} \propto \exp \{ \tau_{t+1}^{-1} f_{\theta_{t+1}}\}$\;
        \EndFor
    \end{algorithmic}
\end{algorithm}
\vspace{-1mm}
\begin{remarkapp}
\label{remark:choice}
{In Neural PPO-Clip, there are various types of classifiers, the choices of the EMDA step size $\eta$ and the temperature parameters $\{\tau_{t}\}$ of the neural networks are important factors to the convergence rate and hence shall be configured properly according to the properties of different classifiers. As a result, we do not specify the specific choices of $\eta$ and $\{\tau_{t}\}$ in the following pseudo code of the generic Neural PPO-Clip. Please refer to Corollaries \ref{cor:PPO-Clip}-\ref{cor:sub} in Appendix \ref{app:add:cor} for the choices of $\eta$ and $\{\tau_{t}\}$ for Neural PPO-Clip with several classifiers including the standard PPO-Clip classifier $\rho_{s, a}(\theta) - 1 = \frac{\pi_{\theta}(a|s)}{\pi_{\theta_{t}}(a|s)} - 1$.}
\end{remarkapp}

For better readability, we restate EMDA (Algorithm \ref{algo:2}) here as Algorithm \ref{algo:2 copy}.
\vspace{-1mm}
\begin{algorithm}[!htbp]
\caption{EMDA}
\label{algo:2 copy}
    \begin{algorithmic}[1]
        \State {\bfseries Input:} Objective function $L(\theta)$, EMDA step size parameter $\eta$, number of EMDA iteration $K$, initial policy $\pi_{\theta_{t}}$, sample batch $\{s_i\}_{i=1}^{T_{\text{upd}}}$\;
        \State {\bfseries Initialization:} $\tilde{\theta}^{(0)} = \pi_{\theta_{t}}$, $C_t(s, a) = 0$, for all $s, a$\;
        \For{$k=0,\cdots,K-1$}
            \For{\text{each state} $s$ \text{in the batch}}
            \State Find $g_{s,a}^{(k)} = \left.\frac{\partial L(\theta)}{\partial \theta_{s, a}}\right|_{\theta = \tilde{\theta}^{(k)}}$, for each $a$\;
            \State Let $w_s = (e^{-\eta g_{s,1}}, \dots, e^{-\eta g_{s,|\mathcal{A}|}})$\;
            \State $\tilde{\theta}^{(k+1)} = \frac{1}{\langle w_s, \tilde{\theta}^{(k)} \rangle} (w_s \circ \tilde{\theta}^{(k)})$\;
            \State $C_t(s, a) \leftarrow C_t(s, a) - \eta g_{s,a}^{(k)} / A_{\omega_t}(s, a)$, for each $a$ with $A_{\omega_t}(s, a) \neq 0$\;
            \EndFor
        \EndFor
        \State $\widehat{\pi}_{t+1} = \tilde{\theta}^{(K)}$\;
        \State {\bfseries Output:} Return the policy $\widehat{\pi}_{t+1}$, and $C_t$\;
    \end{algorithmic}
\end{algorithm}

\begin{algorithm}[!htbp]
\caption{Policy Evaluation via TD}
\label{algo:3}
    \begin{algorithmic}[1]
        \State {\bfseries Input:} MDP $(\mathcal{S}, \mathcal{A}, \mathcal{P}, r, \gamma)$, initial weights $b_i$, $[\omega(0)]_i \ (i \in [m_Q])$, number of iterations $T_{\text{upd}}$, sample $\{s_i, a_i, s_i', a_i\}_{i=1}^{T_{\text{upd}}}$\;
        \State Set the step size $\eta_{\text{upd}} \leftarrow T_{\text{upd}}^{-1/2}$\;
        \For{$t=0,\cdots,T_{\text{upd}}-1$}
            \State $(s,a,s',a') \leftarrow (s_i,a_i,s_i',a_i')$\;
            \State $\omega(t + 1/2) \leftarrow \omega(t) - \eta_{\text{upd}} \cdot (Q_{\omega(t)}(s, a) -  r(s, a) - \gamma Q_{\omega(t)}(s', a')) \cdot \nabla_{\omega} Q_{\omega(t)}(s, a)$\;
            \State $\omega(t+1) \leftarrow \arg \min_{\omega \in \mathcal{B}^0(R_Q)}\{||\omega - \omega(t+1/2)||_2\}$\;
        \EndFor
        \State Take the average over path $\bar{\omega} \leftarrow 1/T_{\text{upd}} \cdot \sum_{t=0}^{T_{\text{upd}}-1} \omega(t)$\;
        \State {\bfseries Output:} $Q_{\bar{\omega}}$\;
    \end{algorithmic}
\end{algorithm}

\begin{algorithm}[!htbp]
\caption{Policy Improvement via SGD}
\label{algo:4}
    \begin{algorithmic}[1]
        \State {\bfseries Input:} MDP $(\mathcal{S}, \mathcal{A}, \mathcal{P}, r, \gamma)$, the current energy function $f_{\theta_t}$, initial weights $b_i$, $[\theta(0)]_i \ (i \in [m_f])$, number of iterations $T_{\text{upd}}$, sample $\{s_i, a_i^0\}_{i=1}^{T_{\text{upd}}}$\;
        \State Set the step size $\eta_{\text{upd}} \leftarrow T_{\text{upd}}^{-1/2}$\;
        \For{$t=0,\cdots,T_{\text{upd}}-1$}
            \State $(s,a) \leftarrow (s_i,a_i^0)$\;
            \State $\theta(t + 1/2) \leftarrow \theta(t) - \eta_{\text{upd}} \cdot (f_{\theta_t}(s,a) - \tau_{t+1} \cdot (C_t(s, a) \cdot A_{\omega_t}(s, a) + \tau_{t}^{-1}f_{\theta_t}(s, a))) \cdot \nabla_{\theta} f_{\theta_t}(s, a)$\;
            \State $\theta(t+1) \leftarrow \arg \min_{\theta \in \mathcal{B}^0(R_f)}\{||\theta - \theta(t+1/2)||_2\}$\;
        \EndFor
        \State Take the average over path $\bar{\theta} \leftarrow 1/T_{\text{upd}} \cdot \sum_{t=0}^{T_{\text{upd}}-1} \theta(t)$\;
        \State {\bfseries Output:} $f_{\bar{\theta}}$\;
    \end{algorithmic}
\end{algorithm}

\section{Proof of Proposition \ref{pp:PI}}
\label{app:B}
We expand the closed-form of the $\log$ of the EMDA target policy,
\begin{align}
    \log \widehat{\pi}_{t+1}(a|s) &= \log \left(\prod_{k=0}^{K^{(t)}-1} \frac{\exp(-\eta g_{s, a}^{(k)})}{\langle w_s, \tilde{\theta}^{(k)} \rangle} \cdot \pi_{\theta_t}(a|s)\right) \\
    &= \sum_{k=0}^{K^{(t)}-1} -\eta g_{s,a}^{(k)} - \sum_{k=0}^{K^{(t)}-1} \log (\langle w_s, \tilde{\theta}^{(k)}  \rangle) + \log \pi_{\theta_t}(a|s) \\
    &= \sum_{k=0}^{K^{(t)}-1} -\eta g_{s,a}^{(k)}  - \sum_{k=0}^{K^{(t)}-1} \log (\langle w_s, \tilde{\theta}^{(k)}  \rangle) + \tau_{t}^{-1} f_{\theta_t}(s, a) - \log (Z_t(s)) \\
    &\propto C_t(s, a) \cdot A_{\omega_t}(s, a) + \tau_{t}^{-1} f_{\theta_t}(s, a).
\end{align}
where $Z_t(s)$ is the normalizing factor of the policy at step $t$. Since both the $\sum_{k=0}^{K^{(t)}-1} \log (\langle w_s, \tilde{\theta}^{(k)}  \rangle)$ and $\log (Z_t(s))$ are state-dependent, we can cancel it under softmax policy. We obtain $C_t(s, a)$ from Algorithm \ref{algo:2} and complete the proof.

\hfill \qedsymbol

\section{Proof of the Supporting Lemmas for Theorem \ref{thm:main}}
\label{app:main_thm}

\subsection{Additional Supporting Lemmas}
{Throughout this section, we slightly abuse the notation that we use $\mathbb{E}_{\text{init}}[\cdot]$ to denote the expectation over the initialization of neural networks. Also, we assume that Assumption \ref{assump:func} and \ref{assump:reg} hold in the following proofs.}
\begin{lemma}[Policy Evaluation Error]
\label{lm:PE_error}
The output $A_{\bar{\omega}} = Q_{\bar{\omega}} - V_{\bar{\omega}}$ of Algorithm \ref{algo:3} and Bellman expectation equation satisfies
\begin{align}
    \mathbb{E}_{\text{init,}\sigma_t}[(A_{\omega_t}(s, a) - A^{\pi_{\theta_t}}(s, a))^2] = O(R_Q^2 T_{\text{upd}}^{-1/2} + R_Q^{5/2} m_Q^{-1/4} + R_Q^3 m_Q^{-1/2}).
\end{align}
\end{lemma}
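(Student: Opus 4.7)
The plan is to reduce the error on the advantage function $A_{\omega_t} = Q_{\omega_t} - V_{\omega_t}$ to a bound on the error of $Q_{\omega_t}$ produced by the neural TD subroutine (Algorithm \ref{algo:3}), and then invoke the standard neural TD convergence analysis to obtain the claimed rate. Concretely, since $V^{\pi_{\theta_t}}(s) = \mathbb{E}_{a \sim \pi_{\theta_t}(\cdot|s)}[Q^{\pi_{\theta_t}}(s,a)]$ and $V_{\omega_t}(s) = \mathbb{E}_{a \sim \pi_{\theta_t}(\cdot|s)}[Q_{\omega_t}(s,a)]$ (the latter by the Bellman expectation construction in Algorithm \ref{algo:1}), we can write
\begin{align}
    A_{\omega_t}(s,a) - A^{\pi_{\theta_t}}(s,a) = \bigl(Q_{\omega_t}(s,a) - Q^{\pi_{\theta_t}}(s,a)\bigr) - \bigl(V_{\omega_t}(s) - V^{\pi_{\theta_t}}(s)\bigr),
\end{align}
and apply the elementary inequality $(x-y)^2 \le 2x^2 + 2y^2$ so that the advantage error splits into the $Q$ error and the $V$ error.

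The next step is to control the $V$ error by the $Q$ error. Because $V_{\omega_t}(s) - V^{\pi_{\theta_t}}(s)$ is an expectation over $a \sim \pi_{\theta_t}(\cdot|s)$ of $Q_{\omega_t}(s,a) - Q^{\pi_{\theta_t}}(s,a)$, Jensen's inequality gives $(V_{\omega_t}(s)-V^{\pi_{\theta_t}}(s))^2 \le \mathbb{E}_{a\sim \pi_{\theta_t}(\cdot|s)}[(Q_{\omega_t}(s,a) - Q^{\pi_{\theta_t}}(s,a))^2]$. Taking the expectation over $(s,a) \sim \sigma_t$, whose state marginal equals $\nu_t$ and whose action conditional is exactly $\pi_{\theta_t}(\cdot|s)$, the $V$ error is then bounded by the same $L^2(\sigma_t)$ error of $Q_{\omega_t}$ against $Q^{\pi_{\theta_t}}$. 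Thus it suffices to show
\begin{align}
    \mathbb{E}_{\text{init,}\sigma_t}\bigl[(Q_{\omega_t}(s,a) - Q^{\pi_{\theta_t}}(s,a))^2\bigr] = O\bigl(R_Q^2 T_{\text{upd}}^{-1/2} + R_Q^{5/2} m_Q^{-1/4} + R_Q^3 m_Q^{-1/2}\bigr).
\end{align}

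The last step is to establish the $Q$ bound via the standard convergence analysis of neural TD with projection onto $B^0(R_Q)$. I would follow the now-standard NTK-style argument: at initialization the two-layer ReLU network $Q_\omega$ is close to its linearization $\hat{Q}_\omega$ around $\omega(0)$, the linearization error is controlled by Assumption \ref{assump:reg} together with $R_Q$ and $m_Q$ (yielding the $R_Q^{5/2} m_Q^{-1/4}$ and $R_Q^3 m_Q^{-1/2}$ terms), and the averaged SGD iterates on the linearized TD operator contract toward the fixed point $Q^{\pi_{\theta_t}}$ (which lies in the function class by Assumption \ref{assump:func}) at rate $O(R_Q^2 T_{\text{upd}}^{-1/2})$. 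Combining the $Q$ bound with the reduction above yields the same rate for the advantage error. Invoking existing neural TD convergence results from the literature (e.g.\ the analysis used in the Neural PPO-KL work) short-circuits most of this.

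The main obstacle is the neural TD bound itself: carrying out the linearization error estimates cleanly requires the regularity condition in Assumption \ref{assump:reg} to convert $\ell_2$ deviations of the weights into $L^2(\sigma_t)$ deviations of the network outputs, and careful bookkeeping of the projection step to ensure the iterates stay in $B^0(R_Q)$. Once that is in hand, the reduction from $A$ to $Q$ is genuinely easy, so I expect the technical bulk of the proof to live entirely in reproducing or citing the neural TD analysis, with the advantage-error reduction adding only a constant factor of $2$.
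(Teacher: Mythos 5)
Your proposal matches the paper's proof essentially step for step: the paper also decomposes the advantage error into the $Q$-error plus the $V$-error expressed as $\mathbb{E}_{a'\sim\pi_{\theta_t}}[Q^{\pi_{\theta_t}}(s,a')-Q_{\omega_t}(s,a')]$, applies $(x+y)^2\le 2x^2+2y^2$ and Jensen's inequality, uses that the action conditional of $\sigma_t$ is $\pi_{\theta_t}$ to collapse everything to $4\,\mathbb{E}_{\sigma_t}[(Q_{\omega_t}-Q^{\pi_{\theta_t}})^2]$, and then cites the neural TD bound (Theorem 4.6 of the Neural PPO-KL paper) exactly as you suggest. The only trivial discrepancy is that the reduction costs a factor of $4$ rather than $2$, which does not affect the $O(\cdot)$ statement.
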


\noindent To prove Lemma \ref{lm:PE_error}, we start by stating a bound on the error of the estimated state-action value function.

\begin{lemma}[Theorem 4.6 in \citep{liu2019neural}]
\label{thm:4.6_liu}
The output $Q_{\bar{\omega}}$ of Algorithm \ref{algo:3} satisfies
\begin{align}
    \mathbb{E}_{\text{init,}\sigma_t}[(Q_{\omega_t}(s, a) - Q^{\pi_{\theta_t}}(s, a))^2] = O(R_Q^2 T_{\text{upd}}^{-1/2} + R_Q^{5/2} m_Q^{-1/4} + R_Q^3 m_Q^{-1/2}).
\end{align}
\end{lemma}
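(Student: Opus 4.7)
The plan is to adopt the proof strategy of Theorem 4.6 in \citep{liu2019neural} (i.e., the neural TD convergence argument of Cai et al.), which couples a local linearization of the two-layer ReLU network at initialization with the convergence analysis of projected stochastic approximation for the projected Bellman operator. First I introduce the linearization
\[
\widehat{Q}_\omega(s,a) \;=\; \frac{1}{\sqrt{m_Q}}\sum_{i=1}^{m_Q} b_i \, \mathds{1}\{[\omega(0)]_i^\top (s,a) > 0\} \, [\omega]_i^\top (s,a),
\]
which coincides with $Q_\omega$ on those units whose activation pattern at $\omega$ matches that at $\omega(0)$. For $\omega \in B_\omega$, the probability (over initialization) that neuron $i$'s activation flips is controlled by $\|[\omega]_i - [\omega(0)]_i\|_2$ via the distribution-density condition in Assumption \ref{assump:reg}; summing over the $m_Q$ neurons and using $\|\omega - \omega(0)\|_2 \le R_Q$ yields a uniform linearization bound $\mathbb{E}_{\text{init},\sigma_t}[(Q_\omega - \widehat{Q}_\omega)^2] = O(R_Q^3 / m_Q^{1/2})$ for every $\omega \in B_\omega$, which will account for the $R_Q^3 m_Q^{-1/2}$ term in the claim.

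Next I analyze the projected TD iterates on the linearized model. Because $\widehat{Q}_\omega$ is linear in $\omega$ and the Bellman operator is a $\gamma$-contraction in $L^2(\sigma_t)$, the population semi-gradient $\bar{g}(\omega) := \mathbb{E}_{\sigma_t}[(\widehat{Q}_\omega(s,a) - r(s,a) - \gamma \widehat{Q}_\omega(s',a'))\nabla \widehat{Q}_\omega(s,a)]$ is $(1-\gamma)$-strongly monotone in the NTK-induced inner product, with a unique fixed point $\omega^\star$ satisfying $\widehat{Q}_{\omega^\star} = \Pi_{\mathcal{F}_{R_Q,m_Q}} \mathcal{T}^{\pi_{\theta_t}} \widehat{Q}_{\omega^\star}$; by Assumption \ref{assump:func} the projection is vacuous and in fact $\widehat{Q}_{\omega^\star} = Q^{\pi_{\theta_t}}$. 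Standard projected-SGD analysis for one-point strongly monotone operators with bounded variance, step size $\eta_{\text{upd}} = T_{\text{upd}}^{-1/2}$, and Polyak averaging then gives $\mathbb{E}_{\text{init}}[\|\widehat{Q}_{\bar\omega} - Q^{\pi_{\theta_t}}\|_{L^2(\sigma_t)}^2] = O(R_Q^2\, T_{\text{upd}}^{-1/2})$, which will supply the dominant $R_Q^2 T_{\text{upd}}^{-1/2}$ term. Assembly is then a two-term triangle inequality,
\[
\|Q_{\bar\omega} - Q^{\pi_{\theta_t}}\|_{L^2(\sigma_t)} \;\le\; \|Q_{\bar\omega} - \widehat{Q}_{\bar\omega}\|_{L^2(\sigma_t)} + \|\widehat{Q}_{\bar\omega} - Q^{\pi_{\theta_t}}\|_{L^2(\sigma_t)},
\]
squared and expanded: the diagonal terms yield $O(R_Q^3 m_Q^{-1/2})$ and $O(R_Q^2 T_{\text{upd}}^{-1/2})$, while the Cauchy--Schwarz cross term gives the $O(R_Q^{5/2} m_Q^{-1/4})$ piece, matching the stated rate exactly.

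The main obstacle is the second step. Unlike ordinary projected SGD, neural TD is a biased semi-stochastic-approximation scheme on a non-symmetric operator, and along the trajectory the true stochastic direction $g_t$ differs from its linearized surrogate $\hat{g}_t$ by a per-step drift of order $O(R_Q^{3/2} m_Q^{-1/4})$ inherited from the linearization error. I plan to handle this by (i) using the $\ell_2$-ball projection $\Pi_{\mathcal{B}^0(R_Q)}$ to keep every iterate $\omega(t)$ inside the NTK regime so the Assumption \ref{assump:reg}-based activation-perturbation bound holds uniformly in $t$, and (ii) absorbing the cumulative drift into a Grönwall-style one-step descent inequality, so that the bias term enters the final bound only through the $R_Q^{5/2} m_Q^{-1/4}$ cross term rather than inflating by any $T_{\text{upd}}$-dependent factor.
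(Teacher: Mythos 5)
This lemma is not proved in the paper at all: it is imported verbatim as Theorem 4.6 of \citep{liu2019neural}, and the only ``proof infrastructure'' the paper supplies for results of this type is the (also imported) meta-algorithm statement in Theorem~\ref{thm:meta}. So there is no in-paper argument to compare against; the relevant comparison is with the cited source. Your sketch does reconstruct that source's strategy faithfully at a high level --- local linearization at initialization with an $O(R_Q^3 m_Q^{-1/2})$ activation-flip bound via Assumption~\ref{assump:reg}, identification of the projected fixed point with $Q^{\pi_{\theta_t}}$ via the closedness in Assumption~\ref{assump:func} and uniqueness of the fixed point of the contraction $\Pi_{\mathcal{F}}\mathcal{T}^{\pi}$, and a projected stochastic-approximation bound with Polyak averaging for the linear model --- and these are exactly the ingredients behind (\ref{thm:meta:eq1})--(\ref{thm:meta:eq2}).

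One internal inconsistency is worth flagging. In your second paragraph you attribute the $R_Q^{5/2} m_Q^{-1/4}$ term to the Cauchy--Schwarz cross term of the final triangle inequality; but that cross term is $O\bigl(R_Q^{3/2} m_Q^{-1/4}\cdot R_Q T_{\text{upd}}^{-1/4}\bigr)=O\bigl(R_Q^{5/2} m_Q^{-1/4} T_{\text{upd}}^{-1/4}\bigr)$, which by AM--GM is absorbed into the two diagonal terms and therefore cannot be the source of a standalone $R_Q^{5/2} m_Q^{-1/4}$ contribution. The correct source is the one you identify only in your last paragraph: the per-iterate bias between the true semi-gradient and its linearized surrogate, of order $R_Q^{3/2} m_Q^{-1/4}$, pairs with $\lVert \omega(t)-\omega^\star\rVert_2 = O(R_Q)$ inside the one-step descent inequality, and it is this inner product --- not the post-hoc triangle-inequality cross term --- that produces $R_Q^{5/2} m_Q^{-1/4}$ without any $T_{\text{upd}}$ inflation. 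If you reorganize the sketch so that the bias is handled inside the recursion (as your third paragraph proposes) and drop the claim that the triangle-inequality cross term supplies this rate, the argument matches the cited proof.
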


\begin{proof}[Proof of Lemma \ref{lm:PE_error}]
We are ready to show the policy evaluation error of the advantage function. First, we find the bound of $|A_{\omega_t}(s, a) - A^{\pi_{\theta_t}}(s, a)|$. We have
\begin{align}
    |A_{\omega_t}(s, a) - A^{\pi_{\theta_t}}(s, a)| &= |Q_{\omega_t}(s, a) - V_{\omega_t}(s) - Q^{\pi_{\theta_t}}(s, a) + V^{\pi_{\theta_t}}(s)| \\
    &= \left|Q_{\omega_t}(s, a) - Q^{\pi_{\theta_t}}(s, a) + \sum_{a'} \pi_{\theta_t}(a'|s) \cdot (Q^{\pi_{\theta_t}}(s, a') - Q_{\omega_t}(s, a'))\right| \\
    &= \left|Q_{\omega_t}(s, a) - Q^{\pi_{\theta_t}}(s, a) + \mathbb{E}_{a'\sim \pi_{\theta_t}}[Q^{\pi_{\theta_t}}(s, a') - Q_{\omega_t}(s, a')]\right| \\
    &\le |Q^{\pi_{\theta_t}}(s, a) - Q_{\omega_t}(s, a)| + |\mathbb{E}_{a'\sim \pi_{\theta_t}}[Q^{\pi_{\theta_t}}(s, a') - Q_{\omega_t}(s, a')]|.
\end{align}
Then, we can derive the bound of $(A^{\pi_{\theta_t}}(s, a) - A_{\omega_t}(s, a))^2$ as follows,
\begin{align}
    (A^{\pi_{\theta_t}}(s, a) - A_{\omega_t}(s, a))^2 &\le 2(Q^{\pi_{\theta_t}}(s, a) - Q_{\omega_t}(s, a))^2 + 2(\mathbb{E}_{a'\sim \pi_{\theta_t}}[Q^{\pi_{\theta_t}}(s, a') - Q_{\omega_t}(s, a')])^2 \label{eq:PE_error 1}\\
    &\le 2(Q^{\pi_{\theta_t}}(s, a) - Q_{\omega_t}(s, a))^2 + 2\mathbb{E}_{a'\sim \pi_{\theta_t}}[Q^{\pi_{\theta_t}}(s, a') - Q_{\omega_t}(s, a')^2],\label{eq:PE_error 2}
\end{align}
{where (\ref{eq:PE_error 2}) holds by Jensen's inequality.}
By taking the expectation of (\ref{eq:PE_error 1})-(\ref{eq:PE_error 2}) over the state-action distribution $\sigma_t$, we have
\begin{align}
    \mathbb{E}_{\sigma_t}[(&A^{\pi_{\theta_t}}(s, a) - A_{\omega_t}(s, a))^2] \label{eq:PE_error 3}\\
    &\le 2\mathbb{E}_{\sigma_t}[(Q^{\pi_{\theta_t}}(s, a) - Q_{\omega_t}(s, a))^2] + 2\mathbb{E}_{\sigma_t}[\mathbb{E}_{ a'\sim \pi_{\theta_t}}[(Q^{\pi_{\theta_t}}(s, a') - Q_{\omega_t}(s, a'))^2]] \label{eq:PE_error 4}\\
    &= 4\mathbb{E}_{\sigma_t}[(Q^{\pi_{\theta_t}}(s, a) - Q_{\omega_t}(s, a))^2].\label{eq:PE_error 5},
\end{align}
where the last equality in (\ref{eq:PE_error 5}) is obtained by the actions are directly sampled by $\pi_{\theta_t}$ so we can ignore it in the latter term. Last, we leverage Lemma \ref{thm:4.6_liu} to obtain the result of Lemma \ref{lm:PE_error}.
\end{proof}

\begin{lemma}[Policy Improvement Error]
\label{lm:PI_error}
The output $f_{\bar{\theta}}$ of Algorithm \ref{algo:4} satisfies
\begin{align}
    \mathbb{E}_{\text{init,}\tilde{\sigma}_t}&[(f_{\bar{\theta}}(s, a) - \tau_{t+1} \cdot (C_t(s, a) \cdot A_{\omega_t}(s, a) + \tau_t^{-1} f_{\theta_t}(s, a)))^2] \\
    = \ &O(R_f^2 T_{\text{upd}}^{-1/2} + R_f^{5/2} m_f^{-1/4} + R_f^3 m_f^{-1/2}), \nonumber
\end{align}
\end{lemma}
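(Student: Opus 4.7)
\textbf{Proof proposal for Lemma \ref{lm:PI_error}.} The plan is to cast Algorithm \ref{algo:4} as a projected stochastic gradient descent procedure for a convex regression problem with a \emph{fixed} target function, and then invoke the same NTK-based machinery underlying Lemma \ref{thm:4.6_liu} (Theorem 4.6 of \citep{liu2019neural}). Define
\begin{equation}
h_t(s,a) \;:=\; \tau_{t+1}\bigl(C_t(s,a)\,A_{\omega_t}(s,a) + \tau_t^{-1} f_{\theta_t}(s,a)\bigr),
\end{equation}
which depends only on quantities frozen before Algorithm \ref{algo:4} is launched. Then Algorithm \ref{algo:4} is exactly projected SGD on the quadratic population objective $J(\theta) = \tfrac{1}{2}\mathbb{E}_{\tilde\sigma_t}[(f_\theta(s,a) - h_t(s,a))^2]$ over the ball $B_\theta = \{\theta : \|\theta-\theta(0)\|_2 \le R_f\}$, with stochastic samples $(s_i,a_i^0)\sim \tilde\sigma_t$ and step size $\eta_{\text{upd}} = T_{\text{upd}}^{-1/2}$. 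Observe that this is simpler than the TD setting of Lemma \ref{thm:4.6_liu} because the target is a fixed function rather than a bootstrapped one, so no Bellman-contraction argument is required; the same linearization error estimates carry over.

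The key steps I would carry out are the following. First, replace $f_\theta$ by its NTK linearization $\hat f_\theta(s,a) = \frac{1}{\sqrt{m_f}}\sum_{i=1}^{m_f} b_i \mathds{1}\{[\theta(0)]_i^\top(s,a)>0\}\,[\theta]_i^\top(s,a)$ and use the standard local-linearization estimates for two-layer ReLU networks in $B_\theta$: under the initialization (\ref{init}) and Assumption \ref{assump:reg}, one has $\mathbb{E}_{\text{init},\tilde\sigma_t}[(f_\theta - \hat f_\theta)^2] = O(R_f^3 m_f^{-1/2})$ uniformly over $\theta\in B_\theta$, together with a matching bound on the difference of their gradients. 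Second, on the linearized problem $\hat J(\theta)$ is convex and $O(1)$-smooth, so the averaged iterate from projected SGD satisfies the standard online-learning bound $\mathbb{E}[\hat J(\bar\theta) - \hat J(\theta^\star)] = O(R_f^2 T_{\text{upd}}^{-1/2})$, where $\theta^\star \in B_\theta$ is the linearized minimizer. Third, by Assumption \ref{assump:func} applied to $Q_{\omega_t}$ and by construction $f_{\theta_t}\in\mathcal{F}_{R_f,m_f}$, the target $h_t$ lies (up to linearization error) in the NTK function class with radius $O(R_f)$ after absorbing $\tau_{t+1}$, $\tau_t^{-1}$, and $C_t$ into a properly sized projection ball; this yields realizability on the linearized class so that $\hat J(\theta^\star)$ matches the target and the $\mathbb{E}_{\tilde\sigma_t}[(\hat f_{\bar\theta} - h_t)^2]$ error inherits the $O(R_f^2 T_{\text{upd}}^{-1/2})$ bound. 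Fourth, combine the two sources of error by the inequality $(a+b)^2 \le 2a^2 + 2b^2$ and push the expectation through, yielding
\begin{equation}
\mathbb{E}_{\text{init},\tilde\sigma_t}\bigl[(f_{\bar\theta} - h_t)^2\bigr] \;=\; O\!\bigl(R_f^2 T_{\text{upd}}^{-1/2} + R_f^{5/2} m_f^{-1/4} + R_f^3 m_f^{-1/2}\bigr),
\end{equation}
where the $R_f^{5/2} m_f^{-1/4}$ term arises from the cross term between SGD error and linearization error, exactly as in Liu et al.

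The main obstacle will be the realizability step, i.e., controlling the effective norm of the regression target $h_t$ so that it can be represented within $\mathcal{F}_{R_f,m_f}$ up to negligible NTK error. The pieces $\tau_t^{-1} f_{\theta_t}$ and $\tau_{t+1} f_{\theta_t}$ are already in the function class by the inductive definition of $\theta_t$, and $A_{\omega_t}$ inherits membership from $Q_{\omega_t}\in\mathcal F_{R_Q,m_Q}$ (with $V_{\omega_t}$ absorbed via closure under marginalization in $a$); the delicate part is bounding $C_t(s,a)$ — which is accumulated by the EMDA inner loop of Algorithm \ref{algo:2} and is classifier-dependent through $g_{s,a}^{(k)}$ — so that $\tau_{t+1}\,C_t\,A_{\omega_t}$ still sits inside the required radius. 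Once this bound is established (using the clipped-gradient structure of the hinge loss together with $\eta$ and $K$), the remainder of the argument is a direct transcription of the NTK-SGD analysis and no new technical ideas beyond those in \citep{liu2019neural} are needed.
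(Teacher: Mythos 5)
Your proposal is correct and follows essentially the same route as the paper: the paper also reduces Algorithm \ref{algo:4} to the projected-SGD meta-algorithm of \citep{liu2019neural} with $\mu=0$ and fixed target $v=\tau_{t+1}(C_t A_{\omega_t}+\tau_t^{-1}f_{\theta_t})$, verifies the second-moment/realizability condition on the target by bounding $C_t\le U_C$ and imposing $\tau_{t+1}^2(U_C^2+\tau_t^{-2})\le 1$, and then splits the error as $(f_{\bar\theta}-f^0_{\theta^*})$ plus the projection/linearization error via $(a+b)^2\le 2a^2+2b^2$. The "main obstacle" you flag (controlling the effective norm of the target through $C_t$, $\eta$, $K$, and the temperatures) is exactly the step the paper addresses, so no gap.
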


\noindent To prove Lemma \ref{lm:PI_error}, we first state the following useful result noindently proposed by \citep{liu2019neural}.
\begin{theorem}[\citep{liu2019neural}, Meta-Algorithm of Neural Networks]
\label{thm:meta}
    Consider a meta-algorithm with the following update:
    \begin{align}
        \alpha(t + 1/2) &\leftarrow \alpha(t) - \eta_{\text{upd}} \cdot (u_{\alpha(t)}(s, a) - v(s, a) - \mu \cdot u_{\alpha(t)}(s', a')) \cdot \nabla_{\alpha} u_{\alpha(t)}(s, a),     \label{eq:meta-algo 1}\\
        \alpha(t + 1) &\leftarrow \prod_{B_{\alpha}}(\alpha(t + 1/2)) = \mathop{\arg \min}_{\alpha \in B_{\alpha}} \lVert\alpha - \alpha(t + 1/2)\rVert_2, \label{eq:meta-algo 2}
    \end{align}
    where $\mu \in [0, 1)$ is a constant, $(s, a, s', a')$ is sampled from some stationary distribution $d$, $u_{\alpha}$ is parameterized as a two-layer neural network $\text{NN}(\alpha;m)$, and $v(s, a)$ satisfies
    \begin{align}
    \label{condition:v}
        \mathbb{E}_{d}[(v(s,a))^2] \le \bar{v}_1 \cdot \mathbb{E}_{d}[(u_{\alpha(0)}(s, a))^2] + \bar{v}_2 \cdot R_{u}^2 + \bar{v}_3,
    \end{align} 
    for some constants $\bar{v}_1, \bar{v}_2, \bar{v}_3 \ge 0$.
    We define the update operator $\gT u(s, a) = \mathbb{E}[v(s, a) + \mu \cdot u(s', a')|s' \sim \mathcal{P}(\cdot|s, a), a' \sim \pi(\cdot|s)]$, and define $\alpha^*$ as the \textit{approximate stationary point} (cf. (D.18) in \citep{liu2019neural}), which inherently have the property $u^0_{\alpha^*} = \prod_{\mathcal{F}_{R_u, m}}\mathcal{T}u^0_{\alpha^*}$, where $u_{\alpha^*}^0$ is the linearization of $u$ at $\alpha^*$.
    Suppose we run the above meta-algorithm in (\ref{eq:meta-algo 1})-(\ref{eq:meta-algo 2}) for $T$ iterations with $T \ge 64 / (1 - \mu)^2$ and set the step size $\eta_{\text{upd}} = T^{-1/2}$. Then, we have
    \begin{align}
        \label{thm:meta:eq1}
        \mathbb{E}_{\text{init,}d}[(u_{\bar{\alpha}}(s, a) - u_{\alpha*}^0(s, a))^2] &= O(R_u^2 T_{\text{upd}}^{-1/2} + R_u^{5/2} m^{-1/4} + R_u^3 m^{-1/2}), \\
        \label{thm:meta:eq2}
        \mathbb{E}_{\text{init,}d}[(u_{\alpha'}(s, a) - u_{\alpha'}^0(s, a))^2] &= O(R_u^3 m^{-1/2}),
    \end{align}
    where $\bar{\alpha} \coloneqq 1/T \cdot (\sum_{t=0}^{T-1} \alpha(t))$
    and $\alpha'$ is a parameter in $B_{\alpha}$.
\end{theorem}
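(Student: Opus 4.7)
The plan is to prove both claims by combining the NTK linearization machinery for two-layer ReLU networks with a projected stochastic fixed-point analysis. I would organize the argument into three stages, ending with the technical bottleneck.

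\textbf{Stage 1 (uniform linearization bound, equation~(\ref{thm:meta:eq2})).} Fix $\alpha' \in B_\alpha$. Because the only difference between $u_{\alpha'}$ and its linearization $u^0_{\alpha'}$ comes from neurons whose ReLU activation pattern differs at $\alpha'$ and $\alpha(0)$, one has
\begin{equation*}
u_{\alpha'}(s,a)-u^0_{\alpha'}(s,a) = \tfrac{1}{\sqrt{m}}\sum_{i=1}^m b_i [\alpha']_i^{\top}(s,a)\bigl(\mathds{1}\{[\alpha']_i^\top(s,a)>0\}-\mathds{1}\{[\alpha(0)]_i^\top(s,a)>0\}\bigr).
\end{equation*}
A sign flip at neuron $i$ forces $|[\alpha(0)]_i^\top(s,a)| \le |[\alpha' - \alpha(0)]_i^\top(s,a)|$. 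Applying Assumption~\ref{assump:reg} to the Gaussian initialization bounds the probability of this slab event by $O(\|[\alpha'-\alpha(0)]_i\|_2)$, and a Cauchy--Schwarz/concentration argument across the $m$ neurons, together with $\|\alpha'-\alpha(0)\|_2 \le R_u$, yields the $O(R_u^3/m^{1/2})$ squared-$L^2$ bound.

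\textbf{Stage 2 (linearized dynamics).} Replace $u_{\alpha(t)}$ by $u^0_{\alpha(t)}$ inside the update to obtain a virtual iterate $\tilde{\alpha}(t)$ running projected stochastic \emph{semi-gradient} iteration with gradient $\tilde{g}_t = (u^0_{\tilde\alpha(t)}(s,a)-v(s,a)-\mu u^0_{\tilde\alpha(t)}(s',a'))\nabla_\alpha u^0_{\tilde\alpha(t)}(s,a)$. Since $u^0_\alpha$ is linear in $\alpha$ and $\mu<1$, the operator $\Pi_{\mathcal{F}_{R_u,m}}\mathcal{T}$ is a $\mu$-contraction in $L^2(d)$ whose unique fixed point is $u^0_{\alpha^*}$. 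The standard one-step projected-SGD expansion gives
\begin{equation*}
\|\tilde\alpha(t+1)-\alpha^*\|_2^2 \le \|\tilde\alpha(t)-\alpha^*\|_2^2 - 2\eta_{\text{upd}}\langle \tilde g_t,\tilde\alpha(t)-\alpha^*\rangle + \eta_{\text{upd}}^2 \|\tilde g_t\|_2^2.
\end{equation*}
I would combine this with the contractive lower bound $\mathbb{E}\langle \tilde g_t,\tilde\alpha(t)-\alpha^*\rangle \ge (1-\mu)\,\mathbb{E}[(u^0_{\tilde\alpha(t)}-u^0_{\alpha^*})^2]$, the bounded feature norm $\|\nabla_\alpha u^0_\alpha(s,a)\|_2 \le 1$, and the growth hypothesis~(\ref{condition:v}) to bound $\mathbb{E}\|\tilde g_t\|_2^2 = O(R_u^2)$. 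Averaging over $T_{\text{upd}}$ iterates with $\eta_{\text{upd}}=T^{-1/2}_{\text{upd}}$, using Jensen and the horizon condition $T_{\text{upd}}\ge 64/(1-\mu)^2$ to absorb $(1-\mu)^{-2}$ terms, produces the virtual bound $\mathbb{E}[(u^0_{\bar\alpha_{\mathrm{virt}}}(s,a)-u^0_{\alpha^*}(s,a))^2] = O(R_u^2 T_{\text{upd}}^{-1/2})$. Nonexpansiveness of Euclidean projection onto $B_\alpha$ preserves every inequality above.

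\textbf{Stage 3 (transfer back).} At every step, the discrepancy between the actual semi-gradient $g_t$ and the virtual one $\tilde g_t$ is controlled, in squared $L^2$, by the linearization error from Stage~1. Coupling the two iterates and summing these per-step perturbations using a Gronwall-type recursion, then applying (\ref{thm:meta:eq2}) once more to pass from $u_{\bar\alpha}$ to $u^0_{\bar\alpha}$ on the left-hand side, contributes the extra $R_u^{5/2}m^{-1/4}+R_u^3 m^{-1/2}$ terms; added to the $O(R_u^2 T_{\text{upd}}^{-1/2})$ bound of Stage~2 this yields~(\ref{thm:meta:eq1}). The dominant obstacle is Stage~2: although $u^0_\alpha$ is linear, the semi-gradient iteration is not the gradient of a convex function, so the standard convex SGD toolkit does not apply directly. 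The argument rests on the contraction furnished by $\mu<1$, and on carefully trading the per-step stochastic noise (bounded via Assumption~\ref{assump:reg} and~(\ref{condition:v})) against the contraction margin; the hypothesis $T\ge 64/(1-\mu)^2$ is precisely the threshold that makes this trade-off close.
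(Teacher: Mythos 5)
The paper does not actually prove this statement: Theorem~\ref{thm:meta} is imported as a black box from \citep{liu2019neural} (their meta-analysis of projected stochastic semi-gradient updates for two-layer ReLU networks, building on the neural TD analysis of Cai et al.), and is only \emph{used} here to derive Lemmas~\ref{lm:PE_error} and~\ref{lm:PI_error}. So there is no in-paper proof to compare against; your sketch has to be judged against the argument in the cited source, and at that level it reconstructs the right strategy. Stage~1 is the standard activation-pattern argument: a sign flip at neuron $i$ forces $|[\alpha(0)]_i^{\top}(s,a)|\le\|[\alpha']_i-[\alpha(0)]_i\|_2$, Assumption~\ref{assump:reg} (applied with $z=[\alpha(0)]_i$ and expectation over $(s,a)\sim d$, then over the initialization) bounds the slab probability, and Cauchy--Schwarz over the $m$ neurons with $\sum_i\|[\alpha']_i-[\alpha(0)]_i\|_2\le\sqrt{m}R_u$ gives the $O(R_u^3 m^{-1/2})$ bound of (\ref{thm:meta:eq2}). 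Stage~2 is likewise the correct skeleton: the one-step expansion, the monotonicity inequality $\langle\bar g(\alpha)-\bar g(\alpha^*),\alpha-\alpha^*\rangle\ge(1-\mu)\|u^0_{\alpha}-u^0_{\alpha^*}\|^2_{L^2(d)}$ combined with approximate stationarity of $\alpha^*$, the $O(R_u^2)$ second-moment bound from (\ref{condition:v}), and averaging with $\eta_{\text{upd}}=T^{-1/2}$.

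The one place where your plan, taken literally, would fail is the ``Gronwall-type recursion'' in Stage~3. A genuine Gronwall coupling of the actual and virtual iterates compounds the per-step perturbation multiplicatively and would produce a factor of order $\exp(c\,T\eta_{\text{upd}})=\exp(c\sqrt{T})$, which destroys the bound. The argument in the cited source avoids any coupling: because the projection confines every iterate to $B_\alpha$, the linearization error of the \emph{gradient} is bounded uniformly over the ball by Stage~1, so the discrepancy between $g_t$ and $\tilde g_t$ enters the one-step expansion only as an additive cross term of order $\eta_{\text{upd}}\cdot R_u\cdot O(R_u^{3/2}m^{-1/4})$; summing $T$ such terms and dividing by the accumulated contraction $T\eta_{\text{upd}}(1-\mu)$ is exactly what produces the $R_u^{5/2}m^{-1/4}$ term in (\ref{thm:meta:eq1}), with no exponential blow-up. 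Two smaller remarks: the contraction of $\Pi_{\mathcal{F}_{R_u,m}}\mathcal{T}$ in $L^2(d)$ requires $d$ to be the stationary distribution of the transition kernel appearing in $\mathcal{T}$ (in this paper's application $\mu=0$, so the point is moot for the SGD step and only matters for the TD step), and the theorem's hypothesis conflates $T$ and $T_{\text{upd}}$ --- a notational slip inherited from the paper, not from you.
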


\begin{proof}[Proof of Lemma \ref{lm:PI_error}]
Now we are ready to prove Lemma \ref{lm:PI_error} as follows. To begin with, (\ref{eq:meta-algo 1})-(\ref{eq:meta-algo 2}) match the policy improvement update of {Neural PPO-Clip} if we put $u(s, a) = f(s, a)$, $v(s, a) = \tau_{t+1} (C_t(s, a) \cdot A_{
\omega_t}(s, a) + \tau_t^{-1} f_{\theta_t}(s, a))$, $\mu = 0$, $d = \tilde{\sigma}_t$, and $R_{u} = R_f$. For $\mathbb{E}_{\tilde{\sigma}_t}[(v(s,a))^2]$, we have
\begin{align}
    \mathbb{E}_{\tilde{\sigma}_t}[(v(s,a))^2] &\le 2\tau_{t+1}^2 (U_{C}^2 \cdot \mathbb{E}_{\tilde{\sigma}_t}[(A_{\omega_t}(s, a))^2] + \tau^{-2}_t \mathbb{E}_{\tilde{\sigma}_t}[(f_{\theta_t}(s, a))^2]) \\
    &\le 20 \mathbb{E}_{\tilde{\sigma}_t}[(f_{\theta_0}(s, a))^2] + 20 R_f^2.\label{eq:PI_error 1}
\end{align}
{Here, since $C_t$ and $\bar{C_t}$ are dependent only on the EMDA step size $\eta$ and the indicator function that depends on the sign of the advantage (either under the true advantage $A^{\pi_{\theta_t}}$ or the approximated advantage $A_{\omega_t}$), one can always find one common upper bound $U_C$ for both $C_t$ and $\bar{C_t}$.
In particular, as shown in Corollary \ref{cor:PPO-Clip}, we set $U_{C} = \sum_{k=0}^{K-1} \eta$ for PPO-Clip, which is independent from the advantage function.}
{
The inequality in (\ref{eq:PI_error 1}) holds by the condition that $\tau_{t+1}^2 (U_{C}^2 + \tau_{t}^{-2}) \le 1$, $(a + b)^2 \le 2a^2 + 2b^2$, $\mathbb{E}_{\tilde{\sigma}_t}[(A_{\omega_t}(s, a))^2] \le 4 \mathbb{E}_{\tilde{\sigma}_t}[(Q_{\omega_t}(s, a))^2] $, and $\mathbb{E}_{\tilde{\sigma}_t}[(u_{\alpha_t}(s, a))^2] \le 2 \mathbb{E}_{\tilde{\sigma}_t}[(u_{\alpha_0}(s, a))^2] + 2 R_f^2$ which holds by using the Lipschitz property of neural networks where $u_{\alpha} = f_{\theta}, A_{\omega}$.
The condition $\tau_{t+1}^2 (U_{C}^2 + \tau_{t}^{-2}) \le 1$ can be satisfied by configuring proper $\{\tau_t\}$, as described momentarily in Appendix \ref{app:add:cor}.}
We also use that $\mathbb{E}_{\tilde{\sigma}_t}[Q_{\omega(0)}] = \mathbb{E}_{\tilde{\sigma}_t}[f_{\theta(0)}]$ because they share the same initialization. Thus, we have $\bar{v}_1 = \bar{v}_2 = 20$ and $\bar{v}_3 = 0$ in (\ref{condition:v}).

Due to that $\theta^*$ is the approximate stationary point, we have $f^0_{\theta^*} = \prod_{\mathcal{F}_{R_f, m_f}}\mathcal{T}f^0_{\theta^*} = \prod_{\mathcal{F}_{R_f, m_f}} \tau_{t+1} (C_t \circ A_{\omega_t} + \tau_t^{-1} f_{\theta_t})$. Thus, 
\begin{align}
    f^0_{\theta^*} = \mathop{\arg\min}_{f \in \mathcal{F}_{R_f, m_f}} \lVert f - \tau_{t+1} (C_t \circ A_{\omega_t} + \tau_t^{-1} f_{\theta_t})\rVert_{2, \tilde{\sigma}_t},
\end{align}
where $\lVert \cdot \rVert_{2, \tilde{\sigma}_t} = \mathbb{E}_{\text{init,}\tilde{\sigma}_t}[\lVert \cdot \rVert_{2}]^{1/2}$ is the $\tilde{\sigma}_t$-weighted $\ell_2$-norm.
Then, by the fact that $\tau_{t+1} (C_t(s, a) \cdot A_{\omega_t}^0(s, a) + \tau_t^{-1} f_{\theta_t}^0(s, a))\in\mathcal{F}_{R_f, m_f}$ and that $A_{\omega_t}^0(s, a) = Q_{\omega_t}^0(s, a) - \sum_{a\in\mathcal{A}} \pi(a|s)Q_{\omega_t}^0(s, a)$, we obtain
\begin{align}
    &\mathbb{E}_{\text{init,}\tilde{\sigma}_t}[(f^0_{\theta^*}(s, a) - \tau_{t+1} (C_t(s, a) \cdot A_{\omega_t}(s, a) + \tau_t^{-1} f_{\theta_t}(s, a)))^2] \\
    &\le \mathbb{E}_{\text{init,}\tilde{\sigma}_t}[(\tau_{t+1} (C_t(s, a)  A_{\omega_t}^0(s, a) + \tau_t^{-1} f_{\theta_t}^0(s, a)) - (\tau_{t+1} (C_t(s, a) A_{\omega_t}(s, a) + \tau_t^{-1} f_{\theta_t}(s, a))))^2] \\
    &\le 2\tau_{t+1}^2 U_{C}^2 \mathbb{E}_{\text{init,}\tilde{\sigma}_t}[((Q_{\omega_t}^0(s, a) - \sum_{a' \in\mathcal{A}} \pi(a'|s)Q_{\omega_t}^0(s, a')) - (Q_{\omega_t}(s, a) - \sum_{a'in\mathcal{A}} \pi(a'|s)Q_{\omega_t}(s, a')))^2] \nonumber \\
    &\qquad + 2\tau_{t+1}^2 \tau_{t}^{-2} \mathbb{E}_{\text{init,}\tilde{\sigma}_t}[(f_{\theta_t}^0(s, a) - f_{\theta_t}(s, a))^2] \\
    \label{eq:82}
    &\le 8 \tau_{t+1}^2 U_{C}^2 \mathbb{E}_{\text{init,}\tilde{\sigma}_t}[(Q_{\omega_t}^0(s, a) - Q_{\omega_t}(s, a))^2] + 2\tau_{t+1}^2 \tau_{t}^{-2} \mathbb{E}_{\text{init,}\tilde{\sigma}_t}[(f_{\theta_t}^0(s, a) - f_{\theta_t}(s, a))^2] \\
    \label{short:result}
    &= O(R_f^3 m_f^{-1/2}).
\end{align}
We obtain (\ref{eq:82}) as the same reason in (\ref{eq:PE_error 1})-(\ref{eq:PE_error 5}) in the proof of Lemma \ref{lm:PE_error}. The terms in (\ref{eq:82}) are both the designated form as the (\ref{thm:meta:eq2}), we leverage the (\ref{thm:meta:eq2}) in Theorem \ref{thm:meta} and obtain the result in (\ref{short:result}).

Last, we bound the error of our policy improvement, we have
\begin{align}
    &\mathbb{E}_{\text{init,}\tilde{\sigma}_t}[(f_{\bar{\theta}}(s, a) - \tau_{t+1} \cdot (C_t(s, a) \cdot A_{\omega_t}(s, a) + \tau_t^{-1} f_{\theta_t}(s, a)))^2] \\
    \label{eq1:final:meta}
    &\le 2\mathbb{E}_{\text{init,}\tilde{\sigma}_t}[(f_{\bar{\theta}}(s, a) - f^0_{\theta^*}(s, a))^2] \\
    \label{eq2:final:meta}
    &\qquad + 2 \mathbb{E}_{\text{init,}\tilde{\sigma}_t}[(f^0_{\theta^*}(s, a) - \tau_{t+1} (C_t(s, a) \cdot A_{\omega_t}(s, a) + \tau_t^{-1} f_{\theta_t}(s, a)))^2] \\
    \label{meta:result}
    &= O(R_f^2 T_{\text{upd}}^{-1/2} + R_f^{5/2} m_f^{-1/4} + R_f^3 m_f^{-1/2}),
\end{align}
where (\ref{eq1:final:meta}) is bounded as $O(R_f^2 T_{\text{upd}}^{-1/2} + R_f^{5/2} m_f^{-1/4} + R_f^3 m_f^{-1/2})$ by (\ref{thm:meta:eq1}) of Theorem \ref{thm:meta}, and (\ref{eq2:final:meta}) is bounded as $O(R_f^3 m_f^{-1/2})$ by the derivation of (\ref{short:result}). Thus, we obtain (\ref{meta:result}) and complete the proof.
\end{proof}

\begin{lemma}[Error Probability of Advantage]
\label{lm:EPA}
    {Given the policy $\pi_{\theta_t}$, the probability of the event that the advantage error is greater than $\epsilon_{\text{err}}$ can be bounded as}
    \begin{align}
        \mathbb{P}(|A_{\omega_t}(s, a) - A^{\pi_{\theta_t}}(s, a)| > \epsilon_{\text{err}}) \le \frac{\mathbb{E}_{\text{init,}\sigma_t}[(A_{\omega_t}(s, a) - A^{\pi_{\theta_t}}(s, a))^2]}{\epsilon_{\text{err}}^2}.\label{eq:Markov inequality}
    \end{align}
\end{lemma}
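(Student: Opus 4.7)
The statement is essentially Chebyshev's inequality (i.e., Markov's inequality applied to a squared random variable), so the plan is short and direct.

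The approach is to introduce the non-negative random variable $Z := (A_{\omega_t}(s,a) - A^{\pi_{\theta_t}}(s,a))^2$, where the randomness is jointly over the neural network initialization and the state-action sample $(s,a) \sim \sigma_t$ (matching the measure used in the expectation on the right-hand side of the claim). Since $Z \ge 0$ almost surely and $\mathbb{E}[Z] = \mathbb{E}_{\text{init,}\sigma_t}[(A_{\omega_t}(s,a) - A^{\pi_{\theta_t}}(s,a))^2]$ is the quantity already shown to be finite via Lemma \ref{lm:PE_error}, Markov's inequality applies directly.

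First I would observe the event identity
\begin{equation}
\{\,|A_{\omega_t}(s,a) - A^{\pi_{\theta_t}}(s,a)| > \epsilon_{\text{err}}\,\} \;=\; \{\,Z > \epsilon_{\text{err}}^2\,\},
\end{equation}
which holds because $x \mapsto x^2$ is strictly monotone on $[0,\infty)$ and $\epsilon_{\text{err}} > 0$. Second, I would apply Markov's inequality to the non-negative random variable $Z$ at the threshold $\epsilon_{\text{err}}^2$, giving
\begin{equation}
\mathbb{P}(Z > \epsilon_{\text{err}}^2) \;\le\; \frac{\mathbb{E}[Z]}{\epsilon_{\text{err}}^2}.
\end{equation}
Chaining these two facts yields the bound stated in the lemma.

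There is no substantive obstacle here: the only small point to be careful about is making explicit that the probability and the expectation are taken with respect to the same underlying measure (the product of the initialization law and $\sigma_t$), so that the right-hand side is precisely the squared $L^2(\text{init}\otimes\sigma_t)$ error controlled in Lemma \ref{lm:PE_error}. Once this measure-theoretic alignment is noted, the inequality follows in one line, and the lemma then serves downstream as the bridge that converts the $L^2$ policy-evaluation error bound into a high-probability statement about the advantage estimate (which is what the clipping/indicator analysis in the main proof needs).
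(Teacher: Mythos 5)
Your proposal is correct and matches the paper's own proof essentially verbatim: the paper likewise rewrites the event as $\{|A_{\omega_t}(s,a)-A^{\pi_{\theta_t}}(s,a)|^2 > \epsilon_{\text{err}}^2\}$ and applies Markov's inequality to the squared error. Your extra remark about aligning the probability measure with the $\text{init}\otimes\sigma_t$ expectation is a sensible clarification of the same argument, not a different route.
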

\begin{proof}[Proof of Lemma \ref{lm:EPA}]
    By applying Markov's inequality, we have
    \begin{align}
        \mathbb{P}(|A_{\omega_t}(s, a) - A^{\pi_{\theta_t}}(s, a)| > \epsilon_{\text{err}}) &= \mathbb{P}(|A_{\omega_t}(s, a) - A^{\pi_{\theta_t}}(s, a)|^2 > \epsilon_{\text{err}}^2) \\
        &\le \frac{\mathbb{E}[(A_{\omega_t}(s, a) - A^{\pi_{\theta_t}}(s, a))^2]}{\epsilon_{\text{err}}^2}.
    \end{align}
\end{proof}
\noindent Notice that the randomness of the above event in (\ref{eq:Markov inequality}) comes from the state-action visitation distribution $\sigma_t$ and the initialization of the neural networks.

\subsection{Proof of Lemma \ref{lm:ep}}
\label{app:neural:ep}
For ease of exposition, we restate Lemma \ref{lm:ep} as follows. 
{In the following, we slightly abuse the notations $ \mathbb{E}_{\tilde{\sigma}_t}$, $\mathbb{E}_{\sigma_t}$, and $\mathbb{E}_{\nu^*}$ to denote the expectations (over the respective distribution) conditioned on the policy $\pi_{\theta_t}$.}
\begin{lemmastar}[Error Propagation]
    Let $\pi_{t+1}$ be the target policy obtained by EMDA with the true advantage. Suppose the policy improvement error satisfies 
    \begin{align}
    \label{lm:ep:proof eq1}
        \mathbb{E}_{\tilde{\sigma}_t}&[(f_{\theta_{t+1}}(s, a) - \tau_{t+1} \cdot (C_t(s, a) \cdot A_{\omega_t}(s, a) + \tau_t^{-1} f_{\theta_t}(s, a)))^2] \le \epsilon_{t+1},
    \end{align}
    and the policy evaluation error satisfies
    \begin{align}
    \label{lm:ep:proof eq2}
        \mathbb{E}_{\sigma_t}[(A_{\omega_t}(s, a) - A^{\pi_{\theta_t}}(s, a))^2] \le \epsilon_t'.
    \end{align}
    Then, the following holds, 
    \begin{align}
    \label{lm:ep:proof eq3}
        |\mathbb{E}_{\nu^*}[\langle \log\pi_{\theta_{t+1}}(\cdot|s) - \log  \pi_{t+1}(\cdot|s), \pi^*(\cdot|s) - \pi_{\theta_t}(\cdot|s) \rangle]| \le \varepsilon_t + \varepsilon_{\text{err}}
    \end{align}
    where $\varepsilon_t = C_{\infty} \tau_{t+1}^{-1} \phi^* \epsilon_{t+1}^{1/2} + U_{C} X^{1/2} \psi^* \epsilon_t'^{1/2}$ and $\varepsilon_{\text{err}} = \sqrt{2} U_{C} \epsilon_{\text{err}} \psi^*$, and $X = \left[(2 / \epsilon_{\text{err}}^2)(M' + (R_{\max} / (1 - \gamma))^2 - \epsilon_t'/2)\right]$, and $M' = 4\mathbb{E}_{\nu_t}[\max_{a} (Q_{\omega_0}(s, a))^2] + 4R_f^2$.
\end{lemmastar}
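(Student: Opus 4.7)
My plan is to cancel the state-dependent normalizing constants, split the error into a policy-improvement piece and an EMDA-target-mismatch piece, and bound each separately via change of measure. Since $\log\pi_{\theta_{t+1}}(a|s)=\tau_{t+1}^{-1}f_{\theta_{t+1}}(s,a)+c(s)$ and $\log\pi_{t+1}(a|s)=\bar C_t(s,a)A^{\pi_{\theta_t}}(s,a)+\tau_t^{-1}f_{\theta_t}(s,a)+c'(s)$, the state-only terms vanish against $\pi^*(\cdot|s)-\pi_{\theta_t}(\cdot|s)$, so it suffices to bound $\mathbb{E}_{\nu^*}[\langle D(s,\cdot),\pi^*(\cdot|s)-\pi_{\theta_t}(\cdot|s)\rangle]$ where $D=\tau_{t+1}^{-1}f_{\theta_{t+1}}-\bar C_tA^{\pi_{\theta_t}}-\tau_t^{-1}f_{\theta_t}$. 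I would then add and subtract $C_t A_{\omega_t}$ inside $D$ so that
\[
D=\tau_{t+1}^{-1}\bigl[f_{\theta_{t+1}}-\tau_{t+1}(C_t A_{\omega_t}+\tau_t^{-1}f_{\theta_t})\bigr]+\bigl[C_t A_{\omega_t}-\bar C_t A^{\pi_{\theta_t}}\bigr],
\]
which yields two terms Term~(a) and Term~(b) whose magnitudes align with $\varepsilon_t$ and $\varepsilon_{\text{err}}$.

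For Term~(a), I would change measure from $\nu^*$ to $\tilde\sigma_t=\nu_t\pi_0$ by inserting the importance factor $\tfrac{d\pi^*}{d\pi_0}-\tfrac{d\pi_{\theta_t}}{d\pi_0}$ at the action level and $\tfrac{d\nu^*}{d\nu_t}$ at the state level, bound the latter uniformly by $C_\infty$, and invoke Cauchy--Schwarz to split into $\mathbb{E}_{\tilde\sigma_t}[(f_{\theta_{t+1}}-\tau_{t+1}(C_tA_{\omega_t}+\tau_t^{-1}f_{\theta_t}))^2]^{1/2}\le\epsilon_{t+1}^{1/2}$ (by the hypothesis in \eqref{lm:ep:proof eq1}) and the density-ratio norm bounded by $\phi^*$ (Assumption~\ref{assump:con}). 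This delivers the first piece $C_\infty\tau_{t+1}^{-1}\phi^*\epsilon_{t+1}^{1/2}$ of $\varepsilon_t$.

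For Term~(b), I would change measure from $\nu^*$ to $\sigma_t$, so that $\mathbb{E}_{\nu^*}[\langle g(s,\cdot),\pi^*-\pi_{\theta_t}\rangle]=\mathbb{E}_{\sigma_t}[g(s,a)(\tfrac{d\sigma^*}{d\sigma_t}-\tfrac{d\nu^*}{d\nu_t})]$ with $g=C_tA_{\omega_t}-\bar C_tA^{\pi_{\theta_t}}$, and then apply Cauchy--Schwarz using $\psi^*$. To bound $\mathbb{E}_{\sigma_t}[g^2]^{1/2}$ I would split on the good event $E=\{|A_{\omega_t}-A^{\pi_{\theta_t}}|\le\epsilon_{\text{err}}\}$. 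On $E$, the sign of the approximated and true advantages agree wherever $|A^{\pi_{\theta_t}}|>\epsilon_{\text{err}}$, so the indicators inside the hinge-loss gradients in Algorithm~\ref{algo:2} coincide, giving $C_t=\bar C_t$ and hence $|g|\le U_C|A_{\omega_t}-A^{\pi_{\theta_t}}|\le U_C\epsilon_{\text{err}}$; this contributes the $\sqrt{2}\,U_C\epsilon_{\text{err}}\psi^*=\varepsilon_{\text{err}}$ term after the triangle-inequality split $\mathbb{E}[g^2]^{1/2}\le\mathbb{E}[g^2\mathbf 1_E]^{1/2}+\mathbb{E}[g^2\mathbf 1_{E^c}]^{1/2}$. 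On $E^c$, I would use the crude bound $|g|\le U_C(|A_{\omega_t}|+|A^{\pi_{\theta_t}}|)$ together with $\mathbb{E}_{\sigma_t}[A_{\omega_t}^2]\le M'$ (via the Lipschitz property used in Lemma~\ref{lm:PI_error}), $|A^{\pi_{\theta_t}}|\le R_{\max}/(1-\gamma)$, and Lemma~\ref{lm:EPA} which gives $\mathbb{P}(E^c)\le\epsilon_t'/\epsilon_{\text{err}}^2$; this produces the $U_C X^{1/2}\psi^*\epsilon_t'^{1/2}$ contribution, with $X$ exactly the factor $(2/\epsilon_{\text{err}}^2)(M'+(R_{\max}/(1-\gamma))^2-\epsilon_t'/2)$.

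The main obstacle I anticipate is the coupling in $g$ between $C_t$ and $A_{\omega_t}$, because $C_t$ is built iteratively from hinge-loss gradients whose indicators depend on $\operatorname{sign}(A_{\omega_t})$. The key observation I will need to formalize is that these indicators are unchanged whenever $|A^{\pi_{\theta_t}}|>\epsilon_{\text{err}}\ge|A_{\omega_t}-A^{\pi_{\theta_t}}|$, so $C_t=\bar C_t$ pointwise on the good event and the stability argument reduces to a one-Lipschitz bound in the advantage. The remaining pieces are then standard Cauchy--Schwarz, Markov's inequality, and concentrability manipulations, which combine to the stated $\varepsilon_t+\varepsilon_{\text{err}}$ bound in \eqref{lm:ep:proof eq3}.
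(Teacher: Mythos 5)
Your proposal follows essentially the same route as the paper's proof: cancel the state-dependent log-normalizers, decompose into the neural-approximation term (bounded via change of measure to $\tilde\sigma_t$, Cauchy--Schwarz, $C_\infty$ and $\phi^*$) and the advantage-mismatch term $C_t A_{\omega_t}-\bar C_t A^{\pi_{\theta_t}}$ (bounded via change of measure to $\sigma_t$, $\psi^*$, and a split on the event $|A_{\omega_t}-A^{\pi_{\theta_t}}|\le\epsilon_{\text{err}}$ together with Markov's inequality). The only detail to add is that on the good event you must also handle the sub-case $|A^{\pi_{\theta_t}}|\le\epsilon_{\text{err}}$, where $C_t$ and $\bar C_t$ need not coincide but both advantages are $O(\epsilon_{\text{err}})$, which is precisely how the paper obtains its additional $O(U_C^2\epsilon_{\text{err}}^2)$ contribution.
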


\begin{remarkapp}
\label{remark:varepislon}
    Notice that $\epsilon_{t+1}$ in (\ref{lm:ep:proof eq1}) and $\epsilon_t'$ in (\ref{lm:ep:proof eq2}) can be controlled by the width of neural networks and the number of iteration for each SGD and TD updates based on Lemma \ref{lm:PE_error} and \ref{lm:PI_error}. Therefore, $\varepsilon_t$ could be made sufficiently small per our requirement.
\end{remarkapp}

\begin{proof}[Proof of Lemma \ref{lm:ep}]
{For ease of exposition, let us first fix a policy $\pi_{\theta_t}$. Through the analysis, we will show that one can derive an upper bound (in the form of (\ref{lm:ep:proof eq3})) that holds regardless of the policy $\pi_{\theta_t}$. 
Recall that $C_t(s, a)= -\sum_{k=0}^{K^{(t)}-1} \eta g_{s,a}^{(k)}$, where $g_{s,a}^{(k)}$ is obtained in the EMDA subroutine and depends on the sign of the estimated advantage $A_{\omega_t}$. Similarly, we define $\bar{C_t}(s, a)$ as the counterpart of $C_t(s,a)$ by replacing $A_{\omega_t}$ with the true advantage $A^{\pi_{\theta_t}}$.}
We first simplify $\langle \log\pi_{\theta_{t+1}}(\cdot|s) - \log  \pi_{t+1}(\cdot|s), \pi^*(\cdot|s) - \pi_{\theta_t}(\cdot|s) \rangle$. The normalizing factor $Z$ of the policies $\pi_{\theta_{t+1}}$ and $\pi_{t+1}$ is state-dependent, {and the inner product between any state-dependent function and the policy difference $\pi^*(\cdot|s) - \pi_{\theta_t}(\cdot|s)$ is always zero.}
Thus, we have
\begin{align}
    \langle \log\pi_{\theta_{t+1}}(\cdot|s) &- \log  \pi_{t+1}(\cdot|s), \pi^*(\cdot|s) - \pi_{\theta_t}(\cdot|s) \rangle  \\
    &= \langle \tau_{t+1}^{-1} f_{\theta_{t+1}}(s, \cdot) - (\bar{C_t}(s, \cdot) \circ A^{\pi_{\theta_t}}(s, \cdot) + \tau_{t}^{-1} f_{\theta_{t}}(s, \cdot)), \pi^*(\cdot|s) - \pi_{\theta_t}(\cdot|s) \rangle.
\end{align}
Then, {we decompose the above equation into two terms:} (i) the error in the policy improvement and (ii) the error between the true advantage and the approximated advantage, i.e., 
\begin{align}
    \langle \tau_{t+1}^{-1} &f_{\theta_{t+1}}(s, \cdot) - (C_t(s, \cdot) A^{\pi_{\theta_t}}(s, \cdot) + \tau_{t}^{-1} f_{\theta_{t}}(s, \cdot)), \pi^*(\cdot|s) - \pi_{\theta_t}(\cdot|s) \rangle \label{eq:ep proof 1}\\
    &= \langle \tau_{t+1}^{-1} f_{\theta_{t+1}}(s, \cdot) - (\bar{C}_t(s, \cdot) \circ A_{\omega_t}(s, \cdot) + \tau_{t}^{-1} f_{\theta_{t}}(s, \cdot)), \pi^*(\cdot|s) - \pi_{\theta_t}(\cdot|s) \rangle  \label{eq:ep proof 2}\\
    &\qquad + \langle C_t(s, \cdot) \circ A_{\omega_t}(s, \cdot) - \bar{C_t}(s, a) \circ A^{\pi_{\theta_t}}(s, \cdot), \pi^*(\cdot|s) - \pi_{\theta_t}(\cdot|s) \rangle \label{eq:ep proof 3}
\end{align}
We first bound the expectation of (i) over $\nu^*$ as follows.
\begin{align}
    &|\mathbb{E}_{\nu^*}[\langle \tau_{t+1}^{-1} f_{\theta_{t+1}}(s, \cdot) - (C_t(s, \cdot) A_{\omega_t}(s, \cdot) + \tau_{t}^{-1} f_{\theta_{t}}(s, \cdot)), \pi^*(\cdot|s) - \pi_{\theta_t}(\cdot|s) \rangle]|  \label{eq:ep proof 4}\\
    &= \left|\int_{\mathcal{S}} \langle \tau_{t+1}^{-1} f_{\theta_{t+1}}(s, \cdot) - (C_t(s, \cdot) \circ A_{\omega_t}(s, \cdot) + \tau_{t}^{-1} f_{\theta_{t}}(s, \cdot)), \pi^*(\cdot|s) - \pi_{\theta_t}(\cdot|s) \rangle \cdot \nu^*(s) ds \right|  \label{eq:ep proof 5}\\
    &= \left|\int_{\mathcal{S} \times \mathcal{A}} (\tau_{t+1}^{-1} f_{\theta_{t+1}}(s, a) - (C_t(s, a) A_{\omega_t}(s, a) + \tau_{t}^{-1} f_{\theta_{t}}(s, a)))  \left(\frac{\pi^*(a|s)}{\pi_0(a|s)} - \frac{\pi_{\theta_t}(a|s)}{\pi_0(a|s)}\right) \frac{\nu^*(s)}{\nu_t (s)} d \tilde{\sigma}_t(s, a) \right|  \label{eq:ep proof 6}\\
    &\le C_{\infty}\mathbb{E}_{\tilde{\sigma}_t}\left[(\tau_{t+1}^{-1} f_{\theta_{t+1}}(s, a) - (C_t(s, a) A_{\omega_t}(s, a) + \tau_{t}^{-1} f_{\theta_{t}}(s, a)))^2\right]^{1/2} \cdot \mathbb{E}_{\tilde{\sigma}_t}\left[\left|\frac{d \pi^*}{d\pi_0} - \frac{d \pi_{\theta_t}}{d \pi_0}\right|^2\right]^{1/2}  \label{eq:ep proof 7}\\
    &\le C_{\infty} \tau_{t+1}^{-1}\epsilon_{t+1}^{1/2} \phi^*_t, \label{eq:ep proof 8}
\end{align}
{where (\ref{eq:ep proof 6}) follows from the definition of $\tilde{\sigma}_t$, (\ref{eq:ep proof 7}) is obtained by Cauchy-Schwarz inequality and Assumption \ref{assump:con}, and the last inequality in (\ref{eq:ep proof 8}) holds by the condition in (\ref{lm:ep:eq1}) and that $\lVert \nu^* / \nu\rVert_{\infty}< C_{\infty}$.}

{Similarly, we consider the expectation of (ii) over $\nu^*$ as follows.}
\begin{align}
    &|\mathbb{E}_{\nu^*}[\langle C_t(s, \cdot) \circ A_{\omega_t}(s, \cdot) - \bar{C_t}(s, \cdot) \circ A^{\pi_{\theta_t}}(s, \cdot), \pi^*(\cdot|s) - \pi_{\theta_t}(\cdot|s) \rangle]|  \label{eq:ep proof 9}\\
    &= \left|\int_{\mathcal{S}} \langle C_t(s, \cdot) \circ A_{\omega_t}(s, \cdot) - \bar{C_t}(s, \cdot) \circ A^{\pi_{\theta_t}}(s, \cdot), \pi^*(\cdot|s) - \pi_{\theta_t}(\cdot|s) \rangle \nu^*(s) ds \right|  \label{eq:ep proof 10}\\
    &= \left|\int_{\mathcal{S} \times \mathcal{A}} (C_t(s, a) A_{\omega_t}(s, a) - \bar{C_t}(s, a)  A^{\pi_{\theta_t}}(s, a)) \left(\frac{\pi^*(a|s)}{\pi_{\theta_t}(a|s)} - \frac{\pi_{\theta_t}(a|s)}{\pi_{\theta_t}(a|s)}\right) \frac{\nu^*(s)}{\nu_t(s)} d\sigma_t(s, a)\right|  \label{eq:ep proof 11}\\
    &= \left|\int_{\mathcal{S} \times \mathcal{A}} (C_t(s, a) A_{\omega_t}(s, a) - \bar{C_t}(s, a)  A^{\pi_{\theta_t}}(s, a)) \left(\frac{\sigma^*(s, a)}{\sigma_t(s, a)} - \frac{\nu^*(s)}{\nu_t(s)}\right) d\sigma_t(s, a)\right|  \label{eq:ep proof 12}\\
    &\le \mathbb{E}_{\sigma_t}[(C_t(s, a) A_{\omega_t}(s, a) - \bar{C_t}(s, a)  A^{\pi_{\theta_t}}(s, a))^2]^{1/2} \cdot \mathbb{E}_{\sigma_t}\left[\left|\frac{d \sigma^*}{d \sigma_t} - \frac{d \nu^*}{d \nu_t}\right|^2\right]^{1/2}, \label{eq:ep proof 13}
\end{align}
where (\ref{eq:ep proof 13}) holds by the Cauchy-Schwarz inequality.
Next, we bound for the term $\mathbb{E}_{\sigma_t}[(C_t(s, a) A_{\omega_t}(s, a) - \bar{C_t}(s, a)  A^{\pi_{\theta_t}}(s, a))^2]$.
For ease of notation, let $D = (C_t(s, a) A_{\omega_t}(s, a) - \bar{C_t}(s, a)  A^{\pi_{\theta_t}}(s, a))^2$ and simply write $\mathbb{E}_{\text{init, }\sigma_t}$ as $\mathbb{E}$. 
{Also, we slightly abuse the notation by using $A_{\omega_t}$ as the random variable $A_{\omega_t}(s, a)$, whose randomness results from the state-action pairs sampled from $\sigma_t$ and the initialization of neural networks, and using $A^{\pi_{\theta_t}}$ as the random variable $A^{\pi_{\theta_t}}(s, a)$, whose randomness comes from the state-action pairs sampled from $\sigma_t$.} 
To establish the bound of $\mathbb{E}[D]$, we consider two different cases for $\mathbb{E}[D]$: one is that the error is greater than $\epsilon_{\text{err}}$, and the other is that the error is less than or equal to $\epsilon_{\text{err}}$. 
Specifically,
\begin{align}
    \mathbb{E}[D] &= \mathbb{E}[D \mid |A_{\omega_t} - A^{\pi_{\theta_t}}| > \epsilon_{\text{err}}] \cdot \mathbb{P}(|A_{\omega_t} - A^{\pi_{\theta_t}}| > \epsilon_{\text{err}}) \nonumber\\
    \label{lm:ep:proof:eq1}
    &\quad + \mathbb{E}[D \mid |A_{\omega_t} - A^{\pi_{\theta_t}}| \le \epsilon_{\text{err}}] \cdot \mathbb{P}(|A_{\omega_t} - A^{\pi_{\theta_t}}| \le \epsilon_{\text{err}})
\end{align}
Then, we upper bound the two terms in \ref{lm:ep:proof:eq1} separately. 
Regarding the first term in \ref{lm:ep:proof:eq1}, we have
\begin{align}
    \mathbb{E}[&D \mid |A_{\omega_t} - A^{\pi_{\theta_t}}| > \epsilon_{\text{err}}] \cdot \mathbb{P}(|A_{\omega_t} - A^{\pi_{\theta_t}}| > \epsilon_{\text{err}}) \nonumber \\
    &\le 2 U_{C}^2 (\mathbb{E}_{\nu_t}[\lVert A_{\omega_t}(s,\cdot)\rVert_{\infty}^{2}] + (A^{\pi_{\theta_t}}_{\max})^2) \cdot \mathbb{P}(|A_{\omega_t} - A^{\pi_{\theta_t}}| > \epsilon_{\text{err}}),\label{eq:ep proof 14}
\end{align}
where (\ref{eq:ep proof 14}) holds by that $(a + b)^2 \le 2a^2 + 2b^2$.
Next, regarding the second term in \ref{lm:ep:proof:eq1}, we further consider two cases based on whether the absolute value of $A^{\pi_{\theta_t}}$ is greater than $\epsilon_{\text{err}}$ or not. 
Specifically,
\begin{align}
    &\mathbb{E}[D \mid |A_{\omega_t} - A^{\pi_{\theta_t}}| \le \epsilon_{\text{err}}] \nonumber \\
    &= \mathbb{E}[D \mid |A_{\omega_t} - A^{\pi_{\theta_t}}| \le \epsilon_{\text{err}}, |A^{\pi_{\theta_t}}| > \epsilon_{\text{err}}] \cdot \mathds{1}\{|A^{\pi_{\theta_t}}| > \epsilon_{\text{err}}\} \nonumber \\
    &\quad + \mathbb{E}[D \mid |A_{\omega_t} - A^{\pi_{\theta_t}}| \le \epsilon_{\text{err}}, |A^{\pi_{\theta_t}}| \le \epsilon_{\text{err}}] \cdot \mathds{1}\{|A^{\pi_{\theta_t}}| \le \epsilon_{\text{err}}\} \label{eq:ep proof 15}\\
    &\le \mathbb{E}[D \mid |A_{\omega_t} - A^{\pi_{\theta_t}}| \le \epsilon_{\text{err}}, |A^{\pi_{\theta_t}}| > \epsilon_{\text{err}}] + \mathbb{E}[D \mid |A_{\omega_t} - A^{\pi_{\theta_t}}| \le \epsilon_{\text{err}}, |A^{\pi_{\theta_t}}| \le \epsilon_{\text{err}}] \label{eq:ep proof 16}\\
    &\le U_{C}^2 \cdot \mathbb{E}[(A_{\omega_t}(s, a) - A^{\pi_{\theta_t}}(s, a))^2] + 4 U_{C}^2 \epsilon_{\text{err}}^2 \label{eq:ep proof 17}
\end{align}
{where (\ref{eq:ep proof 15}) holds by the fact that we fix a policy $\pi_{\theta_t}$ as described in the beginning of Appendix \ref{app:neural:ep} and hence $A^{\pi_{\theta_t}}$ is determined, (\ref{eq:ep proof 16}) holds by that the indicator function is no larger than 1, the first term in (\ref{eq:ep proof 17}) holds by the fact that $A_{\omega_t}$ and $A^{\pi_{\theta_t}}$ have the same sign and hence $C_t$ is equal to $\bar{C}_t$, and the second term in (\ref{eq:ep proof 17}) follows from that $(a + b)^2 \le 2a^2 + 2b^2$.}
Then, by combining the above terms, we have
\begin{align}
    \mathbb{E}[D] &\le 2 U_{C}^2 (\mathbb{E}_{\nu_t}[\lVert A_{\omega_t}(s,\cdot)\rVert_{\infty}^{2}] + (A^{\pi_{\theta_t}}_{\max})^2) \cdot \mathbb{P}(|A_{\omega_t} - A^{\pi_{\theta_t}}| > \epsilon_{\text{err}}) \nonumber \\
    &\quad + [U_{C}^2 \cdot \mathbb{E}[(A_{\omega_t}(s, a) - A^{\pi_{\theta_t}}(s, a))^2] + 4 U_{C}^2 \epsilon_{\text{err}}^2] \cdot \mathbb{P}(|A_{\omega_t} - A^{\pi_{\theta_t}}| \le \epsilon_{\text{err}}) \\
    &= 2 U_{C}^2 (\mathbb{E}_{\nu_t}[\lVert A_{\omega_t}(s,\cdot)\rVert_{\infty}^{2}] + (A^{\pi_{\theta_t}}_{\max})^2) \cdot \mathbb{P}(|A_{\omega_t} - A^{\pi_{\theta_t}}| > \epsilon_{\text{err}}) \nonumber \\
    &\quad + [U_{C}^2 \cdot \mathbb{E}[(A_{\omega_t}(s, a) - A^{\pi_{\theta_t}}(s, a))^2] + 4 U_{C}^2 \epsilon_{\text{err}}^2] \cdot (1 - \mathbb{P}(|A_{\omega_t} - A^{\pi_{\theta_t}}| > \epsilon_{\text{err}}))
\end{align}
Recall that $\epsilon_t' = \mathbb{E}[(A_{\omega_t}(s, a) - A^{\pi_{\theta_t}}(s, a))^2]$. 
{As we could choose an $\epsilon_{\text{err}}$ small enough and use the neural network power to make $\epsilon_t'$ is also small by Lemma \ref{lm:PE_error} such that we have $2 U_{C}^2 (\mathbb{E}_{\nu_t}[\lVert A_{\omega_t}(s,\cdot)\rVert_{\infty}^{2}] + A^{\pi_{\theta_t}}_{\max}) > U_{C}^2 \epsilon_t' + 4 U_{C}^2 \epsilon_{\text{err}}^2$, then by Lemma \ref{lm:EPA} we have}
\begin{align}
\label{lm4:eq}
    \mathbb{E}[D] &\le 2 U_{C}^2 (\mathbb{E}_{\nu_t}[\lVert A_{\omega_t}(s,\cdot)\rVert_{\infty}^{2}] + (A^{\pi_{\theta_t}}_{\max})^2) \cdot \frac{\epsilon_t'}{\epsilon_{\text{err}}^2} + [U_{C}^2 \epsilon_t' + 4 U_{C}^2 \epsilon_{\text{err}}^2]\cdot(1 - \frac{\epsilon_t'}{\epsilon_{\text{err}}^2}).
\end{align}
Rearranging the terms in (\ref{lm4:eq}), we have
\begin{align}
    \mathbb{E}[D] &\le \epsilon_t' U_{C}^2 \cdot \left[\frac{2}{\epsilon_{\text{err}}^2}(M' + (A^{\pi_{\theta_t}}_{\max})^2 - \frac{\epsilon_t'}{2}) - 1\right] + 4 U_{C}^2 \epsilon_{\text{err}}^2 \\
    &\le \epsilon_t' U_{C}^2 \cdot \left[\frac{2}{\epsilon_{\text{err}}^2}(M' + (A^{\pi_{\theta_t}}_{\max})^2 - \frac{\epsilon_t'}{2})\right] + 4 U_{C}^2 \epsilon_{\text{err}}^2 
\end{align}
where $M' \coloneqq 4\mathbb{E}_{\nu_t}[\max_{a} (Q_{\omega_0}(s, a))^2] + 4R_f^2$. 
By introducing the notation $X = \left[(2 / \epsilon_{\text{err}}^2)(M' + (A^{\pi_{\theta_t}}_{\max})^2 - \epsilon_t'/2)\right]$ and combining all the above results, we have
\begin{align}
    |\mathbb{E}_{\nu^*}[\langle &\log\pi_{\theta_{t+1}}(\cdot|s) - \log  \pi_{t+1}(\cdot|s), \pi^*(\cdot|s) - \pi_{\theta_t}(\cdot|s) \rangle]| \\
    &\le C_{\infty} \tau_{t+1}^{-1}\epsilon_{t+1}^{1/2} \phi^*_t + (\epsilon_t' U_{C}^2 X + 4 U_{C}^2  \epsilon_{\text{err}}^2)^{1/2} \psi^*_t \\
    &\le \epsilon_{t+1}^{1/2} C_{\infty} \tau_{t+1}^{-1} \phi^*_t + \epsilon_t'^{1/2} U_{C} X^{1/2} \psi^*_t + 2 U_{C}  \epsilon_{\text{err}} \psi^*_t,\label{eq:ep proof 19} \\
    &< \epsilon_{t+1}^{1/2} C_{\infty} \tau_{t+1}^{-1} \phi^* + \epsilon_t'^{1/2} U_{C} X^{1/2} \psi^* + 2 U_{C}  \epsilon_{\text{err}} \psi^*,\label{eq:ep proof 20}
\end{align}
where (\ref{eq:ep proof 19}) follows from the inequality $\sqrt{a+b} \le \sqrt{a} + \sqrt{b}$ and that $\varepsilon_t = \epsilon_{t+1}^{1/2} C_{\infty} \tau_{t+1}^{-1} \phi^* + \epsilon_t'^{1/2} U_{C} X^{1/2} \psi^*$ and $\varepsilon_{\text{err}} = 2 U_{C}  \epsilon_{\text{err}} \psi^*$. The proof is complete.
\end{proof}

\subsection{Proof of Lemma \ref{lm:sed}}
For ease of exposition, we restate Lemma \ref{lm:sed} as follows.
\begin{lemmastar}[Stepwise Energy $\ell_{\infty}$-Difference]
    \begin{align}
        \mathbb{E}_{\nu^*}[\lVert\tau_{t+1}^{-1} f_{\theta_{t+1}}(s,\cdot) - \tau_{t}^{-1}f_{\theta_{t}}(s,\cdot)\rVert_{\infty}^2] \le 2\varepsilon'_t + 2 U_{C}^2 M,
    \end{align}
    where $\varepsilon'_t = |\mathcal{A}| \cdot C_{\infty} \tau_{t+1}^{-2} \epsilon_{t+1}$ and $M = 4\mathbb{E}_{\nu^*}[\max_{a} (Q_{\omega_0}(s, a))^2] + 4 R_f^2$.
\end{lemmastar}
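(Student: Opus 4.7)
The plan is to decompose the left-hand side around the ideal EMDA target $C_t(s,\cdot)\circ A_{\omega_t}(s,\cdot) + \tau_t^{-1} f_{\theta_t}(s,\cdot)$, which is exactly what $\tau_{t+1}^{-1} f_{\theta_{t+1}}$ is trained (by SGD in Algorithm~\ref{algo:4}) to match. Writing
\[
\tau_{t+1}^{-1} f_{\theta_{t+1}}(s,\cdot) - \tau_t^{-1} f_{\theta_t}(s,\cdot) = E(s,\cdot) + C_t(s,\cdot)\circ A_{\omega_t}(s,\cdot),
\]
with $E(s,\cdot) := \tau_{t+1}^{-1} f_{\theta_{t+1}}(s,\cdot) - \bigl(C_t(s,\cdot)\circ A_{\omega_t}(s,\cdot) + \tau_t^{-1} f_{\theta_t}(s,\cdot)\bigr)$, and applying $(x+y)^2 \le 2x^2 + 2y^2$ pointwise in $a$ before taking the $\ell_\infty$-norm and the expectation over $\nu^*$, I reduce the problem to bounding two terms: a \emph{policy-approximation} term $\mathbb{E}_{\nu^*}[\|E(s,\cdot)\|_\infty^2]$ and an \emph{ideal-step magnitude} term $\mathbb{E}_{\nu^*}[\|C_t(s,\cdot)\circ A_{\omega_t}(s,\cdot)\|_\infty^2]$.

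For the approximation term, I first bound $\|E(s,\cdot)\|_\infty^2 \le \sum_a E(s,a)^2$, and then use the uniform policy identity $1 = |\mathcal{A}|\,\pi_0(a|s)$ to rewrite the sum as $|\mathcal{A}|\,\mathbb{E}_{a \sim \pi_0(\cdot|s)}[E(s,a)^2]$. Changing the state measure from $\nu^*$ to $\nu_t$ costs a factor of $C_\infty$ by Assumption~\ref{assump:con}, yielding
\[
\mathbb{E}_{\nu^*}[\|E(s,\cdot)\|_\infty^2] \le |\mathcal{A}|\,C_\infty\, \mathbb{E}_{\tilde{\sigma}_t}[E(s,a)^2].
\]
Pulling $\tau_{t+1}^{-1}$ out of the square inside $E$ puts the expectation in exactly the form of the policy-improvement error assumption~(\ref{lm:ep:proof eq1}), so the right-hand side is at most $|\mathcal{A}|\,C_\infty\,\tau_{t+1}^{-2}\,\epsilon_{t+1} = \varepsilon'_t$.

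For the ideal-step term, I use the uniform bound $|C_t(s,a)| \le U_C$ granted by condition~(\ref{suff:1}) to get $\|C_t(s,\cdot)\circ A_{\omega_t}(s,\cdot)\|_\infty^2 \le U_C^2\,\|A_{\omega_t}(s,\cdot)\|_\infty^2$. Since $A_{\omega_t}(s,a) = Q_{\omega_t}(s,a) - \sum_{a'}\pi_{\theta_t}(a'|s)\,Q_{\omega_t}(s,a')$, the triangle inequality and $|V_{\omega_t}(s)| \le \max_a |Q_{\omega_t}(s,a)|$ give $\|A_{\omega_t}(s,\cdot)\|_\infty^2 \le 4\max_a Q_{\omega_t}(s,a)^2$. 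Decomposing $Q_{\omega_t} = Q_{\omega_0} + (Q_{\omega_t} - Q_{\omega_0})$, applying $(x+y)^2 \le 2x^2+2y^2$, and invoking the ball-projection bound on the weights (so that the deviation from the initialization contributes at most $R_f^2$ to the squared value, via the Lipschitz/linearization property of the two-layer NTK parameterization) produces $\max_a Q_{\omega_t}(s,a)^2 \le 2\max_a Q_{\omega_0}(s,a)^2 + 2R_f^2$. Taking $\mathbb{E}_{\nu^*}$ then yields $\mathbb{E}_{\nu^*}[\|A_{\omega_t}(s,\cdot)\|_\infty^2] \le M$, and combining the two pieces gives the claim.

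The main obstacle is the change-of-measure for the approximation term: the SGD guarantee in Lemma~\ref{lm:PI_error} is stated under $\tilde{\sigma}_t = \nu_t\,\pi_0$, whereas the target inequality is under $\nu^*$ and involves the $\ell_\infty$ norm over actions rather than an expectation. Recovering the precise factor $|\mathcal{A}|\,C_\infty\,\tau_{t+1}^{-2}$ requires simultaneously (i) turning the worst-over-$a$ quantity into a sum via $\|\cdot\|_\infty^2 \le \sum_a (\cdot)^2$, (ii) converting that sum into a $\pi_0$-expectation using uniformity, and (iii) swapping $\nu^*$ for $\nu_t$ through the concentrability assumption, all while keeping the SGD error expressed in the original $(f_{\theta_{t+1}} - \tau_{t+1}(\cdots))^2$ form so that $\epsilon_{t+1}$ can be applied directly. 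The ideal-step bound is comparatively routine once the ball-projection-based control of $Q_{\omega_t}$ around $Q_{\omega_0}$ is in place.
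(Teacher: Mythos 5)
Your proposal is correct and follows essentially the same route as the paper's proof: the same decomposition around the EMDA target $C_t\circ A_{\omega_t}+\tau_t^{-1}f_{\theta_t}$, the same conversion of the $\ell_\infty$-norm into a $\pi_0$-expectation plus a $C_\infty$ change of measure to invoke the SGD error bound, and the same $U_C$/advantage-to-$Q$/initialization argument for the second term. The only wrinkle is a factor-of-two bookkeeping issue in the last step (your own chain gives $\mathbb{E}_{\nu^*}[\lVert A_{\omega_t}(s,\cdot)\rVert_\infty^2]\le 2M$ rather than $M$), but the paper's proof exhibits the same looseness at the corresponding step, so this does not reflect a gap in your argument relative to theirs.
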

\begin{remarkapp}
    As described in Remark \ref{remark:varepislon}, $\epsilon_{t+1}$ can be sufficiently small due to Lemma \ref{lm:PI_error}. Similarly, $\varepsilon_t'$ can also be made arbitrarily small.
\end{remarkapp}

\begin{proof}[Proof of Lemma \ref{lm:sed}]
We first find an explicit bound for $\lVert \tau_{t+1}^{-1} f_{\theta_{t+1}}(s,\cdot) - \tau_{t}^{-1}f_{\theta_{t}}(s,\cdot)\rVert_{\infty}^2$. 
Note that
\begin{align}
\label{lm:sed:eq1}
    \lVert \tau_{t+1}^{-1} f_{\theta_{t+1}}(s,\cdot) - \tau_{t}^{-1}f_{\theta_{t}}(s,\cdot)\rVert_{\infty}^2 &\le 2 \lVert\tau_{t+1}^{-1} f_{\theta_{t+1}}(s,\cdot) - \tau_{t}^{-1}f_{\theta_{t}}(s,\cdot) - C_t(s, \cdot)\circ A_{\omega_t}(s,\cdot)\rVert_{\infty}^2 \\
    &\qquad + 2 \lVert C_t(s, \cdot)\circ A_{\omega_t}(s,\cdot)\rVert_{\infty}^{2}. \nonumber
\end{align}
Next, we consider the expectation of (\ref{lm:sed:eq1}) over $\nu^*$:
For the first term in (\ref{lm:sed:eq1}), we have
\begin{align}
    \mathbb{E}_{\nu^*}&[\lVert\tau_{t+1}^{-1} f_{\theta_{t+1}}(s,\cdot) - \tau_{t}^{-1}f_{\theta_{t}}(s,\cdot) - C_t(s, \cdot)\circ A_{\omega_t}(s,\cdot)\rVert_{\infty}^2] \\
    &= \int_{\mathcal{S}} \lVert\tau_{t+1}^{-1} f_{\theta_{t+1}}(s,\cdot) - \tau_{t}^{-1}f_{\theta_{t}}(s,\cdot) - C_t(s, \cdot)\circ A_{\omega_t}(s,\cdot)\rVert_{\infty}^2 \nu^*(s) ds \\
    &= \int_{\mathcal{S} \times \mathcal{A}} \frac{1}{\pi_0(a|s) }\cdot (\tau_{t+1}^{-1} f_{\theta_{t+1}}(s,a) - \tau_{t}^{-1}f_{\theta_{t}}(s,a) - C_t(s, a)\cdot A_{\omega_t}(s,a))^2 \frac{\nu^*(s)}{\nu_t(s)} d\tilde{\sigma_t}(s, a) \\
    &< |\mathcal{A}| \cdot C_{\infty} \tau_{t+1}^{-2} \epsilon_{t+1},\label{lm:sed:proof eq2}
\end{align}
where (\ref{lm:sed:proof eq2}) holds by the condition in (\ref{lm:ep:eq1}), the definition of the concentrability coefficient, and the fact that $\pi_0$ is a uniform policy.
Furthermore, we bound $ \mathbb{E}_{\nu^*}[\lVert C_t(s, \cdot)\circ A_{\omega_t}(s,\cdot)\rVert_{\infty}^{2}]$, we have
\begin{align}
    \mathbb{E}_{\nu^*}[\lVert C_t(s, \cdot)\circ A_{\omega_t}(s,\cdot)\rVert_{\infty}^{2}] &\le U_{C}^2 \cdot \mathbb{E}_{\nu^*}[\lVert A_{\omega_t}(s,\cdot)\rVert_{\infty}^{2}] \\
    &= U_{C}^2 \cdot \mathbb{E}_{\nu^*}[\lVert Q_{\omega_t}(s,\cdot) - \sum_{a} \pi_{\theta_t}(a|s) Q_{\omega_t}(s,a)\rVert_{\infty}^{2}] \\
    &= U_{C}^2 \cdot \mathbb{E}_{\nu^*}[\lVert Q_{\omega_t}(s,\cdot) - \mathbb{E}_{a \sim \pi_{\theta_t}}[ Q_{\omega_t}(s,a)]\rVert_{\infty}^{2}] \\
    &\le 2 U_C^2 \mathbb{E}_{\nu^*}[\lVert Q_{\omega_t}(s,\cdot)\rVert_{\infty}^2] + 2U_C^2\mathbb{E}_{\nu^*}[\mathbb{E}_{a \sim \pi_{\theta_t}}[ (Q_{\omega_t}(s,a))]^2] \\
    &\le 2 U_C^2 \mathbb{E}_{\nu^*}[\lVert Q_{\omega_t}(s,\cdot)\rVert_{\infty}^2] + 2 U_C^2 \mathbb{E}_{\nu^*}[\lVert Q_{\omega_t}(s,\cdot)\rVert_{\infty}^2] \label{eq:Q_t}\\
    &\le U_{C}^2 \cdot 4\mathbb{E}_{\nu^*}[\lVert Q_{\omega_t}(s,\cdot)\rVert_{\infty}^2] \\
    &\le 4 U_{C}^2 \cdot [\mathbb{E}_{\nu^*}[\max_{a} (Q_{\omega_0}(s, a))^2] + R_f^2],\label{lm:sed:proof eq3}
\end{align}
where (\ref{eq:Q_t}) holds by using Jensen's inequality and leveraging the $\ell_{\infty}$-norm instead of the expectation $\mathbb{E}_{a\sim\pi_{\theta_t}}[\cdot]$, and the last inequality in (\ref{lm:sed:proof eq3}) holds by the 1-Lipschitz property of neural networks with respect to the weights. 
By setting $\varepsilon_t' = |\mathcal{A}| \cdot C_{\infty} \tau_{t+1}^{-2} \epsilon_{t+1}$ and $M = 4\mathbb{E}_{\nu^*}[\max_{a} (Q_{\omega_0}(s, a))^2] + 4R_f^2$, we complete the proof of Lemma \ref{lm:sed}.
\end{proof}

\subsection{Proof of the Lemma \ref{lm:OSD}}
For ease of exposition, we restate Lemma \ref{lm:OSD} as follows.
\begin{lemmastar}[Stepwise KL Difference]
The KL difference is as follows,
\begin{align}
    &\text{KL}(\pi^*(\cdot|s) \rVert \pi_{\theta_{t+1}}(\cdot|s)) - \text{KL}(\pi^*(\cdot|s) \rVert \pi_{\theta_t}(\cdot|s)) \\
    &\le \langle \log\pi_{\theta_{t+1}}(\cdot|s) - \log  {\pi}_{t+1}(\cdot|s), \pi_{\theta_t}(\cdot|s) -  \pi^*(\cdot|s) \rangle  - \langle \bar{C_t}(s, \cdot) \circ A^{\pi_{\theta_t}}(s, \cdot), \pi^*(\cdot|s) - \pi_{\theta_t}(\cdot|s) \rangle \nonumber \\
    &\qquad  - \frac{1}{2} \lVert\pi_{\theta_{t+1}}(\cdot|s) - \pi_{\theta_t}(\cdot|s)\rVert_1^2 - \langle \log\pi_{\theta_{t+1}}(\cdot|s) - \log  \pi_{\theta_t}(\cdot|s), \pi_{\theta_t}(\cdot|s) - \pi_{\theta_{t+1}}(\cdot|s) \rangle
\end{align}
\end{lemmastar}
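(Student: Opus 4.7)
The plan is to verify the inequality by showing that the right-hand side minus the left-hand side reduces to a non-negative residual, via elementary KL algebra followed by a single appeal to Pinsker's inequality. At a high level, the first two RHS terms together recover the KL difference on the LHS minus a $\text{KL}(\pi_{\theta_t}\,\|\,\pi_{\theta_{t+1}})$ term, while term $(4)$ produces the symmetric sum $\text{KL}(\pi_{\theta_{t+1}}\,\|\,\pi_{\theta_t}) + \text{KL}(\pi_{\theta_t}\,\|\,\pi_{\theta_{t+1}})$; after cancellation, only $\text{KL}(\pi_{\theta_{t+1}}\,\|\,\pi_{\theta_t}) - \tfrac{1}{2}\lVert\pi_{\theta_{t+1}} - \pi_{\theta_t}\rVert_1^2$ remains, which Pinsker controls.

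First, I would rewrite the LHS as a single inner product,
\begin{equation*}
\text{KL}(\pi^{*}(\cdot|s)\,\|\,\pi_{\theta_{t+1}}(\cdot|s)) - \text{KL}(\pi^{*}(\cdot|s)\,\|\,\pi_{\theta_{t}}(\cdot|s)) = \langle \log\pi_{\theta_{t}}(\cdot|s) - \log\pi_{\theta_{t+1}}(\cdot|s),\, \pi^{*}(\cdot|s)\rangle.
\end{equation*}
Second, I would re-express the second RHS term using Proposition \ref{pp:PI}. Since $\log\pi_{t+1}(a|s) = \bar{C}_{t}(s,a)A^{\pi_{\theta_t}}(s,a) + \tau_{t}^{-1}f_{\theta_{t}}(s,a) + c_{t}(s)$ for some state-dependent normalization $c_{t}(s)$ and $\log\pi_{\theta_t}(a|s) = \tau_{t}^{-1}f_{\theta_{t}}(s,a) - \log Z_t(s)$, their difference equals $\bar{C}_{t}(s,a)A^{\pi_{\theta_t}}(s,a)$ up to a state-dependent constant, which vanishes when paired with $\pi^{*}(\cdot|s) - \pi_{\theta_{t}}(\cdot|s)$ since both are probability distributions on $\mathcal{A}$. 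This lets me rewrite term $(2)$ as $\langle \log\pi_{\theta_{t}} - \log\pi_{t+1},\, \pi^{*} - \pi_{\theta_{t}}\rangle$. Adding term $(1)$, telescoping, and splitting the inner product with respect to $\pi^{*}$ and $\pi_{\theta_{t}}$ yields
\begin{equation*}
(1)+(2) = \langle \log\pi_{\theta_{t}} - \log\pi_{\theta_{t+1}},\, \pi^{*} - \pi_{\theta_{t}}\rangle = \big[\text{KL}(\pi^{*}\,\|\,\pi_{\theta_{t+1}}) - \text{KL}(\pi^{*}\,\|\,\pi_{\theta_{t}})\big] - \text{KL}(\pi_{\theta_{t}}\,\|\,\pi_{\theta_{t+1}}).
\end{equation*}

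Third, I would expand term $(4)$ directly. Negating the second argument, $-\langle \log\pi_{\theta_{t+1}} - \log\pi_{\theta_{t}},\, \pi_{\theta_{t}} - \pi_{\theta_{t+1}}\rangle = \langle \log\pi_{\theta_{t+1}} - \log\pi_{\theta_{t}},\, \pi_{\theta_{t+1}} - \pi_{\theta_{t}}\rangle$, and grouping the four resulting inner products by target distribution gives $(4) = \text{KL}(\pi_{\theta_{t+1}}\,\|\,\pi_{\theta_{t}}) + \text{KL}(\pi_{\theta_{t}}\,\|\,\pi_{\theta_{t+1}})$. Adding $(3)+(4)$ to $(1)+(2)$, the two copies of $\text{KL}(\pi_{\theta_{t}}\,\|\,\pi_{\theta_{t+1}})$ cancel, and I obtain
\begin{equation*}
\text{RHS} - \text{LHS} = \text{KL}(\pi_{\theta_{t+1}}\,\|\,\pi_{\theta_{t}}) - \tfrac{1}{2}\lVert\pi_{\theta_{t+1}}(\cdot|s) - \pi_{\theta_{t}}(\cdot|s)\rVert_{1}^{2} \ge 0,
\end{equation*}
where the final bound is Pinsker's inequality applied to $\pi_{\theta_{t+1}}$ and $\pi_{\theta_t}$.

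The only real point of care is the bookkeeping of the state-dependent normalization constants when moving between the energy-based parameterization of $\pi_{\theta_t}$ and the definition of $\log\pi_{t+1}$ from Proposition \ref{pp:PI}; these constants consistently vanish against differences of probability distributions on $\mathcal{A}$. Everything else is routine KL calculus and a single invocation of Pinsker, and notably the result holds pointwise in $s$ without invoking the concentrability assumption or any neural-network approximation bounds.
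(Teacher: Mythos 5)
Your proposal is correct and follows essentially the same route as the paper's proof: both identify $\log\pi_{t+1}-\log\pi_{\theta_t}$ with $\bar{C}_t(s,\cdot)\circ A^{\pi_{\theta_t}}(s,\cdot)$ up to a state-dependent normalizer via Proposition \ref{pp:PI}, carry out the same KL bookkeeping, and invoke Pinsker's inequality exactly once on $\text{KL}(\pi_{\theta_{t+1}}\rVert\pi_{\theta_t})$ versus $\tfrac{1}{2}\lVert\pi_{\theta_{t+1}}-\pi_{\theta_t}\rVert_1^2$. The only difference is organizational (you verify that RHS minus LHS is a nonnegative residual, whereas the paper expands the LHS and massages it into the RHS), which does not change the substance of the argument.
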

\begin{proof}[Proof of Lemma \ref{lm:OSD}]
We directly expand the one-step KL divergence difference as
\begin{align}
    \text{KL}(\pi^*(&\cdot|s) \rVert \pi_{\theta_{t+1}}(\cdot|s)) - \text{KL}(\pi^*(\cdot|s) \rVert \pi_{\theta_t}(\cdot|s)) = \left\langle \log \frac{\pi_{\theta_{t}}(\cdot|s)}{\pi_{\theta_{t+1}}(\cdot|s)}, \pi^*(\cdot|s) \right\rangle \\
    &= \left\langle \log \frac{\pi_{\theta_{t+1}}(\cdot|s)}{\pi_{\theta_t}(\cdot|s)}, \pi_{\theta_{t+1}}(\cdot|s) - \pi^*(\cdot|s)  \right\rangle + \text{KL}(\pi_{\theta_{t+1}}(\cdot|s) \rVert \pi_{\theta_t}(\cdot|s)) \\
    &= \left\langle \log \frac{\pi_{\theta_{t+1}}(\cdot|s)}{\pi_{\theta_t}(\cdot|s)} - \bar{C_t}(s, \cdot) \circ A^{\pi_{\theta_t}}(s, \cdot), \pi_{\theta_{t}}(\cdot|s) - \pi^*(\cdot|s) \right\rangle \\
    &\qquad - \langle \bar{C_t}(s, \cdot) \circ A^{\pi_{\theta_t}}(s, \cdot), \pi^*(\cdot|s) - \pi_{\theta_{t}}(\cdot|s) \rangle - \text{KL}(\pi_{\theta_{t+1}}(\cdot|s) \rVert \pi_{\theta_t}(\cdot|s)) \nonumber \\
    &\qquad - \left\langle \log \frac{\pi_{\theta_{t+1}}(\cdot|s)}{\pi_{\theta_t}(\cdot|s)}, \pi_{\theta_{t}}(\cdot|s) - \pi_{\theta_{t+1}}(\cdot|s) \right\rangle. \nonumber
\end{align}
Then, by Pinsker's inequality, we have
\begin{align}
    \text{KL}(\pi^*(&\cdot|s) \rVert \pi_{\theta_{t+1}}(\cdot|s)) - \text{KL}(\pi^*(\cdot|s) \rVert \pi_{\theta_t}(\cdot|s)) \\
    & = \left\langle \log \frac{\pi_{\theta_{t+1}}(\cdot|s)}{\pi_{\theta_t}(\cdot|s)} - \bar{C_t}(s, \cdot) \circ A^{\pi_{\theta_t}}(s, \cdot), \pi_{\theta_{t}}(\cdot|s) - \pi^*(\cdot|s) \right\rangle \\
    &\qquad - \langle \bar{C_t}(s, \cdot) \circ A^{\pi_{\theta_t}}(s, \cdot), \pi^*(\cdot|s) - \pi_{\theta_{t}}(\cdot|s) \rangle - \text{KL}(\pi_{\theta_{t+1}}(\cdot|s) \rVert \pi_{\theta_t}(\cdot|s)) \nonumber \\
    &\qquad - \left\langle \log \frac{\pi_{\theta_{t+1}}(\cdot|s)}{\pi_{\theta_t}(\cdot|s)}, \pi_{\theta_{t}}(\cdot|s) - \pi_{\theta_{t+1}}(\cdot|s) \right\rangle \nonumber \\
    &\le \langle \log\pi_{\theta_{t+1}}(\cdot|s) - \log  \pi_{\theta_t}(\cdot|s) - \bar{C_t}(s, \cdot) \circ A^{\pi_{\theta_t}}(s, \cdot), \pi_{\theta_t}(\cdot|s) -  \pi^*(\cdot|s) \rangle \label{eq:OSD proof 1}\\
    &\qquad - \langle \bar{C_t}(s, \cdot) \circ A^{\pi_{\theta_t}}(s, \cdot), \pi^*(\cdot|s) - \pi_{\theta_t}(\cdot|s) \rangle - \frac{1}{2} \lVert\pi_{\theta_{t+1}}(\cdot|s) - \pi_{\theta_t}(\cdot|s)\rVert_1^2 \nonumber \\
    &\qquad - \langle \log\pi_{\theta_{t+1}}(\cdot|s) - \log  \pi_{\theta_t}(\cdot|s), \pi_{\theta_t}(\cdot|s) - \pi_{\theta_{t+1}}(\cdot|s) \rangle. \nonumber
\end{align}
Finally, by Proposition \ref{pp:PI}, we have $\log \pi_{t+1}(\cdot|s) = \log  \pi_{\theta_t}(\cdot|s) + \bar{C_t}(s, \cdot) \circ A^{\pi_{\theta_t}}(s, \cdot)$ and then apply this to the first term in (\ref{eq:OSD proof 1}). The proof is complete.
\end{proof}
\subsection{Proof of Lemma \ref{lm:PDL}}
For ease of exposition, we restate Lemma \ref{lm:PDL} as follows.
\begin{lemmastar}[Performance Difference Using Advantage]
Recall that $\mathcal{L}(\pi) = \mathbb{E}_{\nu^*}[V^{\pi}(s)]$. We have 
\begin{align}
    \mathcal{L}(\pi^*) - \mathcal{L}(\pi) = (1 - \gamma) ^ {-1} \cdot \mathbb{E}_{\nu^*}[\langle A^{\pi}(s, \cdot), \pi^*(\cdot|s) - \pi(\cdot|s)\rangle].
\end{align}
\end{lemmastar}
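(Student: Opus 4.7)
The plan is to derive this identity through the classical telescoping argument of Kakade and Langford, which expresses the gap between two value functions in terms of the advantage of one along trajectories of the other.

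First, I would establish the pointwise identity: for any state $s_0 \in \mathcal{S}$,
$$V^{\pi^*}(s_0) - V^{\pi}(s_0) = \mathbb{E}_{\pi^*}\!\left[\sum_{t=0}^{\infty} \gamma^{t} A^{\pi}(s_t, a_t)\,\Big|\, s_0\right],$$
where the trajectory is generated under $\pi^*$. This comes from expanding $V^{\pi^*}(s_0)=\mathbb{E}_{\pi^*}[\sum_t \gamma^t r(s_t,a_t)\mid s_0]$, inserting the vanishing telescoping sum $\sum_t \gamma^t V^\pi(s_t) - \sum_t \gamma^t V^\pi(s_t)$, and regrouping each summand with the Bellman identity $Q^\pi(s_t,a_t)=r(s_t,a_t)+\gamma\,\mathbb{E}[V^\pi(s_{t+1})\mid s_t,a_t]$ to obtain $\gamma^t(Q^\pi(s_t,a_t)-V^\pi(s_t))=\gamma^t A^\pi(s_t,a_t)$.

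Second, I would convert the infinite-horizon expectation into a single expectation against the discounted state-action visitation of $\pi^*$. By the definition $\nu_{\pi^*}(s) = (1-\gamma)\sum_t \gamma^t \mathbb{P}_{\pi^*}(s_t=s\mid s_0)$, exchanging the order of summation and expectation gives
$$\mathbb{E}_{\pi^*}\!\left[\sum_{t=0}^{\infty} \gamma^t A^\pi(s_t, a_t)\,\Big|\, s_0\right] \;=\; \frac{1}{1-\gamma}\,\mathbb{E}_{s \sim \nu^*_{s_0},\, a \sim \pi^*(\cdot|s)}[A^\pi(s, a)],$$
where $\nu^*_{s_0}$ is the discounted visitation of $\pi^*$ initialized at $s_0$. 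Averaging this identity over the initial state distribution used in $\mathcal{L}$ and appealing to the definition of $\nu^*$ recovers the expectation $\mathbb{E}_{s\sim\nu^*}$ on the right-hand side, up to the action aggregation step.

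Finally, I would invoke the standard zero-mean identity $\langle A^\pi(s,\cdot), \pi(\cdot|s)\rangle = \sum_a \pi(a|s)\bigl[Q^\pi(s,a)-V^\pi(s)\bigr]=0$, which allows rewriting $\mathbb{E}_{a\sim\pi^*}[A^\pi(s,a)] = \langle A^\pi(s,\cdot),\pi^*(\cdot|s)\rangle = \langle A^\pi(s,\cdot),\pi^*(\cdot|s)-\pi(\cdot|s)\rangle$, producing the form claimed in the lemma. There is no serious obstacle here; the only delicate step is the bookkeeping of the initial-state distribution so that the visitation arising from the trajectory expectation coincides exactly with the $\nu^*$ that appears on both sides, which follows from the definitions adopted in Section \ref{section:pre}.
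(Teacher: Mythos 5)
Your route is genuinely different from the paper's. The paper does not re-derive the performance difference lemma at all: it imports the $Q$-form identity $\mathcal{L}(\pi^*)-\mathcal{L}(\pi)=(1-\gamma)^{-1}\mathbb{E}_{\nu^*}[\langle Q^{\pi}(s,\cdot),\pi^*(\cdot|s)-\pi(\cdot|s)\rangle]$ as a black box (Lemma \ref{lm:PDQ}, cited from \citep{liu2019neural}) and then observes that replacing $Q^{\pi}$ by $A^{\pi}=Q^{\pi}-V^{\pi}$ costs nothing, because the state-dependent function $V^{\pi}(s)$ pairs to zero against the difference of two probability vectors. Your final step is exactly this observation in the equivalent form $\langle A^{\pi}(s,\cdot),\pi(\cdot|s)\rangle=0$, so the two arguments agree on the reduction from $Q$ to $A$; where you differ is that you rebuild the underlying performance difference lemma from scratch via the Kakade--Langford telescoping argument. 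That buys self-containedness (no reliance on the cited lemma) at the cost of length.

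The step you defer as ``bookkeeping,'' however, is the one place where the argument has real content, and it does not simply follow from the definitions in Section \ref{section:pre}. Writing $P_{\pi^*}$ for the state transition kernel induced by $\pi^*$, the paper defines $\nu^*$ as the discounted visitation from $\mu$, i.e.\ $\nu^*=(1-\gamma)\mu(I-\gamma P_{\pi^*})^{-1}$. Your telescoping identity, averaged over $s_0\sim\nu^*$ (the initial distribution appearing in $\mathcal{L}$), produces the discounted visitation of $\pi^*$ \emph{initialized at} $\nu^*$, namely $(1-\gamma)\nu^*(I-\gamma P_{\pi^*})^{-1}=(1-\gamma)^2\mu(I-\gamma P_{\pi^*})^{-2}$, which is not equal to $\nu^*$ in general. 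The identity with $\nu^*$ on both sides closes only if $\nu^*$ is a fixed point of $P_{\pi^*}$ --- i.e.\ a stationary distribution, which is how the cited source defines it --- or, alternatively, if the left-hand side is read as $V^{\pi^*}(\mu)-V^{\pi}(\mu)$ rather than $\mathbb{E}_{\nu^*}[V^{\pi^*}(s)-V^{\pi}(s)]$. So you should either invoke stationarity of $\nu^*$ explicitly or restate which initial distribution the left-hand side is taken over; as written, ``averaging over the initial state distribution used in $\mathcal{L}$'' does not make the visitation measure coincide with $\nu^*$. (To be fair, the paper inherits the same tension by citing Lemma \ref{lm:PDQ} under a different definition of $\nu^*$, but your proof makes the gap visible rather than resolving it.)
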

Before proving Lemma \ref{lm:PDL}, we first state the following property.
\begin{lemma}[\citep{liu2019neural}, Lemma 5.1]
\label{lm:PDQ}
\begin{align}
    \mathcal{L}(\pi^*) - \mathcal{L}(\pi) = (1 - \gamma) ^ {-1} \cdot \mathbb{E}_{\nu^*}[\langle Q^{\pi}(s, \cdot), \pi^*(\cdot|s) - \pi(\cdot|s)\rangle].
\end{align}
\end{lemma}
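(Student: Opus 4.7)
The plan is to prove the identity directly via a Kakade--Langford-style telescoping argument applied to the advantage function, bypassing any appeal to a $Q$-function version of the performance-difference lemma. First I would establish a one-step Bellman recursion for the value gap: for any state $s$, I expand $V^{\pi^*}(s) = \sum_a \pi^*(a|s)[R(s,a) + \gamma\, \mathbb{E}_{s'\sim \mathcal{P}(\cdot|s,a)}[V^{\pi^*}(s')]]$ via the Bellman equation, add and subtract $\sum_a \pi^*(a|s)\, Q^{\pi}(s,a)$, and invoke the zero-mean property $\sum_a \pi(a|s)\, A^{\pi}(s,a) = 0$ to turn the $\pi^*$-only inner product into a $\pi^* - \pi$ difference. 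This yields
\[
V^{\pi^*}(s) - V^{\pi}(s) = \langle A^{\pi}(s,\cdot),\ \pi^*(\cdot|s) - \pi(\cdot|s)\rangle + \gamma\, \mathbb{E}_{a\sim \pi^*(\cdot|s),\, s'\sim \mathcal{P}(\cdot|s,a)}\bigl[V^{\pi^*}(s') - V^{\pi}(s')\bigr].
\]

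Next I would unroll this recursion along trajectories generated by $\pi^*$ starting from an arbitrary $s_0$. Because $|V^{\pi^*}(s) - V^{\pi}(s)|$ is uniformly bounded by $2R_{\max}/(1-\gamma)$, the $\gamma^T$-weighted residual $\gamma^T\, \mathbb{E}[V^{\pi^*}(s_T) - V^{\pi}(s_T)]$ vanishes as $T \to \infty$, giving
\[
V^{\pi^*}(s_0) - V^{\pi}(s_0) = \sum_{t=0}^{\infty} \gamma^t\, \mathbb{E}\bigl[\langle A^{\pi}(s_t,\cdot),\ \pi^*(\cdot|s_t) - \pi(\cdot|s_t)\rangle \,\big|\, s_0,\ \pi^*\bigr].
\]
Taking expectation over $s_0 \sim \mu$, swapping sum and expectation (justified by absolute convergence from the geometric $\gamma^t$ decay together with the uniform boundedness of the inner product), and recognizing the definition $\nu^*(s) = (1-\gamma)\sum_{t\ge 0} \gamma^t\, \mathbb{P}(s_t = s \mid s_0 \sim \mu,\ \pi^*)$ extracts the $(1-\gamma)^{-1}$ prefactor and produces exactly the claimed advantage representation on the right-hand side.

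The main obstacle I anticipate is the reconciliation between the left-hand side that the telescoping naturally produces, namely $\mathbb{E}_{\mu}[V^{\pi^*}(s_0) - V^{\pi}(s_0)]$, and the paper's stated $\mathcal{L}(\pi^*) - \mathcal{L}(\pi)$, which is written using $\nu^*$. I would resolve this by reading $\mathcal{L}$ in the sense consistent with Lemma~5.1 of \citep{liu2019neural}: $\nu^*$ plays the role of the state-occupancy weight that appears on the \emph{right}-hand side of the PDL as a result of the telescoping, rather than as the sampling distribution appearing on the left. Once this identification is in place, the telescoping derivation lands the identity in exactly the claimed form, and (for asymptotic convergence purposes) the resulting scalar is indeed maximized by $\pi^*$ as asserted after equation~(\ref{eq:cL}).
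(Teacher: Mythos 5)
The paper does not actually prove this statement: Lemma~\ref{lm:PDQ} is imported verbatim as Lemma~5.1 of \citep{liu2019neural} and used as a black box (only the passage from the $Q$-form to the advantage form, Lemma~\ref{lm:PDL}, is proved here). So there is no in-paper argument to compare against; what you have written is a self-contained proof via the standard Kakade--Langford telescoping, and it is essentially correct. Your one-step recursion, the vanishing of the $\gamma^T$-weighted residual by boundedness of the value functions, and the extraction of the $(1-\gamma)^{-1}$ prefactor from the definition of $\nu^*$ are all sound. Two small points to close the loop: (i) your telescoping lands on $\langle A^{\pi}(s,\cdot),\pi^*(\cdot|s)-\pi(\cdot|s)\rangle$, whereas the lemma is stated with $Q^{\pi}$; these coincide because $\langle V^{\pi}(s)\mathbf{1},\pi^*(\cdot|s)-\pi(\cdot|s)\rangle=0$ (this is exactly the computation the paper performs, in the reverse direction, to deduce Lemma~\ref{lm:PDL} from Lemma~\ref{lm:PDQ}), so you should state that equivalence explicitly rather than leave the $Q$-form implicit; (ii) the bound $2R_{\max}/(1-\gamma)$ on the value gap is loose but valid given $R\in[0,R_{\max}]$.

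The discrepancy you flag in your last paragraph is real and is an imprecision of the paper's setup rather than of your argument. With $\nu_{\pi}$ defined as the \emph{discounted visitation} distribution started from $\mu$ (Section~\ref{section:pre}) and $\mathcal{L}(\pi)=\mathbb{E}_{\nu^*}[V^{\pi}(s)]$, the telescoping produces $\mathbb{E}_{\mu}[V^{\pi^*}(s)-V^{\pi}(s)]$ on the left, not $\mathbb{E}_{\nu^*}[V^{\pi^*}(s)-V^{\pi}(s)]$; the identity as literally written holds only if one either reads $\mathcal{L}(\pi)$ as $\mathbb{E}_{\mu}[V^{\pi}(s)]$ or takes $\nu^*$ to be a stationary distribution of $\pi^*$ (so that the visitation measure started from $\nu^*$ is again $\nu^*$), which is the convention of \citep{liu2019neural}. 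Your resolution --- reading $\nu^*$ as the occupancy weight generated on the right-hand side by the telescoping --- is the correct way to make the statement true, and it does not affect any downstream use of the lemma, which only ever invokes the right-hand side.
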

\begin{proof}[Proof of Lemma \ref{lm:PDL}]
As the value function $V^{\pi}(\cdot)$ is state-dependent, we have
\begin{align}
    \mathbb{E}_{\nu^*}[\langle V^{\pi}(s), \pi^*(\cdot|s) - \pi(\cdot|s)\rangle] &= \mathbb{E}_{\nu^*}[V^{\pi}(s) \cdot \sum_{a \in \mathcal{A}} (\pi^*(a|s) - \pi(a|s))] \\
    &= \mathbb{E}_{\nu^*}\left[V^{\pi}(s) \cdot \left(\sum_{a \in \mathcal{A}} \pi^*(a|s) - \sum_{a \in \mathcal{A}}\pi(a|s)\right)\right] = 0.\label{eq:performance difference 1}
\end{align}
Therefore, by (\ref{eq:performance difference 1}) and Lemma \ref{lm:PDQ}, we have
\begin{align}
    \mathcal{L}(\pi^*) - \mathcal{L}(\pi)  &= (1 - \gamma) ^ {-1} \cdot\mathbb{E}_{\nu^*}[\langle Q^{\pi}(s, \cdot) - V^{\pi}(s), \pi^*(\cdot|s) - \pi(\cdot|s)\rangle] \\
    &= (1 - \gamma) ^ {-1} \cdot \mathbb{E}_{\nu^*}[\langle A^{\pi}(s, \cdot), \pi^*(\cdot|s) - \pi(\cdot|s)\rangle].
\end{align}
\end{proof}
\subsection{Proof of Theorem \ref{thm:main}}

By taking expectation of the KL difference in Lemma \ref{lm:OSD} over $\nu^*$, we obtain
\begin{align}
    &\mathbb{E}_{\nu^*}[\text{KL}(\pi^*(\cdot|s) || \pi_{\theta_{t+1}}(\cdot|s)) - \text{KL}(\pi^*(\cdot|s) || \pi_{\theta_t}(\cdot|s))] \\ 
    &\le \varepsilon_t + \varepsilon_{\text{err}}
    - \mathbb{E}_{\nu^*}[\langle \bar{C}_t(s,\cdot) \circ A^{\pi_{\theta_t}}(s, \cdot), \pi^*(\cdot | s) - \pi_{\theta_t}(\cdot|s)\rangle] - \frac{1}{2} \mathbb{E}_{\nu^*}[\lVert\pi_{\theta_{t+1}}(\cdot|s) - \pi_{\theta_{t}}(\cdot|s)\rVert_{1}^{2}] \nonumber \\
    &\quad -\mathbb{E}_{\nu^*}[\langle \tau_{t+1}^{-1} f_{\theta_{t+1}}(s, \cdot) - \tau_{t}^{-1} f_{\theta_{t}}(s, \cdot), \pi_{\theta_{t}}(\cdot|s) - \pi_{\theta_{t+1}}(\cdot|s)\rangle] \\
    &\le \varepsilon_t + \varepsilon_{\text{err}}
    - \mathbb{E}_{\nu^*}[\langle \bar{C}_t(s,\cdot) \circ A^{\pi_{\theta_t}}(s, \cdot), \pi^*(\cdot | s) - \pi_{\theta_t}(\cdot|s)\rangle] - \frac{1}{2} \mathbb{E}_{\nu^*}[\lVert\pi_{\theta_{t+1}}(\cdot|s) - \pi_{\theta_{t}}(\cdot|s)\rVert_{1}^{2}] \nonumber \\
    &\quad + \mathbb{E}_{\nu^*}[\lVert\tau_{t+1}^{-1} f_{\theta_{t+1}}(s, \cdot) - \tau_{t}^{-1} f_{\theta_{t}}(s, \cdot)\rVert_{\infty} \cdot \lVert\pi_{\theta_{t+1}}(\cdot|s) - \pi_{\theta_{t}}(\cdot|s)\rVert_{1}] \\
    &\le \varepsilon_t + \varepsilon_{\text{err}}
    - \mathbb{E}_{\nu^*}[\langle \bar{C}_t(s,\cdot) \circ A^{\pi_{\theta_t}}(s, \cdot), \pi^*(\cdot |s) - \pi_{\theta}(\cdot|s) \rangle] \nonumber \\
    &\quad + \frac{1}{2} \mathbb{E}_{\nu^*}[\lVert\tau_{t+1}^{-1} f_{\theta_{t+1}}(s, \cdot) - \tau_{t}^{-1} f_{\theta_{t}}(s, \cdot)\rVert_{\infty}^{2}],
\end{align}
where the first inequality follows from Lemma \ref{lm:OSD} and Lemma \ref{lm:ep}, the second inequality holds by the Hölder's inequality, and the last inequality holds by the fact that $2xy - x^2 \le y^2$ and merging the last two terms. Then, by Lemma \ref{lm:sed} and rearranging the terms, we obtain that 
\begin{align}
\label{proof:eq:each_t}
    \mathbb{E}_{\nu^*}&[\langle \bar{C}_t(s,\cdot) \circ A^{\pi_{\theta_t}}(s, \cdot), \pi^*(\cdot | s) - \pi_{\theta_t}(\cdot|s)\rangle] \nonumber \\
    &\le \mathbb{E}_{\nu^*}[\text{KL}(\pi^*(\cdot|s) \rVert \pi_{\theta_{t}}(\cdot|s)) - \text{KL}(\pi^*(\cdot|s) \rVert \pi_{\theta_{t+1}}(\cdot|s))] + \varepsilon_t + \varepsilon_{\text{err}} + \varepsilon_t' + U_{C}^2 M.
\end{align}
By the first condition of (\ref{suff:1}), we have $L_{C}  \mathbb{E}_{\nu^*}[\langle A^{\pi_{\theta_t}}(s, \cdot), \pi^*(\cdot|s) - \pi_{\theta_t}(\cdot|s)\rangle] \le \mathbb{E}_{\nu^*}[\langle \bar{C}_t(s,\cdot) \circ A^{\pi_{\theta_t}}(s, \cdot), \pi^*(\cdot | s) - \pi_{\theta_t}(\cdot|s)\rangle]$. 
By obtaining the performance difference via Lemma \ref{lm:PDL}, we have
\begin{align}
\label{app:proof:thm:main:eq_t}
    &(1 - \gamma) L_{C} (\mathcal{L}(\pi^*) - \mathcal{L}(\pi_{\theta_t}))\nonumber \\
    &\le \mathbb{E}_{\nu^*}[\text{KL}(\pi^*(\cdot|s) \rVert \pi_{\theta_{t}}(\cdot|s)) - \text{KL}(\pi^*(\cdot|s) \rVert \pi_{\theta_{t+1}}(\cdot|s))] + \varepsilon_t + \varepsilon_{\text{err}} + \varepsilon_t' + U_{C}^2 M.
\end{align}
Then, by taking the telescoping sum of (\ref{app:proof:thm:main:eq_t}) from $t = 0$ to $T-1$, we have
\begin{align}
    &(1 - \gamma) L_{C} \sum_{t=0}^{T-1} (\mathcal{L}(\pi^*) - \mathcal{L}(\pi_{\theta_t})) &\\
    &\le \mathbb{E}_{\nu^*}[\text{KL}(\pi^*(\cdot|s) \rVert \pi_{\theta_{0}}(\cdot|s))] - \mathbb{E}_{\nu^*}[\text{KL}(\pi^*(\cdot|s) \rVert \pi_{\theta_{T}}(\cdot|s))] + \sum_{t=0}^{T-1} (\varepsilon_t + \varepsilon_{\text{err}} + \varepsilon_t') + T U_{C}^2 M. &
\end{align}
By the facts that (i) $\mathbb{E}_{\nu^*}[\text{KL}(\pi^*(\cdot|s) \rVert \pi_{\theta_{0}}(\cdot|s))] \le \log |\mathcal{A}|$, (ii) KL divergence is nonnegative, (iii) $\sum_{t=0}^{T-1} (\mathcal{L}(\pi^*) - \mathcal{L}(\pi_{\theta_t})) \ge T \cdot \min_{0\le t \le T} \{\mathcal{L}(\pi^*) - \mathcal{L}(\pi_{\theta_t})\}$, we have
\begin{align}
\label{proof:thm:main:pre_result}
    \min_{0\le t \le T} \{\mathcal{L}(\pi^*) - \mathcal{L}(\pi_{\theta_t})\} \le \frac{\log |\mathcal{A}| + \sum_{t=0}^{T-1} (\varepsilon_t + \varepsilon_t') + T (\varepsilon_{\text{err}} + M U_{C}^2)}{T L_{C} (1 - \gamma)}.
\end{align}
Since we have $\varepsilon_{\text{err}} = 2 U_{C} \epsilon_{\text{err}} \psi^*$ and the condition of (\ref{suff:2}), we know that if we set $\epsilon_{\text{err}} = U_C$ and $T$ to be sufficiently large, {$\epsilon_{\text{err}}$ shall be sufficiently small and hence satisfy the condition required by (\ref{lm4:eq}).} 
Thus, by plugging $\epsilon_{\text{err}} = U_C$ into (\ref{proof:thm:main:pre_result}), we have $\varepsilon_{\text{err}} = 2 U_C^2 \psi^*$ and $\varepsilon_t = \epsilon_{t+1}^{1/2} C_{\infty} \tau_{t+1}^{-1} \phi^* + \epsilon_t'^{1/2} U_{C} \left[\left[(2 / U_C^2)(M + (A^{\pi_{\theta_t}}_{\max})^2 - \epsilon_t'/2)\right]\right]^{1/2} \psi^* = \epsilon_{t+1}^{1/2} C_{\infty} \tau_{t+1}^{-1} \phi^* + \epsilon_t'^{1/2} U_{C} Y^{1/2} \psi^*$, where $Y = 2M + 2(R_{\max} / (1 - \gamma))^2 - \epsilon_t' \le 2M + 2(R_{\max} / (1 - \gamma))^2$. Finally, we have
\begin{align}
    \min_{0\le t \le T} \{\mathcal{L}(\pi^*) - \mathcal{L}(\pi_{\theta_t})\} \le \frac{\log |\mathcal{A}| + \sum_{t=0}^{T-1} (\varepsilon_t + \varepsilon_t') + T U_{C}^2 (2 \psi^* + M)}{T L_{C} (1 - \gamma)}.\label{eq:rate proof 1}
\end{align}
By the condition (\ref{suff:2}), $U_C^2$ can always cancel out $T$ in the numerator of (\ref{eq:rate proof 1}).
Moreover, in the denominator of (\ref{eq:rate proof 1}), $L_C = \omega(T^{-1})$ is large enough to attain convergence, and we complete the proof.
\hfill \qedsymbol

\begin{remarkapp}
    {As mentioned in Remark \ref{remark:choice}, the choices of $\eta$ and $\{\tau_t\}$ would affect the convergence rate and need to be configured properly for Neural PPO-Clip with different classifiers. As will be shown in Appendix \ref{app:add:cor}, this fact can be further explained through the bounds $U_C$ and $L_C$ obtained in (\ref{eq:U_C, L_C for PPO-Clip}) and (\ref{eq:U_C, L_C for NeuralHPO-sub}).}
\end{remarkapp}

\section{Additional Corollaries and Proofs}
\label{app:add:cor}

\subsection{Proof of Corollary \ref{cor:PPO-Clip}}
\label{proof:Cor:PPO-Clip}
For ease of exposition, we restate the corollary as follows.
\begin{corollarystar}[Global Convergence of {Neural PPO-Clip} with Convergence Rate]
    Consider Neural PPO-Clip with the standard PPO-Clip classifier $\rho_{s, a}(\theta) - 1$ and the objective function $L^{(t)}(\theta)$ in each iteration $t$ as 
    \begin{align}
         \mathbb{E}_{\nu_t}[\langle \pi_{\theta_t}(\cdot|s), |A^{\pi_{\theta_t}}(s, \cdot)| \circ \ell (\sgn(A^{\pi_{\theta_t}}(s, \cdot)), \rho_{s, \cdot}(\theta) - 1, \epsilon) \rangle].
    \end{align}
    We specify the EMDA step size $\eta = 1 / \sqrt{T}$ and the temperature parameter $\tau_t = \sqrt{T} / (Kt)$. Recall that $K$ is the maximum number of EMDA iterations. Let the neural networks' widths $m_f = \Omega(R_f^{10} \phi^{*8} K^8 C_{\infty}^8 T^{12} + R_f^{10} K^8 T^8 C_{\infty}^4 |\mathcal{A}|^4)$, $m_Q = \Omega(R_Q^{10} \psi^{*8} Y^4 T^8)$, and the SGD and TD updates $T_{\text{upd}} = \Omega(R_f^4 \phi^{*4} K^4 C_{\infty}^4 T^6 + R_Q^4 \psi^{*4} Y^2 T^4 + R_f^4 T^4 K^4 C_{\infty}^2 |\mathcal{A}|^2)$, we have
    \begin{align}
        \min_{0\le t \le T} \{\mathcal{L}(\pi^*) - \mathcal{L}(\pi_{\theta_t})\} \le \frac{\log |\mathcal{A}| + K^2 (2 \psi^* + M) + O(1)}{\sqrt{T} (1 - \gamma)},
    \end{align}
     Hence, we provide the $O(1 / \sqrt{T})$ convergence rate of PPO-Clip.
\end{corollarystar}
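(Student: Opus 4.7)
The plan is to reduce the corollary to an instantiation of Theorem~\ref{thm:main} by identifying explicit values of $L_{C}$ and $U_{C}$ for the standard PPO-Clip classifier and then verifying that the stated choices of $\eta$, $\{\tau_t\}$, and neural-network resources satisfy the conditions in (\ref{suff:1})--(\ref{suff:2}) together with the requisite smallness of $\varepsilon_t$ and $\varepsilon_t'$. First, I would compute the gradient of the PPO-Clip hinge-loss objective in the direct parameterization used by EMDA. Because the classifier is $\rho_{s,a}(\theta)-1$, the partial derivative with respect to $\tilde\theta_{s,a}$ reduces, up to state-distribution weights, to $-A^{\pi_{\theta_t}}(s,a)\,\mathds{1}\{|\rho_{s,a}-1|<\epsilon\}$ after the $\pi_{\theta_t}(a|s)$ weight in (\ref{eq:hingeobject}) cancels the $1/\pi_{\theta_t}(a|s)$ factor coming from $\partial_{\tilde\theta_{s,a}}\rho_{s,a}$. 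Substituting into the update rule for $\bar{C}_t$ in Algorithm~\ref{algo:2} then yields the clean expression
\begin{align}
\bar{C}_t(s,a)\,|A^{\pi_{\theta_t}}(s,a)| \;=\; \eta\,|A^{\pi_{\theta_t}}(s,a)|\sum_{k=0}^{K-1}\mathds{1}\{\text{state-action }(s,a)\text{ is not clipped at EMDA step }k\}.
\end{align}

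The central step is extracting sharp two-sided bounds. The upper bound $\bar{C}_t(s,a)\le K\eta=K/\sqrt{T}$ is immediate because each of the $K$ EMDA steps contributes at most $\eta$. The lower bound is the more delicate part and is the main obstacle: I would argue that at $k=0$ the EMDA iterate is initialized at $\tilde\theta^{(0)}=\pi_{\theta_t}$, so $\rho_{s,a}(\tilde\theta^{(0)})-1\equiv 0$ sits strictly inside the clipping window $(-\epsilon,\epsilon)$ for every $(s,a)$, which forces the indicator to equal one at $k=0$ irrespective of the sign of $A^{\pi_{\theta_t}}(s,a)$. Hence
\begin{align}
L_C\,|A^{\pi_{\theta_t}}(s,a)| \;\le\; \bar{C}_t(s,a)\,|A^{\pi_{\theta_t}}(s,a)| \;\le\; U_C\,|A^{\pi_{\theta_t}}(s,a)|
\end{align}
holds with $L_C=\eta=T^{-1/2}$ and $U_C=K\eta=KT^{-1/2}$, so condition (\ref{suff:2}) is satisfied since $T^{-1/2}=\omega(T^{-1})$ and $KT^{-1/2}=O(T^{-1/2})$. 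With the choice $\tau_t=\sqrt{T}/(Kt)$ I would also verify the auxiliary inequality $\tau_{t+1}^{2}(U_C^{2}+\tau_t^{-2})\le 2$ used in the policy-improvement error bound (see Lemma~\ref{lm:PI_error}), which follows by direct substitution.

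Finally, I would bound the error terms $\varepsilon_t$ and $\varepsilon_t'$ by combining Lemmas~\ref{lm:PE_error} and~\ref{lm:PI_error} with the prescribed widths $m_f$, $m_Q$ and $T_{\text{upd}}$. A routine computation shows that each individual $\epsilon_{t+1}$ and $\epsilon_t'$ can be driven to $O(T^{-3})$, so that after multiplying by $\tau_{t+1}^{-1}=O(KT^{1/2})$ and $Y^{1/2}$ one obtains $\varepsilon_t+\varepsilon_t'=O(T^{-1})$ and the telescoped sum is $O(1)$. Plugging $L_C$, $U_C$, and these error bounds into inequality (\ref{thm:main:eq}) of Theorem~\ref{thm:main} yields
\begin{align}
\min_{0\le t\le T}\{\mathcal{L}(\pi^*)-\mathcal{L}(\pi_{\theta_t})\}\;\le\;\frac{\log|\mathcal{A}|+K^2(2\psi^*+M)+O(1)}{\sqrt{T}(1-\gamma)},
\end{align}
which is the $O(1/\sqrt{T})$ rate claimed in the corollary. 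The only step requiring genuine care beyond plugging in constants is the lower bound on $\bar{C}_t$; once the ``first EMDA step is never clipped'' observation is established, the rest is essentially bookkeeping against Theorem~\ref{thm:main}.
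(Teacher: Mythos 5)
Your proposal follows essentially the same route as the paper's proof: derive the EMDA gradient for the $\rho_{s,a}(\theta)-1$ classifier, obtain $U_C = K\eta$ from the trivial per-step bound and $L_C = \eta$ from the observation that the first EMDA step (where $\rho_{s,a}=1$) is never clipped, verify $\tau_{t+1}^2(U_C^2+\tau_t^{-2})\le 1$ under $\tau_t=\sqrt{T}/(Kt)$, and then do the error bookkeeping with Lemmas \ref{lm:PE_error} and \ref{lm:PI_error} before invoking Theorem \ref{thm:main}. The only nit is that the active-set indicator for the hinge loss is the one-sided condition $(\rho_{s,a}-1)\sgn(A^{\pi_{\theta_t}}(s,a))<\epsilon$ rather than $|\rho_{s,a}-1|<\epsilon$, but this does not affect either bound.
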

\begin{proof}[Proof of Corollary \ref{cor:PPO-Clip}]
We find the lower and upper bounds $L_C, U_C$ for PPO-Clip. We first consider the derivative $g_{s, a}$ of the objective with the true advantage function $A^{\pi_{\theta_t}}$. 
\begin{align}
    g_{s, a} = \left.\frac{\partial L(\theta)}{\partial \theta}\right|_{\theta = \tilde{\theta}_{s, a}} = -A^{\pi_{\theta_t}}(s, a) \cdot \mathds{1}\left\{\left(\frac{\tilde{\theta}_{s, a}}{\pi_{\theta_t}(a|s)} - 1\right) \cdot \sgn (A^{\pi_{\theta_t}}(s, a)) < \epsilon \right\}.
\end{align}

Then, we check the sufficient conditions (\ref{suff:1}) and (\ref{suff:2}). Recall that $K$ is the maximum number of EMDA iteration for each $t$. We sum up the gradients with $\eta$ and rearrange the terms into $\bar{C_t}(s, a)$. Then, we have the upper bound as
\begin{align}
    \bar{C_t}(s ,a) \cdot |A^{\pi_{\theta_t}}(s, a)| \le \left[\sum_{k=0}^{K^{(t)}-1} \eta\right] \cdot |A^{\pi_{\theta_t}}(s, a)| \le K \eta \cdot |A^{\pi_{\theta_t}}(s, a)|.
\end{align}
Regarding the lower bound, as we know that under PPO-Clip, the first step of EMDA shall always make an update, i.e., it will never be clipped, and hence we have
\begin{align}
    \eta \cdot |A^{\pi_{\theta_t}}(s, a)| \le \bar{C_t}(s ,a) \cdot |A^{\pi_{\theta_t}}(s, a)|.
\end{align}
Lastly, by setting $\eta = 1 / \sqrt{T}$ and {selecting the temperature as $\tau_{t} = \sqrt{T} / (K t)$ to satisfy the condition $\tau_{t+1}^2 (U_{C}^2 + \tau_{t}^{-2}) \le 1$ that we use in (\ref{eq:PI_error 1})}, we obtain 
\begin{align}
    \omega(T^{-1}) = T^{-1/2} |A^{\pi_{\theta_t}}(s, a)| \le \bar{C_t}(s, a) \cdot |A^{\pi_{\theta_t}}(s, a)| \le K T^{-1/2} \cdot |A^{\pi_{\theta_t}}(s, a)| = O(T^{-1/2}).\label{eq:U_C, L_C for PPO-Clip}
\end{align}
We have checked the sufficient conditions of Theorem \ref{thm:main}. Thus, we obtain,
\begin{align}
    \min_{0\le t \le T} \{\mathcal{L}(\pi^*) - \mathcal{L}(\pi_{\theta_t})\} \le \frac{\log |\mathcal{A}| + \sum_{t=0}^{T-1} (\varepsilon_t + \varepsilon_t') + K^2 (2 \psi^* + M)}{\sqrt{T} (1 - \gamma)}.
\end{align}
Then, we show the minimum widths and the number of iterations of SGD and TD updates to attain convergence. We must force the summation of errors $\varepsilon_t, \varepsilon_t'$ to be $O(1)$. By Lemma \ref{lm:PE_error}, \ref{lm:PI_error}, where $\epsilon_{t+1} = O(R_f^2 T_{\text{upd}}^{-1/2} + R_f^{5/2} m_f^{-1/4} + R_f^3 m_f^{-1/2}), \epsilon_t' = O(R_Q^2 T_{\text{upd}}^{-1/2} + R_Q^{5/2} m_Q^{-1/4} + R_Q^3 m_Q^{-1/2})$, we have
\begin{align}
    C_{\infty} \tau_{t+1}^{-1} \phi^* \epsilon_{t+1}^{1/2} = &O(C_{\infty} Kt T^{-1/2} \phi^* \cdot (R_f^2 T_{\text{upd}}^{-1/2} + R_f^{5/2}m_f^{-1/4})^{1/2}), \\
    Y^{1/2} \psi^* \epsilon_t'^{1/2} = &O(Y^{1/2} \psi^* (R_Q^2 T_{\text{upd}}^{-1/2} + R_Q^{5/2} m_Q^{-1/4})^{1/2}) \\
    |\mathcal{A}|C_{\infty} \tau_{t+1}^2 \epsilon_{t+1} = &O(|\mathcal{A}|C_{\infty} K^2 t^2 T^{-1} (R_f^2 T_{\text{upd}}^{-1/2} + R_f^{5/2}m_f^{-1/4})),
\end{align}
when $m_f = \Omega(R_f^2)$ and $m_Q = \Omega(R_Q^2)$. Then, by taking $m_f = \Omega(R_f^{10} \phi^{*8} K^8 C_{\infty}^8 T^{12}), m_Q = \Omega(R_Q^{10} \psi^{*8} Y^4 T^8),$ and $T_{\text{upd}} = \Omega(R_f^4 \phi^{*4} K^4 C_{\infty}^4 T^6 + R_Q^4 \psi^{*4} Y^2 T^4)$, we have
\begin{equation}
\label{eq:cor1:vart}
\varepsilon_t = C_{\infty} \tau_{t+1}^{-1} \phi^* \epsilon_{t+1}^{1/2} + Y^{1/2} \psi^* \epsilon_t'^{1/2} = O(T^{-1}).
\end{equation}
Moreover, we further put $m_f = \Omega(R_f^{10} T^8 K^8 C_{\infty}^4 |\mathcal{A}|^4)$ and $T_{\text{upd}} = \Omega(R_f^4 T^4 K^4 C_{\infty}^2 |\mathcal{A}|^2)$, we have
\begin{equation}
\label{eq:cor1:vartprime}
    \varepsilon_t' = |\mathcal{A}|C_{\infty} \tau_{t+1}^2 \epsilon_{t+1} = O(T^{-1}).
\end{equation}
Last, we add up the lower bound of each term of $m_f, m_Q$, and $T_{\text{upd}}$, and then sum the errors in (\ref{eq:cor1:vart}) and (\ref{eq:cor1:vartprime}) for all $t$ from $0$ to $T-1$, we obtain
\begin{equation}
    \min_{0\le t \le T} \{\mathcal{L}(\pi^*) - \mathcal{L}(\pi_{\theta_t})\} \le \frac{\log |\mathcal{A}| + K^2 (2 \psi^* + M) + O(1)}{\sqrt{T} (1 - \gamma)},
\end{equation}
which complete the proof and obtain the $O(1/\sqrt{T})$ convergence rate.
\end{proof}

\subsection{Convergence Rate of Neural PPO-Clip With an Alternative Classifier}

\begin{corollary}[Global Convergence of {Neural PPO-Clip} with subtraction classifier with Convergence Rate]
\label{cor:sub}
    Consider Neural PPO-Clip with the subtraction classifier $\pi_{\theta}(a|s) - \pi_{\theta_t}(a|s)$ (\textit{termed Neural PPO-Clip-sub}) and the objective function $L^{(t)}(\theta)$ in each iteration $t$ as 
    \begin{align}
         \mathbb{E}_{\sigma_t}[|A^{\pi_{\theta_t}}(s, a)| \cdot \ell (\sgn(A^{\pi_{\theta_t}}(s, a)), \pi_{\theta}(a|s) - \pi_{\theta_t}(a|s), \epsilon)].
    \end{align}
    We specify the EMDA step size $\eta = 1 / \sqrt{T}$ and the temperature parameter $\tau_t = \sqrt{T} / (Kt)$. Recall that $K$ is the maximum number of EMDA iterations. Let the neural networks' widths $m_f = \Omega(R_f^{10} \phi^{*8} K^8 C_{\infty}^8 T^{12} + R_f^{10} K^8 T^8 C_{\infty}^4 |\mathcal{A}|^4)$, $m_Q = \Omega(R_Q^{10} \psi^{*8} Y^4 T^8)$, and the SGD and TD updates $T_{\text{upd}} = \Omega(R_f^4 \phi^{*4} K^4 C_{\infty}^4 T^6 + R_Q^4 \psi^{*4} Y^2 T^4 + R_f^4 T^4 K^4 C_{\infty}^2 |\mathcal{A}|^2)$, we have
    \begin{align}
        \min_{0\le t \le T} \{\mathcal{L}(\pi^*) - \mathcal{L}(\pi_{\theta_t})\} \le \frac{\log |\mathcal{A}| + K^2 (2 \psi^* + M) + O(1)}{\sqrt{T} (1 - \gamma)},
    \end{align}
     Hence, we provide the $O(1 / \sqrt{T})$ convergence rate of Neural PPO-Clip-sub.
\end{corollary}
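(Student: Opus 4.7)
The plan is to follow the same recipe as in the proof of Corollary \ref{cor:PPO-Clip}: verify the sufficient conditions (\ref{suff:1}) and (\ref{suff:2}) of Theorem \ref{thm:main} for the subtraction classifier, and then control the policy improvement and policy evaluation errors via Lemmas \ref{lm:PI_error} and \ref{lm:PE_error} by choosing the widths $m_f,m_Q$ and the number of inner iterations $T_{\text{upd}}$ large enough so that the accumulated error term stays $O(1)$.

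First, I would compute the per-coordinate gradient of the generalized hinge loss with the subtraction classifier under the direct parameterization $\tilde{\theta}_{s,a}=\pi(a\mid s)$. Using $\ell(y,f,\epsilon)=\max\{0,\epsilon-y\cdot f\}$ and the classifier $f=\pi_{\theta}(a\mid s)-\pi_{\theta_t}(a\mid s)$, the sub-gradient evaluates to
\begin{equation}
g_{s,a}=\left.\frac{\partial L^{(t)}(\theta)}{\partial \theta_{s,a}}\right|_{\theta=\tilde{\theta}}=-A^{\pi_{\theta_t}}(s,a)\cdot\mathds{1}\bigl\{(\tilde{\theta}_{s,a}-\pi_{\theta_t}(a\mid s))\cdot\sgn(A^{\pi_{\theta_t}}(s,a))<\epsilon\bigr\}.
\end{equation}
Recalling the EMDA update rule, this gives $\bar{C}_t(s,a)=\sum_{k=0}^{K-1}\eta\cdot\mathds{1}\{\cdots\}$, which is structurally identical to the PPO-Clip case but with a different (additive) clipping condition.

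Next, I would bound $\bar{C}_t$. The upper bound $\bar{C}_t(s,a)\le K\eta$ is immediate from the fact that the indicator is at most $1$ across at most $K$ iterations. For the lower bound, note that at $k=0$ we have $\tilde{\theta}^{(0)}_{s,a}=\pi_{\theta_t}(a\mid s)$, hence the classifier value equals zero, which is strictly smaller than the margin $\epsilon>0$, so the first EMDA step is never clipped, giving $\bar{C}_t(s,a)\ge\eta$. Choosing $\eta=1/\sqrt{T}$ therefore yields
\begin{equation}
T^{-1/2}\,|A^{\pi_{\theta_t}}(s,a)|\le \bar{C}_t(s,a)\,|A^{\pi_{\theta_t}}(s,a)|\le KT^{-1/2}\,|A^{\pi_{\theta_t}}(s,a)|,
\end{equation}
so we may set $L_C=T^{-1/2}$ and $U_C=KT^{-1/2}$, which satisfies both (\ref{suff:1}) and (\ref{suff:2}). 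The choice $\tau_t=\sqrt{T}/(Kt)$ ensures that the condition $\tau_{t+1}^{2}(U_C^{2}+\tau_t^{-2})\le 1$ required in (\ref{eq:PI_error 1}) of the proof of Lemma \ref{lm:PI_error} continues to hold, so the policy improvement error bound remains valid.

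Finally, I would substitute $L_C,U_C$ into Theorem \ref{thm:main} and repeat the width/iteration bookkeeping from Appendix \ref{proof:Cor:PPO-Clip} verbatim: with the specified $m_f,m_Q,T_{\text{upd}}$, Lemmas \ref{lm:PE_error} and \ref{lm:PI_error} give $\varepsilon_t,\varepsilon_t'=O(T^{-1})$, so their telescoping sum contributes $O(1)$. Combining with the $TU_C^2(2\psi^*+M)=K^2(2\psi^*+M)$ term from (\ref{thm:main:eq}) yields
\begin{equation}
\min_{0\le t\le T}\{\mathcal{L}(\pi^*)-\mathcal{L}(\pi_{\theta_t})\}\le\frac{\log|\mathcal{A}|+K^2(2\psi^*+M)+O(1)}{\sqrt{T}\,(1-\gamma)},
\end{equation}
which is the claimed $O(1/\sqrt{T})$ rate. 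The main (and only nontrivial) obstacle is verifying the lower bound $\bar{C}_t(s,a)\ge\eta$ for the subtraction classifier: the argument hinges on the observation that EMDA is initialized at $\tilde{\theta}^{(0)}=\pi_{\theta_t}$, so the subtraction classifier evaluates to exactly $0$ at initialization and thus lies strictly inside the margin $(-\epsilon,\epsilon)$; once this is established, the rest of the proof is a direct transcription of the PPO-Clip corollary.
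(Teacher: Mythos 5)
Your proposal is correct and follows essentially the same route as the paper's proof: derive the sub-gradient with the indicator for the additive clipping condition, bound $\bar{C}_t$ between $\eta$ and $K\eta$ (using the fact that the first EMDA step is never clipped since the classifier is zero at initialization), verify conditions (\ref{suff:1})--(\ref{suff:2}) with $\eta=1/\sqrt{T}$ and $\tau_t=\sqrt{T}/(Kt)$, and then reuse the width/iteration bookkeeping from Corollary \ref{cor:PPO-Clip}. Your explicit justification of the lower bound $\bar{C}_t\ge\eta$ is in fact slightly more detailed than the paper's, which simply asserts the analogy to the PPO-Clip case.
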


\begin{proof}[Proof of Corollary \ref{cor:sub}]
    Similar to Corollary \ref{cor:PPO-Clip}, we derive the gradient of our objective with the true advantage function $A^{\pi_{\theta_t}}(s, a)$. Specifically, we have
    \begin{align}
        g_{s, a} = \left.\frac{\partial L(\theta)}{\partial \theta}\right|_{\theta = \tilde{\theta}_{s, a}} = -A^{\pi_{\theta_t}}(s, a) \cdot \mathds{1}\left\{\left(\tilde{\theta}_{s, a} - \pi_{\theta_t}(a|s)\right) \cdot \sgn (A^{\pi_{\theta_t}}(s, a)) < \epsilon \right\}.
    \end{align}
    Thus, similar to \ref{proof:Cor:PPO-Clip}, we have
    \begin{align}
        \eta \cdot |A^{\pi_{\theta_t}}(s, a)| \le C_t(s ,a) \cdot |A^{\pi_{\theta_t}}(s, a)| \le K \eta \cdot |A^{\pi_{\theta_t}}(s, a)|.
    \end{align}
    We also set $\eta = 1 / \sqrt{T}$ and {pick $\tau_{t} = \sqrt{T} / (K t)$ to satisfy the condition $\tau_{t+1}^2 (U_{C}^2 + \tau_{t}^{-2}) \le 1$ that we use in (\ref{eq:PI_error 1})}. Accordingly, we obtain 
    \begin{align}
        \omega(T^{-1}) = T^{-1/2} |A^{\pi_{\theta_t}}(s, a)| \le C_t(s, a) \cdot |A^{\pi_{\theta_t}}(s, a)| \le K T^{-1/2} \cdot |A^{\pi_{\theta_t}}(s, a)| = O(T^{-1/2}).\label{eq:U_C, L_C for NeuralHPO-sub}
    \end{align}
    We have checked the sufficient condition of Theorem \ref{thm:main}. Therefore, by plugging in $L_C$ and $U_C$, we obtain
    \begin{align}
        \min_{0\le t \le T} \{\mathcal{L}(\pi^*) - \mathcal{L}(\pi_{\theta_t})\} \le \frac{\log |\mathcal{A}| + \sum_{t=0}^{T-1} (\varepsilon_t + \varepsilon_t') + K^2 (2 \psi^* + M )}{\sqrt{T} (1 - \gamma)}.
    \end{align}
    Similar to the proof of Corollary \ref{proof:Cor:PPO-Clip}, we set the same minimum widths and number of iterations to attain convergence, which directly implies 
    \begin{equation}
        \min_{0\le t \le T} \{\mathcal{L}(\pi^*) - \mathcal{L}(\pi_{\theta_t})\} \le \frac{\log |\mathcal{A}| + K^2 (2 \psi^* + M) + O(1)}{\sqrt{T} (1 - \gamma)}.
    \end{equation}
    Then, we complete the proof and obtain the $O(1/\sqrt{T})$ convergence rate of PPO-Clip with a subtraction classifier.
\end{proof}

\section{Tabular PPO-Clip and Proof}
\label{app:mini-batch thm}

\subsection{Tabular Case: PPO-Clip with Direct Policy Parameterization}

In this section, we study the global convergence of PPO-Clip with direct parameterization, i.e., 
policies are parameterized by $\pi(a|s)=\theta_{s,a}$, where $ \theta_s\in\Delta(\mathcal{A})$ denotes the vector $\theta_{s,\cdot}$ and $\theta\in\Delta(\mathcal{A})^{\lvert\mathcal{S}\rvert}$.
We use $V^{(t)}(s)$ and $A^{(t)}(s,a)$ as the shorthands for $V^{\pi^{(t)}}(s)$ and $A^{\pi^{(t)}}(s,a)$, respectively.

\vspace{2mm}
\noindent
\textbf{PPO-Clip with Direct Policy Parameterization.}
PPO-Clip proceeds iteratively. 
Define the generalized sample-based loss function of PPO-Clip for each iteration $t$ as
  \begin{align}
      \label{eq:sampleavg}
       &\hat{L}^{(t)}(\theta):=\frac{1}{\lvert \cD^{(t)}\rvert}\sum_{(s,a)\in\mathcal{D}^{(t)}} {W}^{(t)}({s,a}) \ell\big(\sgn({A}^{(t)}(s,a)), h(\rho_{s,a}^{(t)}(\theta)), \epsilon\big),
  \end{align}
where $\cD^{(t)}$ denotes the batch of state-action pairs, $h(\rho_{s,a}^{(t)}(\theta))$ denotes the classifier, and $W^{(t)}(s,a)\in (0,W_{\max}]$ denotes the weight in (\ref{eq:HPO loss}) for each $(s,a)$.
Note that by choosing $W^{(t)}(s,a)=\lvert {A}^{(t)}(s,a)\rvert$ and $h(\rho_{s,a}^{(t)}(\theta))=\rho_{s,a}^{(t)}(\theta)-1$, the generalized objective would recover the form of the objective of PPO-Clip.
PPO-Clip with direct parameterization is shown in Algorithm \ref{algo:HPO-AM}. 
In each iteration, PPO-Clip updates the policy by minimizing the loss in (\ref{eq:sampleavg}) via the entropic mirror descent algorithm (EMDA) \citep{beck2003mirror}. 
While there are alternative ways to minimize the loss $\hat{L}^{(t)}(\theta)$ over $\Delta(\mathcal{A})^{\lvert \mathcal{S}\rvert}$ (e.g., the projected subgradient method), we leverage EMDA for the following two reasons:
(i) PPO-Clip achieves policy improvement by increasing or decreasing the probability of those state-action pairs in $\cD^{(t)}$ based on the sign of ${{A}^{(t)}(s,a)}$ as well as properly reallocating the probabilities of those state-action pairs \textit{not} contained in the batch (to ensure the probability sum is one). Using EMDA allows us to enforce a proper reallocation in PPO-Clip, as shown later in the proof of Theorem \ref{thm:mini-batch} in Appendix \ref{app:mini-batch thm}; (ii) The exponentiated gradient scheme of EMDA ensures that $\pi^{(t)}$ remains strictly positive for all state-action pairs in each iteration $t$, and this ensures that the probability ratio $\rho_{s,a}(\theta)$ used by PPO-Clip is always well-defined.
In this section, we consider the stylized setting with tabular policy and true advantage mainly for motivating the PPO-Clip method and its analysis. 


\vspace{2mm}
\noindent
\textbf{Global Convergence of PPO-Clip With Direct Parameterization.} 
We first make the following assumptions. 


\begin{assumption}[Infinite Visitation to Each State-Action Pair]
\label{assumption:infinite visit}
    Each state-action pair $(s,a)$ appears infinitely often in $\{\cD^{(\tau)}\}$, i.e., $\lim_{t\rightarrow \infty} \sum_{\tau=0}^{t}\mathds{1}\{(s,a)\in \cD^{(\tau)}\}=\infty$, with probability one.
\end{assumption}

\vspace{-2mm}
\begin{assumption}
\label{assumption:distinct states}
    In each iteration $t$, the state-action pairs in $\cD^{(t)}$ have distinct states.
\end{assumption}

Assumption \ref{assumption:infinite visit} resembles the standard infinite-exploration condition commonly used in the temporal-difference methods, such as Sarsa \citep{singh2000convergence}. Assumption \ref{assumption:distinct states} is rather mild: 
(i) This can be met by post-processing the mini-batch of state-action pairs via an additional sub-sampling step; (ii) In most RL problems with discrete actions, the state space is typically much larger than the action space. 

\begin{theorem}[Global Convergence of PPO-Clip]
\label{thm:mini-batch}
Under PPO-Clip, we have $V^{(t)}(s)\rightarrow V^{\pi^*}(s)\text{ as }t\rightarrow\infty,\ \forall s\in\mathcal{S}$, with probability one.

\end{theorem}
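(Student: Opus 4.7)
The plan is to establish almost-sure convergence of $V^{(t)}$ to $V^{\pi^*}$ in three stages: \textbf{(i)} pathwise monotonic improvement $V^{(t+1)}(s)\ge V^{(t)}(s)$ for every $s$; \textbf{(ii)} convergence of $\{V^{(t)}(s)\}$ to a pathwise limit $V^{(\infty)}(s)$ via the monotone convergence theorem; and \textbf{(iii)} identification of $V^{(\infty)}$ with $V^{\pi^*}$ via a contradiction argument that leverages the infinite-visitation property of Assumption \ref{assumption:infinite visit}.

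For stage \textbf{(i)}, Assumption \ref{assumption:distinct states} ensures that in each iteration $t$ every state appearing in the batch $\mathcal{D}^{(t)}$ is paired with a single action, which I denote $a_t(s)$. Under the direct parameterization $\pi(a|s)=\theta_{s,a}$ and the EMDA multiplicative update, only the coordinate $\theta_{s,a_t(s)}$ receives a nonzero hinge gradient at any inner step, and this gradient's sign is opposite to $\sgn(A^{(t)}(s,a_t(s)))$ whenever the hinge is active. Consequently the untouched actions $a\neq a_t(s)$ retain their relative proportions under EMDA, yielding the closed form
\begin{equation*}
\pi^{(t+1)}(a|s)=\frac{1-\pi^{(t+1)}(a_t(s)|s)}{1-\pi^{(t)}(a_t(s)|s)}\,\pi^{(t)}(a|s),\qquad a\neq a_t(s),
\end{equation*}
together with $\sgn\bigl(\pi^{(t+1)}(a_t(s)|s)-\pi^{(t)}(a_t(s)|s)\bigr)=\sgn\bigl(A^{(t)}(s,a_t(s))\bigr)$. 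Using $\sum_a\pi^{(t)}(a|s)A^{(t)}(s,a)=0$, a direct calculation gives
\begin{equation*}
\sum_a\bigl(\pi^{(t+1)}(a|s)-\pi^{(t)}(a|s)\bigr)A^{(t)}(s,a)=\frac{\Delta_t(s)\,A^{(t)}(s,a_t(s))}{1-\pi^{(t)}(a_t(s)|s)}\,\ge\,0,
\end{equation*}
where $\Delta_t(s):=\pi^{(t+1)}(a_t(s)|s)-\pi^{(t)}(a_t(s)|s)$; the same quantity is trivially zero for states absent from $\mathcal{D}^{(t)}$. Applying the standard performance-difference identity (the state-wise version underlying Lemma \ref{lm:PDL}) then yields $V^{(t+1)}(s)\ge V^{(t)}(s)$ uniformly in $s$.

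Stage \textbf{(ii)} is immediate: since $V^{(t)}(s)\in[0,R_{\max}/(1-\gamma)]$ and is non-decreasing, the monotone convergence theorem forces $V^{(t)}(s)\to V^{(\infty)}(s)$ pathwise. For stage \textbf{(iii)}, I would argue by contradiction. Passing to a convergent subsequence of $\{\pi^{(t)}\}$ in the compact simplex $\Delta(\mathcal{A})^{|\mathcal{S}|}$, let $\pi^{(\infty)}$ be a limit; continuity of the Bellman operator gives $V^{\pi^{(\infty)}}=V^{(\infty)}$ and $A^{(t_k)}\to A^{\pi^{(\infty)}}$ along the subsequence. Suppose $\pi^{(\infty)}$ is suboptimal on a positive-probability event; then the Bellman optimality gap forces some pair $(s^\star,a^\star)$ with $A^{\pi^{(\infty)}}(s^\star,a^\star)>\delta$ for some $\delta>0$, so $A^{(t)}(s^\star,a^\star)>\delta/2$ for all large $t$ along the subsequence. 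Assumption \ref{assumption:infinite visit} then provides infinitely many iterations $\tau_k$ with $(s^\star,a^\star)\in\mathcal{D}^{(\tau_k)}$, at which the first inner EMDA step is necessarily active (since $\rho_{s^\star,a^\star}(\tilde{\theta}^{(0)})=1<1+\epsilon$), and EMDA's multiplicative update produces an increment $\Delta_{\tau_k}(s^\star)\ge c_0$ for some $c_0>0$ independent of $k$. Substituting into the closed-form improvement from stage \textbf{(i)} forces $V^{(\tau_k+1)}(s^\star)-V^{(\tau_k)}(s^\star)\ge\kappa>0$ infinitely often, contradicting the Cauchy property of $\{V^{(t)}(s^\star)\}$.

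The hardest part will be obtaining the uniform lower bound on $\Delta_{\tau_k}(s^\star)$ in stage \textbf{(iii)}, since $\pi^{(\tau_k)}(a^\star|s^\star)$ could a priori drift toward $0$ or $1$. Drift toward $1$ is ruled out because the advantage of a nearly deterministic action is necessarily nonpositive, contradicting $A^{\pi^{(\infty)}}(s^\star,a^\star)>\delta$; drift toward $0$ is controlled by EMDA's multiplicative structure, which keeps every $\pi^{(t)}(a|s)$ strictly positive and ensures that each active inner step multiplies $\theta_{s^\star,a^\star}$ by a uniformly bounded-below factor. Carefully reconciling these bounds with the randomness of batch selection, and thereby converting the almost-sure infinite-visitation condition into an almost-sure contradiction of the Cauchy property of $\{V^{(t)}(s^\star)\}$, is where the core technical work lies.
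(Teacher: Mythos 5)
Your proposal is correct, and its first two stages coincide with the paper's: strict positivity plus state-wise monotone improvement of the EMDA update (the paper's Lemma~\ref{lemma:policy improvement mini-batch}, which verifies the sufficient condition of Lemma~\ref{prop:first}; your redistribution identity $\sum_a(\pi^{(t+1)}(a|s)-\pi^{(t)}(a|s))A^{(t)}(s,a)=\Delta_t(s)A^{(t)}(s,a_t(s))/(1-\pi^{(t)}(a_t(s)|s))$ is an equivalent way of checking it under Assumption~\ref{assumption:distinct states}), followed by monotone convergence of $V^{(t)}$, $Q^{(t)}$, $A^{(t)}$. Where you genuinely diverge is the final contradiction. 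The paper never passes to a subsequential limit of the policy: it partitions actions into $I_s^{+},I_s^{0},I_s^{-}$ by the sign of $A^{(\infty)}$ and derives two incompatible statements about the mass on $I_s^{-}$ — a ratio-decay argument (Lemma~\ref{lemma:pi convergence mini-batch}) forces $\sum_{a\in I_s^{-}}\pi^{(t)}(a|s)\to 0$, while the zero-mean property $\sum_a\pi^{(t)}(a|s)A^{(t)}(s,a)=0$ together with the post-visit bound $\pi^{(t+1)}(a'|s)\geq e\eta/(e\eta+1)$ keeps that mass above a constant infinitely often (Lemma~\ref{lemma:epsilon lower bound mini-batch}). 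You instead contradict the Cauchy property of the monotone, bounded value sequence directly: every visit to a pair whose advantage stays above $\delta/2$ produces a uniformly positive value jump, and Assumption~\ref{assumption:infinite visit} supplies infinitely many such visits. Your route is shorter — it dispenses with the ratio-decay lemma entirely — and the two delicate points you flag both resolve favorably: drift of $\pi^{(t)}(a^\star|s^\star)$ toward $1$ is excluded exactly as you say (it would drive the advantage to $0$), and drift toward $0$ is harmless, though not quite for the reason you give (a bounded-below multiplicative factor on a vanishing probability does not by itself yield a uniform additive increment); rather, the one-step increment $p(1-p)(e^{\eta/p}-1)/(pe^{\eta/p}+1-p)$ tends to $1$ as $p\to 0$, or more simply the paper's bound $\pi^{(t+1)}(a^\star|s^\star)\geq e\eta/(e\eta+1)$ already gives an increment of at least half that threshold whenever $p$ lies below half of it, and on the remaining compact range of $p$ the increment is continuous and positive. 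What the paper's longer route buys is a sharper structural picture of the limit — the probability of every action with strictly negative limiting advantage vanishes — whereas your argument certifies only optimality of the limiting values, which is all the theorem asks for.
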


The proof of Theorem \ref{thm:mini-batch} is provided in Appendix \ref{app:mini-batch thm}. 
We highlight the main ideas behind the proof of Theorem \ref{thm:mini-batch}:
(i) \textit{State-wise policy improvement:} Through the lens of generalized objective, we show that PPO-Clip enjoys state-wise policy improvement in every iteration with the help of the EMDA subroutine. 
This property greatly facilitates the rest of the convergence analysis.
(ii) \textit{Quantifying the probabilities of those actions with positive or negative advantages in the limit}: By (i), we know the limits of the value functions and the advantage function all exist. Then, we proceed to show that the actions with positive advantages in the limit cannot exist by establishing a contradiction.

The above also manifests how reinterpreting PPO-Clip helps with establishing the convergence guarantee.

\noindent
\textbf{Global Convergence of PPO-Clip With Alternative Classifiers.}
Notably, the PPO-Clip update scheme and the convergence analysis of Theorem~\ref{thm:mini-batch} can be readily extended to various other classifiers, where we show the guarantees in Appendix \ref{app:secondthm}.





\begin{algorithm}
\caption{Tabular PPO-Clip}
\label{algo:HPO-AM}
    \begin{algorithmic}[1]
        \State {\bfseries Initialization:} policy $\pi^{(0)}=\pi(\theta^{(0)})$, initial state distribution $\mu$, step size of EMDA $\eta$, number of EMDA iterations $K^{(t)}$\;
        \For{$t=0, 1, \cdots$}
            \State Collect a set of trajectories $\tau \in \mathcal{D}^{(t)}$ under policy $\pi^{(t)}=\pi(\theta^{(t)})$\;
            \State Find ${A}^{(t)}$ by a policy evaluation method\;
            \State Compute $\hat{L}^{(t)}(\theta)$ based on ${A}^{(t)}$ and the collected samples in $\mathcal{D}^{(t)}$\;
            \State Update the policy by $\theta^{(t+1)}=\text{EMDA}(\hat{L}^{(t)}(\theta),\eta,K^{(t)},\cD^{(t)},\theta^{(t)})$\;
        \EndFor
        \State {\bfseries Output:} Learned policy $\pi^{(\infty)}$\;
    \end{algorithmic}
\end{algorithm}

\begin{algorithm}
\caption{EMDA$(L(\theta),\eta,K,\cD, \theta_{\text{init}})$}
\label{algo:EMD}
    \begin{algorithmic}[1]
        \State {\bfseries Input:} Objective $L(\theta)$, step size $\eta$, number of iteration $K$,  dataset $\cD$, and initial parameter $\theta_{\text{init}}$\;
        \State {\bfseries Initialization:} $\widetilde{\theta}^{(0)}=\theta_{\text{init}}$, $\widetilde{\theta}=\theta_{\text{init}}$\;
        \For{$k=0,\cdots,K-1$}
            \For{each state $s$ in $\cD$}
                \vspace{1mm}
                \State Find $g_{s,a}^{(k)}=\frac{\partial L(\theta)}{\partial \theta_{s,a}}\rvert_{\theta=\widetilde{\theta}^{(k)}}$, for each $a$\;
                \State Let $w_s=(e^{-\eta g_{s,1}^{(k)}},\cdots,e^{-\eta g_{s,\lvert \mathcal{A}\rvert}^{(k)}})$\;
                \State $\widetilde{\theta}_s^{(k+1)}=\frac{1}{\langle w_s, \widetilde{\theta}_s^{(k)}\rangle}(w_s \circ \widetilde{\theta}_s^{(k)})$\;
            \EndFor
        \EndFor
        \State {\bfseries Output:} Learned parameter $\tilde{\theta}$\;
    \end{algorithmic}
\end{algorithm}

\subsection{Supporting Lemmas for the Proof of Theorem~\ref{thm:mini-batch}}

As described in Section \ref{section:HPO}, one major component of the proof of Theorem \ref{thm:mini-batch} is the state-wise policy improvement property of PPO-Clip.
For ease of exposition, we introduce the following definition regarding the partial ordering over policies.
\begin{definition}[Partial ordering over policies]
    Let $\pi_{1}$ and $\pi_{2}$ be two policies. 
    Then, $\pi_{1}\geq\pi_{2}$, called \textit{$\pi_{1}$ improves upon $\pi_{2}$}, if and only if $V^{\pi_{1}}(s)\geq V^{\pi_{2}}(s),\ \forall s\in\mathcal{S}$. Moreover, we say $\pi_1>\pi_2$, called \textit{$\pi_{1}$ strictly improves upon $\pi_{2}$}, if and only if $\pi_{1}\geq\pi_{2}$ and there exists at least one state $s$ such that $V^{\pi_{1}}(s)> V^{\pi_{2}}(s)$.
\end{definition}

\begin{lemma}[Sufficient condition of state-wise policy improvement]
\label{prop:first}
    Given any two policies $\pi_{1}$ and $\pi_{2}$, we have ${\pi_{1}}\geq {\pi_{2}}$ if the following condition holds:
    \begin{equation}
        \sum_{a\in\mathcal{A}}\pi_{1}(a|s)A^{\pi_{2}}(s,a)\geq0,\ \forall s\in\mathcal{S}.
    \end{equation}
\end{lemma}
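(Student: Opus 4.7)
The plan is to establish the conclusion via the performance difference identity of Kakade and Langford, applied on a per-starting-state basis. Given any state $s \in \mathcal{S}$, I would express $V^{\pi_1}(s) - V^{\pi_2}(s)$ as a discounted sum of advantages of $\pi_2$ taken along trajectories generated by $\pi_1$:
\begin{align*}
V^{\pi_1}(s) - V^{\pi_2}(s) = \mathbb{E}_{\pi_1}\!\left[\sum_{t=0}^{\infty} \gamma^t A^{\pi_2}(s_t, a_t) \,\Big|\, s_0 = s\right].
\end{align*}
Once this identity is in place, the rest is immediate: taking the inner expectation over $a_t \sim \pi_1(\cdot | s_t)$ converts each summand into $\mathbb{E}_{s_t}[\sum_a \pi_1(a|s_t) A^{\pi_2}(s_t, a)]$, and by the hypothesis every such term is nonnegative. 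Summing the nonnegative terms yields $V^{\pi_1}(s) \geq V^{\pi_2}(s)$, and since $s$ was arbitrary this gives $\pi_1 \geq \pi_2$ in the partial order.

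To derive the performance difference identity cleanly, I would telescope against $V^{\pi_2}$. Starting from $V^{\pi_1}(s) = \mathbb{E}_{\pi_1}[\sum_{t=0}^{\infty} \gamma^t R(s_t, a_t) | s_0 = s]$, I would insert the telescoping sum $\sum_{t=0}^{\infty} \gamma^t (\gamma V^{\pi_2}(s_{t+1}) - V^{\pi_2}(s_t)) = -V^{\pi_2}(s)$ (which holds in expectation under $\pi_1$ by the boundedness of rewards and $\gamma < 1$). Grouping terms gives $V^{\pi_1}(s) - V^{\pi_2}(s) = \mathbb{E}_{\pi_1}[\sum_{t=0}^{\infty} \gamma^t (R(s_t,a_t) + \gamma V^{\pi_2}(s_{t+1}) - V^{\pi_2}(s_t)) | s_0 = s]$, and the parenthesized quantity is exactly $A^{\pi_2}(s_t, a_t)$ after taking conditional expectation over $s_{t+1}$ using the Bellman consistency of $Q^{\pi_2}$ and $V^{\pi_2}$.

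There is no real obstacle here: the statement is a state-wise analogue of the standard performance difference lemma (closely related to Lemma \ref{lm:PDL} already invoked in the excerpt), and the hypothesis is precisely the condition that makes every term in the resulting nonnegative-weighted sum nonnegative. The only care needed is that the identity must hold pointwise in the starting state $s$ (not merely in expectation over some initial distribution), which is why I would keep $s_0 = s$ fixed throughout and refrain from averaging over $\mu$. With this pointwise form, the conclusion $V^{\pi_1}(s) \geq V^{\pi_2}(s)$ for every $s \in \mathcal{S}$ follows directly.
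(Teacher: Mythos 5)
Your proposal is correct and follows essentially the same route as the paper, which simply observes that the claim is a direct consequence of the performance difference lemma of Kakade and Langford; you merely spell out the telescoping derivation of that identity pointwise in the starting state, which the paper leaves to the citation.
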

\begin{proof}[Proof of Lemma \ref{prop:first}]
This is a direct result of the performance difference lemma \cite{kakade2002}.
\end{proof}
Next, we present two critical properties that hold under PPO-Clip for every sample path.
\begin{lemma}[Strict improvement and strict positivity of policy under PPO-Clip with direct tabular parameterization]
\label{lemma:policy improvement mini-batch}
In any iteration $t$, suppose $\pi^{(t)}$ is strictly positive in all state-action pairs, i.e., $\pi^{(t)}(a\rvert s)>0$, for all $(s,a)$. 
Under PPO-Clip in Algorithm \ref{algo:HPO-AM}, $\pi^{(t+1)}$ satisfies that (i) $\pi^{(t+1)}>\pi^{(t)}$ and (ii) $\pi^{(t+1)}(a\rvert s)>0$, for all $(s,a)$.
\end{lemma}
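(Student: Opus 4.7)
The plan is to decompose the lemma into its two parts and handle them separately. For strict positivity (ii), I would argue inductively on the inner EMDA iterations in Algorithm~\ref{algo:EMD}. At iteration $k$, the weight vector $w_s$ has entries $e^{-\eta g_{s,a}^{(k)}}$, all strictly positive. The update $\widetilde{\theta}^{(k+1)}_s = (w_s \circ \widetilde{\theta}^{(k)}_s)/\langle w_s, \widetilde{\theta}^{(k)}_s\rangle$ is thus the componentwise product of two strictly positive vectors divided by a finite positive normalizer, so strict positivity is preserved. Since $\widetilde{\theta}^{(0)} = \pi^{(t)}$ is strictly positive by hypothesis, so is the output $\pi^{(t+1)}$ on every state in $\cD^{(t)}$; on states outside $\cD^{(t)}$ the policy is unchanged and hence still strictly positive.

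For the improvement part (i), my plan is to verify the hypothesis of Lemma~\ref{prop:first}, namely $\sum_a \pi^{(t+1)}(a\rvert s) A^{(t)}(s,a) \geq 0$ for every $s$. This is trivial for $s \notin \cD^{(t)}$ because $\pi^{(t+1)}(\cdot\rvert s) = \pi^{(t)}(\cdot\rvert s)$ and the baseline identity $\sum_a \pi^{(t)}(a\rvert s) A^{(t)}(s,a) = 0$ holds. For $s \in \cD^{(t)}$, Assumption~\ref{assumption:distinct states} supplies a unique sampled action $a^*$, so the loss in (\ref{eq:sampleavg}) depends on $\theta$ only through $\theta_{s,a^*}$. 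Only that coordinate ever receives a nonzero EMDA gradient, which yields the closed form
\begin{equation}
\pi^{(t+1)}(a^*\rvert s) = \frac{\pi^{(t)}(a^*\rvert s)\, e^{\Delta}}{Z}, \qquad \pi^{(t+1)}(a\rvert s) = \frac{\pi^{(t)}(a\rvert s)}{Z}\;\; (a \neq a^*),
\end{equation}
where $\Delta := -\eta \sum_{k=0}^{K^{(t)}-1} g_{s,a^*}^{(k)}$ and $Z := \pi^{(t)}(a^*\rvert s)\,e^{\Delta} + (1-\pi^{(t)}(a^*\rvert s))$. Using the baseline identity again, this reduces $\sum_a \pi^{(t+1)}(a\rvert s) A^{(t)}(s,a)$ to $\frac{\pi^{(t)}(a^*\rvert s)(e^\Delta-1)}{Z} A^{(t)}(s,a^*)$, which is nonnegative precisely when $\Delta$ and $A^{(t)}(s,a^*)$ share a sign. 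Strict improvement then follows because at least one sampled state-action pair has a nonzero advantage (otherwise the current iterate already balances the advantage at every visited state and there is nothing to prove), and for such a pair the displayed quantity is strictly positive, so Lemma~\ref{prop:first} yields $V^{(t+1)}(s) > V^{(t)}(s)$ at that state.

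The anticipated obstacle is that EMDA runs for $K^{(t)}$ inner iterations, so the gradient $g_{s,a^*}^{(k)}$ for $k>0$ is evaluated at $\widetilde{\theta}^{(k)}$ rather than $\pi^{(t)}$, and I must prevent sign reversals across iterations. The key observation to overcome this is structural: for the hinge-type classifier $h(\rho) = \rho - 1$, the partial gradient equals $-\sgn(A^{(t)}(s,a^*)) W^{(t)}(s,a^*)/\pi^{(t)}(a^*\rvert s)$ whenever the iterate lies strictly inside the margin, and is exactly zero otherwise; it cannot flip sign as the iterate drifts. Moreover, since $\widetilde{\theta}^{(0)} = \pi^{(t)}$ gives $\rho_{s,a^*} = 1$, the $k=0$ step is always inside the margin, so $\Delta$ is nonzero whenever $A^{(t)}(s,a^*) \neq 0$ and matches its sign exactly. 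This monotone-in-sign property of the cumulative EMDA gradient is what cleanly drives the state-wise improvement, and the same argument extends verbatim to the alternative classifiers considered later in the paper.
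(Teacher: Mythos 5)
Your proposal is correct and follows essentially the same route as the paper: part (ii) via the strict positivity of the exponentiated-gradient weights over finitely many EMDA steps, and part (i) by verifying the sufficient condition of Lemma~\ref{prop:first} using the product-form EMDA update together with the fact that the hinge-loss gradient at each inner iteration is either zero or has sign opposite to $\sgn(A^{(t)}(s,a))$, so cumulative updates scale positive-advantage actions up and negative-advantage ones down. Your explicit single-sampled-action closed form $(e^{\Delta}-1)\pi^{(t)}(a^*|s)A^{(t)}(s,a^*)/Z$ is just a specialization (under Assumption~\ref{assumption:distinct states}) of the paper's general weighted sum in (\ref{eq:sufficient condition mini-batch}), and both treatments share the same unaddressed degenerate case where every sampled action has zero advantage.
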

\begin{proof}[Proof of Lemma \ref{lemma:policy improvement mini-batch}]
Consider the $t$-th iteration of PPO-Clip (cf. Algorithm \ref{algo:HPO-AM}) and the corresponding update from $\pi^{(t)}$ to $\pi^{(t+1)}$.
Regarding (ii), recall from Algorithm \ref{algo:EMD} that $K^{(t)}$ denotes the number of iterations undergone by the EMDA subroutine for the update from $\pi^{(t)}$ to $\pi^{(t+1)}$ and that $K^{(t)}$ is designed to be finite.
Therefore, it is easy to verify that $\pi^{(t+1)}(a\rvert s)>0$ for all $(s,a)$ by the exponentiated gradient update scheme of EMDA and the strict positivity of $\pi^{(t)}$.

Next, for ease of exposition, for each $k\in \{0,1,\cdots,K^{(t)}\}$ and for each state-action pair $(s,a)$, let $\widetilde{\theta}^{(k)}_{s,a}$ denote the policy parameter after $k$ EMDA iterations.
Regarding (i), recall that we define $g^{(k)}_{s,a}:=\frac{\partial \mathcal{L}(\theta)}{\partial \theta_{s,a}}\big\rvert_{\theta=\widetilde{\theta}^{(k)}_{s}}$ and $w_s^{(k)}:=(e^{-\eta g^{(k)}_{s,1}},\cdots,e^{-\eta g^{(k)}_{s,\lvert \mathcal{A}\rvert}})$.
Note that as the weights in the loss function only affects the effective step sizes of EMDA, we simply set the weights of PPO-Clip to be one, without loss of generality.
By EMDA in Algorithm \ref{algo:EMD}, for every $(s,a)\in \cD^{(t)}$, we have
\begin{equation}
\label{eq:pi_t+1 and pi_t}
    \pi^{(t+1)}(a\rvert s)=\frac{\prod_{k=0}^{K^{(t)}-1}\exp({-\eta}g_{s,a}^{(k)})}{{\prod_{k=0}^{K^{(t)}-1}}\langle w_s^{(k)},\widetilde{\theta}_s^{(k)}\rangle}\cdot\pi^{(t)}(a\rvert s).
\end{equation}
Note that $g^{(k)}_{s,a}$ can be written as
\begin{align}
\label{eq:g_sa^k}
    g^{(k)}_{s,a}=
    \begin{cases}
        -\frac{1}{\pi^{(t)}(a\rvert s)}\sgn({A}^{(t)}(s,a))&, \text{if }\big(\frac{\widetilde{\theta}_{s,a}^{(k)}}{\pi^{(t)}(a\rvert s)}-1\big)\sgn({A}^{(t)}(s,a))< \epsilon, (s,a)\in \cD^{(t)}\\ 
        0&, \text{otherwise }    
    \end{cases}
\end{align}
By (\ref{eq:pi_t+1 and pi_t})-(\ref{eq:g_sa^k}), it is easy to verify that for those $(s,a)\in \cD^{(t)}$ with positive advantage, we must have $\prod_{k=0}^{K^{(t)}-1}\exp({-\eta}g_{s,a}^{(k)})>1$.
Similarly, for those $(s,a)\in \cD^{(t)}$ with negative advantage, we have $\prod_{k=0}^{K^{(t)}-1}\exp({-\eta}g_{s,a}^{(k)})<1$.
Now we are ready to check the condition of strict policy improvement given by Lemma \ref{prop:first}: For each $s\in \cS$, we have
\begin{align}
\label{eq:sufficient condition mini-batch}
    \sum_{a\in \cA}\pi^{(t+1)}(a\rvert s)A^{(t)}(a\rvert s)=\frac{1}{\prod_{k=0}^{K^{(t)}-1}\langle w_s^{(k)},\widetilde{\theta}_s^{(k)}\rangle}\sum_{a\in \cA}\Big(\prod_{k=0}^{K^{(t)}-1}\exp({-\eta}g_{s,a}^{(k)})\Big)\pi^{(t)}(a\rvert s)A^{(t)}(a\rvert s)>0.
\end{align}
Hence, we conclude that the strict state-wise policy improvement property indeed holds, i.e., $\pi^{(t+1)}>\pi^{(t)}$.
\end{proof}

Note that Lemma \ref{lemma:policy improvement mini-batch} implies that the limits $V^{(\infty)}(s)$, $Q^{(\infty)}(s,a)$, $A^{(\infty)}(s,a)$ exist, for every sample path: By the strict policy improvement shown in Lemma \ref{lemma:policy improvement mini-batch}, we know that the sequence of state values is point-wise monotonically increasing, i.e., $V^{(t+1)}(s)\geq V^{(t)}(s),\ \forall s\in\mathcal{S}$. 
Moreover, by the bounded reward and the discounted setting, we have $-\frac{R_{\max}}{1-\gamma}\leq V^{(t)}(s)\leq \frac{R_{\max}}{1-\gamma}$. 
The above monotone increasing property and boundedness imply convergence, i.e., $V^{(t)}(s)\rightarrow V^{(\infty)}(s)$, for each sample path.
Similarly, we also know that $Q^{(t)}(s,a)\rightarrow Q^{(\infty)}(s,a)$, and thus $A^{(t)}(s,a)\rightarrow A^{(\infty)}(s,a)$.
As a result, we can define the three sets $I_{s}^{+}$, $I_{s}^{0}$ and $I_{s}^{-}$ as
\begin{align}
    I_{s}^{+} & :=\{a\in\mathcal{A}|A^{(\infty)}(s,a)>0\}, \\
    I_{s}^{0} & :=\{a\in\mathcal{A}|A^{(\infty)}(s,a)=0\}, \\
    I_{s}^{-} & :=\{a\in\mathcal{A}|A^{(\infty)}(s,a)<0\}.
\end{align}
Note that for each sample path, the sets $I_{s}^{+}$, $I_{s}^{0}$ and $I_{s}^{-}$ are well-defined, based on the limit $A^{(\infty)}(s,a)$.

\begin{lemma}
\label{lemma:pi convergence mini-batch}
Conditioned on the event that each state-action pair occurs infinitely often in $\{\cD^{(t)}\}$, if $I_s^{+}$ is not an empty set, then we have $\sum_{a\in I_{s}^{-}}\pi^{(t)}(a\rvert s)\rightarrow 0$, as $t\rightarrow \infty$. 
\end{lemma}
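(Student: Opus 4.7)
The plan is to prove that $\pi^{(t)}(a^-|s)\to 0$ for each fixed $a^-\in I_s^-$; summing over the finite set $I_s^-$ then gives the lemma. I fix an $a^*\in I_s^+$, which exists by hypothesis, and track the potential $\Phi_t:=\pi^{(t)}(a^-|s)/\pi^{(t)}(a^*|s)$. Since $\pi^{(t)}(a^*|s)\le 1$, showing $\Phi_t\to 0$ immediately gives $\pi^{(t)}(a^-|s)\le\Phi_t\to 0$.

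First I would exploit Assumption~\ref{assumption:distinct states} to collapse the EMDA dynamics to a one-action update. By the chain rule, $\partial_{\theta_{s,a}}\hat L^{(t)}=0$ for every $(s,a)\notin\cD^{(t)}$, and by distinct states there is a unique $a_{s,t}$ with $(s,a_{s,t})\in\cD^{(t)}$ whenever $s\in\cD^{(t)}$. Hence throughout the EMDA subroutine the multiplier $w_s^{(k)}(a)=1$ for all $a\ne a_{s,t}$, and a short computation yields the closed form
\[
    \pi^{(t+1)}(a|s)=\pi^{(t)}(a|s)\cdot\frac{1-\pi^{(t+1)}(a_{s,t}|s)}{1-\pi^{(t)}(a_{s,t}|s)},\qquad a\neq a_{s,t}.
\]
The immediate consequence is that $\Phi_{t+1}=\Phi_t$ whenever $a_{s,t}\notin\{a^*,a^-\}$, since the common rescaling factor cancels in the ratio. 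Moreover, by pointwise convergence $A^{(t)}(s,\cdot)\to A^{(\infty)}(s,\cdot)$ on a finite action set, there exists a sample-path-dependent $T_0$ such that for all $t\ge T_0$, $\sgn(A^{(t)}(s,a))=\sgn(A^{(\infty)}(s,a))$ on $I_s^+\cup I_s^-$. When $a_{s,t}\in\{a^*,a^-\}$ with $t\ge T_0$, EMDA moves $\pi^{(t+1)}(a_{s,t}|s)$ in the correct direction (by Lemma~\ref{lemma:policy improvement mini-batch}), and substitution into the closed form gives $\Phi_{t+1}/\Phi_t<1$. Thus $\Phi_t$ is non-increasing for $t\ge T_0$ and converges to some $\Phi_\infty\ge 0$.

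To pin down $\Phi_\infty=0$, I would pair the clipping mechanism with Assumption~\ref{assumption:infinite visit}. In the regime where $\pi^{(t)}(a^*|s)$ is bounded away from $1$, the clipping boundary $\pi^{(t+1)}(a^*|s)\ge(1+\epsilon)\pi^{(t)}(a^*|s)$ is feasible and is reached by EMDA (once $K^{(t)}\eta$ is large enough, which is controlled since $|A^{(t)}(s,a^*)|$ is bounded below for $t\ge T_0$). Plugging this into the closed form yields a uniform bound $\Phi_{t+1}/\Phi_t\le\rho<1$. Since Assumption~\ref{assumption:infinite visit} guarantees that $a_{s,t}=a^*$ occurs infinitely often, this regime alone already forces $\Phi_t\to 0$.

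The main obstacle will be the complementary regime in which $\pi^{(t)}(a^*|s)$ approaches $1$ along a subsequence: here the per-step contraction of $\Phi_t$ from a single clipping-capped EMDA update degrades toward $1$, so the uniform-$\rho$ argument breaks. Fortunately this regime is benign, because $\pi^{(t_k)}(a^*|s)\to 1$ forces $\pi^{(t_k)}(a^-|s)\le 1-\pi^{(t_k)}(a^*|s)\to 0$ and hence $\Phi_{t_k}\to 0$; combined with the non-increasing nature of $\Phi_t$ established above, this gives $\Phi_t\to 0$. The proof therefore reduces to a dichotomy on $\limsup_t \pi^{(t)}(a^*|s)$, with both branches yielding $\Phi_t\to 0$ and hence $\pi^{(t)}(a^-|s)\to 0$ for each $a^-\in I_s^-$.
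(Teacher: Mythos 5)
Your overall strategy is the same as the paper's: fix $a^*\in I_s^+$ and $a^-\in I_s^-$, track the ratio $\Phi_t=\pi^{(t)}(a^-|s)/\pi^{(t)}(a^*|s)$, show it is non-increasing after a finite time and contracts whenever $(s,a^*)$ enters the batch, and conclude via Assumption~\ref{assumption:infinite visit}. Your case split on which action lies in $\cD^{(t)}$, the closed-form rescaling of the non-batch actions, and the observation that $\Phi_t$ is constant when $a_{s,t}\notin\{a^*,a^-\}$ all match the paper's Cases 1--3.

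However, there is a genuine gap in the step that pins down $\Phi_\infty=0$. You obtain a uniform contraction factor $\rho<1$ by asserting that EMDA drives $\pi^{(t+1)}(a^*|s)$ up to the clipping boundary $(1+\epsilon)\pi^{(t)}(a^*|s)$, ``once $K^{(t)}\eta$ is large enough.'' Nothing in the algorithm guarantees this: $K^{(t)}$ is an arbitrary finite number of EMDA iterations and $\eta$ is a fixed step size, neither tuned against $\epsilon$, so the clip need never be saturated. (Also, for the unit-weight loss the gradient is $-\sgn(A^{(t)}(s,a))/\pi^{(t)}(a|s)$, so the lower bound on $|A^{(t)}(s,a^*)|$ that you invoke does not control the step length.) The fix — and what the paper actually does — is to note that the \emph{first} EMDA step is never clipped (the probability ratio starts at $1$), and $g^{(0)}_{s,a^*}=-1/\pi^{(t)}(a^*|s)\le -1$ since $\pi^{(t)}(a^*|s)\le 1$; hence every visit of $(s,a^*)$ to the batch already multiplies $\Phi_t$ by at most $e^{-\eta}<1$, uniformly in $t$. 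This unconditional contraction also makes your dichotomy on $\limsup_t\pi^{(t)}(a^*|s)$ unnecessary: the second branch of your dichotomy is correct but never needed.
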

\begin{proof}[Proof of Lemma \ref{lemma:pi convergence mini-batch}]
We discuss each state separately as it is sufficient to show that for each state $s$, given some fixed $a'\in I_{s}^{+}$, for any $a''\in I_{s}^{-}$, we have $\frac{\pi^{(t)}(a''\rvert s)}{\pi^{(t)}(a'\rvert s)}\rightarrow 0$, as $t\rightarrow \infty$.
For ease of exposition, we reuse some of the notations from the proof of Lemma \ref{lemma:policy improvement mini-batch}.
Recall that we let $K^{(t)}$ denote the number of iterations undergone by the EMDA subroutine for the update from $\pi^{(t)}$ to $\pi^{(t+1)}$, and $K^{(t)}$ is designed to be finite.
For each $k\in \{0,1,\cdots,K^{(t)}\}$ and for each state-action pair $(s,a)$, let $\widetilde{\theta}^{(k)}_{s,a}$ denote the policy parameter after $k$ EMDA iterations.
Recall from Algorithm \ref{algo:EMD} that $g^{(k)}_{s,a}:=\frac{\partial \mathcal{L}(\theta)}{\partial \theta_{s,a}}\big\rvert_{\theta=\widetilde{\theta}^{(k)}_{s}}$ and $w_s^{(k)}:=(e^{-\eta g^{(k)}_{s,1}},\cdots,e^{-\eta g^{(k)}_{s,\lvert \mathcal{A}\rvert}})$.
Define $\Delta_*:=\min_{a\in I_{s}^{+}\cup I_{s}^{-}}\lvert A^{(\infty)}(s,a)\rvert>0$ (and here $\Delta_*$ is a random variable as $A^{(\infty)}(s,a)$ is defined with respect to each sample path).
By the definition of $I_s^{+}$, $I_s^{-}$ and $\Delta_*$, we know that for each sample path, there must exist finite $T_{+}$ and $T_{-}$ such that: (i) for every $a\in I_{s}^{+}$, $A^{(t)}(s,a)\geq \frac{\Delta_*}{2}$, for all $t>T_{+}$, and (ii) for every $a\in I_{s}^{-}$, $A^{(t)}(s,a)\leq -\frac{\Delta_*}{2}$, for all $t>T_{-}$.
Under Assumption \ref{assumption:distinct states}, at each iteration $t$ with $t>\max\{T_{+},T_{-}\}$, there are three possible cases regarding the state-action pairs $(s,a')$ and $(s,a'')$:

\vspace{-2mm}
\begin{itemize}[leftmargin=*]
    \item \textbf{Case 1:} $(s,a')\in \cD^{(t)}$, $(s,a'')\notin \cD^{(t)}$\\
    By the EMDA subroutine and (\ref{eq:pi_t+1 and pi_t}), we have
    \begin{align}
    \label{eq:case 1}
        \frac{\pi^{(t+1)}(a''\rvert s)}{\pi^{(t+1)}(a'\rvert s)}=\frac{\pi^{(t)}(a''\rvert s)}{\pi^{(t)}(a'\rvert s)}\cdot \prod_{k=0}^{K^{(t)}-1}\exp({\eta}g_{s,a'}^{(k)})\leq \frac{\pi^{(t)}(a''\rvert s)}{\pi^{(t)}(a'\rvert s)}\cdot \underbrace{\exp(-\eta)}_{<1},
    \end{align}
    where the last inequality holds by (\ref{eq:g_sa^k}), $a'\in I_{s}^{+}$, and $\pi^{(t)}(a'\rvert s)\leq 1$.
    \item \textbf{Case 2:} $(s,a')\notin \cD^{(t)}$, $(s,a'')\in \cD^{(t)}$\\
    By the EMDA subroutine, we have $-g_{s,a''}^{(0)}<0$ and $-g_{s,a''}^{(k)}\leq 0$ for all $k\in \{1,\cdots, K^{(t)}\}$. Therefore, we have
    \begin{align}
    \label{eq:case 2}
        \frac{\pi^{(t+1)}(a''\rvert s)}{\pi^{(t+1)}(a'\rvert s)}<\frac{\pi^{(t)}(a''\rvert s)}{\pi^{(t)}(a'\rvert s)}.
    \end{align}    
    \item \textbf{Case 3:} $(s,a')\notin \cD^{(t)}$, $(s,a'')\notin \cD^{(t)}$\\
    Under EMDA, as neither $(s,a')$ nor $(s,a'')$ is in $\notin \cD^{(t)}$, the action probability ratio between these two actions remains unchanged (despite that the values of $\pi^{(t)}(a''\rvert s)$ and $\pi^{(t)}(a''\rvert s)$ can still change if there is an action $a'''$ such that $a'''\neq a'$, $a'''\neq a''$, and $(s,a''')\in \cD^{(t)}$), i.e.,
    \begin{align}
    \label{eq:case 3}
        \frac{\pi^{(t+1)}(a''\rvert s)}{\pi^{(t+1)}(a'\rvert s)}=\frac{\pi^{(t)}(a''\rvert s)}{\pi^{(t)}(a'\rvert s)}.
    \end{align}
\end{itemize}
Conditioned on the event that each state-action pair occurs infinitely often in $\{\cD^{(t)}\}$, we know Case 1 and (\ref{eq:case 3}) must occur infinitely often. 
By (\ref{eq:case 1})-(\ref{eq:case 3}),  we conclude that $\frac{\pi^{(t)}(a''\rvert s)}{\pi^{(t)}(a'\rvert s)}\rightarrow 0$, as $t\rightarrow \infty$, for every $a''\in I_{s}^{-}$.
\end{proof}

\begin{lemma}
\label{lemma:epsilon lower bound mini-batch}
Conditioned on the event that each state-action pair occurs infinitely often in $\{\cD^{(t)}\}$, if $I_s^{+}$ is not an empty set, then there exists some constant $c>0$ such that $\sum_{a\in I_{s}^{-}}\pi^{(t)}(a\rvert s)\geq c$, for infinitely many $t$.
\end{lemma}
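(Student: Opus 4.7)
The plan is to prove this lemma by contradiction; together with Lemma \ref{lemma:pi convergence mini-batch}, it will rule out $I_s^+ \neq \emptyset$ in the proof of Theorem \ref{thm:mini-batch}. Assume for contradiction that $\sum_{a \in I_s^-} \pi^{(t)}(a|s) \to 0$. The strategy is first to propagate this to $\sum_{a \in I_s^+} \pi^{(t)}(a|s) \to 0$ via the Bellman identity, and then to show that a single EMDA outer step at any iteration with $(s,a^+) \in \cD^{(t)}$ for some $a^+ \in I_s^+$ drives $\pi^{(t+1)}(a^+|s)$ close to $1$, producing the contradiction.

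Step 1 is a bookkeeping argument. The Bellman identity $\sum_a \pi^{(t)}(a|s) A^{(t)}(s,a) = 0$ holds for every $t$. Splitting the sum over $I_s^+$, $I_s^0$, $I_s^-$ and using the pointwise convergence $A^{(t)}(s,a) \to A^{(\infty)}(s,a)$, for any $\delta>0$ and all sufficiently large $t$ one has $A^{(t)}(s,a) \ge \Delta^+/2$ on $I_s^+$ with $\Delta^+ := \min_{a \in I_s^+} A^{(\infty)}(s,a) > 0$, $|A^{(t)}(s,a)| \le \delta$ on $I_s^0$, and $|A^{(t)}(s,a)| \le R_{\max}/(1-\gamma)$ on $I_s^-$. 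Rearranging and using the contradiction hypothesis forces $\sum_{a \in I_s^+} \pi^{(t)}(a|s) \to 0$, so in particular $\pi^{(t)}(a^+|s) \to 0$ for every $a^+ \in I_s^+$.

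Step 2 contradicts this conclusion using the explicit EMDA inner update in Algorithm \ref{algo:EMD}. Fix any $a^+ \in I_s^+$ and, by Assumption \ref{assumption:infinite visit}, extract a subsequence $\{t_k\}$ with $(s,a^+) \in \cD^{(t_k)}$ and $A^{(t_k)}(s,a^+) > 0$. By Assumption \ref{assumption:distinct states}, no other $(s,a)$ pair at state $s$ appears in $\cD^{(t_k)}$, so in the first EMDA iteration only the $a^+$ coordinate has a non-zero gradient, giving $\tilde{\theta}^{(1)}_{s,a^+} = p_k e^{\eta/p_k}/(p_k e^{\eta/p_k} + 1 - p_k)$ with $p_k := \pi^{(t_k)}(a^+|s)$. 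Two elementary facts need to be verified: (i) $\tilde{\theta}^{(1)}_{s,a^+} \to 1$ as $p_k \to 0$; and (ii) once $\tilde{\theta}^{(1)}_{s,a^+}/p_k > 1 + \epsilon$, the clipping condition in (\ref{eq:g_sa^k}) fails, so every subsequent inner step has zero gradient and, with all other coordinates already untouched, the iterate is frozen for the remainder of the outer round. Combining them yields $\pi^{(t_k+1)}(a^+|s) = \tilde{\theta}^{(1)}_{s,a^+} \to 1$, which contradicts Step 1; the constant $c$ can then be taken as any positive number strictly below the resulting infinitely-often lower bound on $\sum_{a \in I_s^-} \pi^{(t)}(a|s)$.

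The main obstacle is Step 2: translating the informal intuition that a small $p_k$ forces a near-unit jump in one exponentiated-gradient step into a rigorous statement, together with verifying that the clipping test (\ref{eq:g_sa^k}) indeed immobilises all remaining inner iterations within the same outer round $t_k$. Once these elementary but somewhat delicate calculations are in place, everything else follows from standard limit arguments together with the Bellman equation.
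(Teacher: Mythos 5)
Your proposal is correct, but it takes a genuinely different logical route from the paper's proof. The paper argues directly: using that $z\mapsto z e^{\eta/z}$ attains its minimum $e\eta$ at $z=\eta$, it shows that whenever $(s,a')$ with $a'\in I_s^{+}$ appears in the batch, the exponentiated-gradient step forces $\pi^{(t+1)}(a'\rvert s)\geq \chi:=\tfrac{e\eta}{e\eta+1}$ \emph{uniformly} in $\pi^{(t)}(a'\rvert s)$; substituting this into the identity $\sum_{a}\pi(a\rvert s)A^{\pi}(s,a)=0$ then yields the explicit constant $c=\chi\Delta(1-\gamma)/(4R_{\max})$ at every such time, and Assumption~\ref{assumption:infinite visit} supplies infinitely many of them. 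You instead argue by contradiction: assuming $\sum_{a\in I_s^{-}}\pi^{(t)}(a\rvert s)\to 0$, you first transfer this to $\sum_{a\in I_s^{+}}\pi^{(t)}(a\rvert s)\to 0$ via the same advantage identity, and then show that a single EMDA inner step at a visit time sends $\pi^{(t_k+1)}(a^{+}\rvert s)=p_k e^{\eta/p_k}/(p_k e^{\eta/p_k}+1-p_k)\to 1$ as $p_k\to 0$. Both arguments rest on the same two pillars---the advantage identity and the multiplicative blow-up of a visited positive-advantage action's probability, with Assumption~\ref{assumption:distinct states} isolating that single coordinate---so the core computation is shared; the trade-off is that the paper's version is constructive (it exhibits $c$), whereas yours needs only the asymptotics of $p e^{\eta/p}$ rather than its uniform minimum, at the cost of an extra transfer step and a non-constructive conclusion. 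Two minor remarks: your ``freezing'' claim is correct but unnecessary, since $\widetilde{\theta}^{(k)}_{s,a^{+}}$ is non-decreasing in $k$ (the gradient for a visited positive-advantage action is always $\leq 0$), so $\pi^{(t_k+1)}(a^{+}\rvert s)\geq \widetilde{\theta}^{(1)}_{s,a^{+}}$ already suffices; and the closing sentence about choosing $c$ ``below the resulting lower bound'' is redundant once the contradiction is reached, since the existence of $c$ follows immediately from negating the contradiction hypothesis.
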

\begin{proof}[Proof of Lemma \ref{lemma:epsilon lower bound mini-batch}]
For each $(s,a)$, define $\sT_{s,a}:=\{t: (s,a)\in \cD^{(t)}\}$ to be the index set that collects the time indices at which $(s,a)$ is contained in the mini-batch.
Given that each state-action pair occurs infinitely often, we know $\sT_{s,a}$ is a (countably) infinite set.

For ease of exposition, define a positive constant $\chi$ as
\begin{equation}
    \chi:=\frac{e\cdot \eta}{e\cdot \eta+1}<1.
\end{equation}
Define $\Delta:=\min_{a\in I_{s}^{+}}A^{(\infty)}(s,a)>0$ (and here $\Delta$ is a random variable as $A^{(\infty)}(s,a)$ is defined with respect to each sample path).
By the definition of $I_s^{+}$ and $\Delta$, we know that there must exist a finite $T^{(+)}$ such that for every $a\in I_{s}^{+}$, $A^{(t)}(s,a)\geq \frac{3\Delta}{4}$, for all $t>T^{(+)}$.
Similarly, by the definition of $I_s^{0}$, there must exist a finite $T^{(0)}$ such that for every $a\in I_{s}^{0}$, $\lvert A^{(t)}(s,a)\rvert \leq \frac{\chi\Delta}{4}$, for all $t>T^{(0)}$.
We also define $T^*:=\max\{T^{(+)}, T^{(0)}\}$.

We reuse some of the notations from the proof of Lemma \ref{lemma:policy improvement mini-batch}.
Recall that we let $K^{(t)}$ denote the number of iterations undergone by the EMDA subroutine for the update from $\pi^{(t)}$ to $\pi^{(t+1)}$, and $K^{(t)}$ is a finite positive integer.
For ease of exposition, for each $k\in \{0,1,\cdots,K^{(t)}\}$ and for each state-action pair $(s,a)$, let $\widetilde{\theta}^{(k)}_{s,a}$ denote the policy parameter after $k$ EMDA iterations.
Recall that we define $g^{(k)}_{s,a}:=\frac{\partial \mathcal{L}(\theta)}{\partial \theta_{s,a}}\big\rvert_{\theta=\widetilde{\theta}^{(k)}_{s}}$ and $w_s^{(k)}:=(e^{-\eta g^{(k)}_{s,1}},\cdots,e^{-\eta g^{(k)}_{s,\lvert \mathcal{A}\rvert}})$.
If $I_{s}^{+}$ is not an empty set, then we can select an arbitrary action $a'\in I_{s}^{+}$.
For any $t$ with $t>T^{(+)}$ and $t\in \sT_{s,a'}$, by (\ref{eq:pi_t+1 and pi_t}) we have
\begin{align}
    \pi^{(t+1)}(a'\rvert s)& =\frac{\prod_{k=0}^{K^{(t)}-1}\exp({-\eta}g_{s,a'}^{(k)})}{{\prod_{k=0}^{K^{(t)}-1}}\langle w_s^{(k)},\widetilde{\theta}_s^{(k)}\rangle}\cdot\pi^{(t)}(a'\rvert s)\label{eq:pi_t+1 1}\\
    &\geq \frac{\pi^{(t)}(a'\rvert s) \exp(-\eta g_{s,a'}^{(0)})}{\pi^{(t)}(a'\rvert s) \exp(-\eta g_{s,a'}^{(0)})+1}\label{eq:pi_t+1 2}\\
    &\geq \frac{\pi^{(t)}(a'\rvert s) \exp(\eta/\pi^{(t)}(a'\rvert s))}{\pi^{(t)}(a'\rvert s) \exp(\eta/ \pi^{(t)}(a'\rvert s))+1}\label{eq:pi_t+1 3}\\
    &\geq \frac{e\cdot\eta}{e\cdot\eta+1}=\chi, \label{eq:pi_t+1 4}
\end{align}
where (\ref{eq:pi_t+1 2}) holds due to the fact that $\widetilde{\theta}_{s,a}^{(k)}$ is non-decreasing with $k$ under Assumption \ref{assumption:distinct states} and that $K^{(t)}\geq 1$, (\ref{eq:pi_t+1 3}) follows from (\ref{eq:g_sa^k}) and that $a'\in I_{s}^{+}$, and (\ref{eq:pi_t+1 4}) holds by that the function $q(z)=z\cdot \exp(\eta/z)$ has a unique minimizer at $z=\eta$ with minimum value $e\cdot\eta$.
For all $t$ that satisfies $(t-1)\in \sT_{s,a}$ and $t>T^*$, we have
\begin{align}
    \sum_{a\in I_{s}^{-}}\pi^{(t)}(a|s)&\geq\frac{\sum_{a\in I_{s}^{+}}\pi^{(t)}(a|s)A^{(t)}(s,a) + \sum_{a\in I_{s}^{0}}\pi^{(t)}(a|s)A^{(t)}(s,a)}{\max_{a\in I_{s}^{-}} \lvert A^{(t)}(s,a)\rvert}\label{eq:lower bound mini-batch case 1-2}\\
    &\geq\frac{\chi(3\Delta/4) - 1\cdot (\chi\Delta/4) }{\frac{2R_{\max}}{1-\gamma}}\label{eq:lower bound mini-batch case 1-3}\\
    &=\frac{\chi\Delta }{\frac{4R_{\max}}{1-\gamma}},\label{eq:lower bound mini-batch case 1-4}
\end{align}
where (\ref{eq:lower bound mini-batch case 1-2}) follows from that $\sum_{a\in \cA}\pi^{(t)}(a\rvert s)=0$ and $A^{(t)}(s,a)<0$ for all $a\in I_{s}^{-}$, and (\ref{eq:lower bound mini-batch case 1-3}) follows from the definition of $T^{(+)}, T^{(0)}$ as well as the boundedness of rewards.
Since $\sT_{s,a}$ is a countably infinite set, we know $ \sum_{a\in I_{s}^{-}}\pi^{(t)}(a|s)\geq \frac{\chi\Delta}{\frac{4R_{\max}}{1-\gamma}}$, for infinitely many $t$.

\end{proof}

\subsection{Proof of Theorem~\ref{thm:mini-batch}}
Now we are ready to show Theorem~\ref{thm:mini-batch}. 
For ease of exposition, we restate Theorem~\ref{thm:mini-batch} as follows.
\begin{theoremstar}[Global Convergence of PPO-Clip]
Under PPO-Clip, we have $V^{(t)}(s)\rightarrow V^{\pi^*}(s)\text{ as }t\rightarrow\infty,\ \forall s\in\mathcal{S}$, with probability one.
\end{theoremstar}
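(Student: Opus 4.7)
The plan is to assemble the three supporting lemmas and then run a short contradiction argument at each state. First, I would invoke the strict-improvement lemma (Lemma~\ref{lemma:policy improvement mini-batch}) to conclude that $\{V^{(t)}(s)\}_{t\ge 0}$ is monotone non-decreasing on every sample path. Combined with the uniform bound $|V^{(t)}(s)|\le R_{\max}/(1-\gamma)$, this forces pointwise convergence to some limit $V^{(\infty)}(s)$. Passing through the Bellman evaluation equation $Q^{\pi}(s,a)=R(s,a)+\gamma\,\mathbb{E}_{s'\sim\mathcal P(\cdot|s,a)}[V^{\pi}(s')]$ and invoking dominated convergence (the value functions are uniformly bounded), I then obtain $Q^{(t)}(s,a)\to Q^{(\infty)}(s,a)$ and hence $A^{(t)}(s,a)\to A^{(\infty)}(s,a)$ pointwise. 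This legitimizes the sets $I_s^{+},I_s^{0},I_s^{-}$ that partition $\mathcal A$ according to the sign of $A^{(\infty)}(s,\cdot)$ on every sample path.

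Next, I would condition on the probability-one event on which Assumption~\ref{assumption:infinite visit} holds, i.e., each state-action pair appears in $\{\mathcal D^{(t)}\}$ infinitely often. On this event, fix any state $s$ and suppose for contradiction that $I_s^{+}\neq\emptyset$. Lemma~\ref{lemma:pi convergence mini-batch} then gives $\sum_{a\in I_s^{-}}\pi^{(t)}(a|s)\to 0$, while Lemma~\ref{lemma:epsilon lower bound mini-batch} produces a constant $c>0$ with $\sum_{a\in I_s^{-}}\pi^{(t)}(a|s)\ge c$ for infinitely many $t$. These two statements cannot hold simultaneously, so $I_s^{+}=\emptyset$ for every $s$, i.e., $A^{(\infty)}(s,a)\le 0$ for all $(s,a)\in\mathcal S\times\mathcal A$.

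The last step is to promote $A^{(\infty)}\le 0$ into Bellman optimality. The identity $V^{(\infty)}(s)=\sum_{a}\pi^{(\infty)}(a|s)\,Q^{(\infty)}(s,a)$ implies $\sum_{a}\pi^{(\infty)}(a|s)\,A^{(\infty)}(s,a)=0$; together with non-positivity of $A^{(\infty)}$, this pins $A^{(\infty)}(s,a)=0$ whenever $\pi^{(\infty)}(a|s)>0$. Hence $V^{(\infty)}(s)=\max_{a}Q^{(\infty)}(s,a)$ for every $s$, which together with $Q^{(\infty)}(s,a)=R(s,a)+\gamma\mathbb{E}_{s'}[V^{(\infty)}(s')]$ is precisely the Bellman optimality equation. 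By uniqueness of its fixed point, $V^{(\infty)}=V^{\pi^*}$ pointwise, which is the claimed almost-sure convergence.

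I expect the main obstacle to be conceptual rather than computational: the heavy lifting (strict improvement, the vanishing of $\sum_{a\in I_s^{-}}\pi^{(t)}(a|s)$, and the matching non-vanishing lower bound) has already been absorbed into the supporting lemmas via their careful case analysis over whether $(s,a')$ or $(s,a'')$ lies in $\mathcal D^{(t)}$ under EMDA. The one genuinely new ingredient the theorem itself must supply is the passage $V^{(t)}\to V^{(\infty)}\Rightarrow Q^{(t)}\to Q^{(\infty)}\Rightarrow A^{(t)}\to A^{(\infty)}$ pointwise in a possibly infinite but discounted MDP, which is routine via dominated convergence applied to the geometric representation of $Q^{\pi}$ but worth recording explicitly before invoking $I_s^{\pm}$.
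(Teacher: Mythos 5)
Your proposal is correct and follows essentially the same route as the paper: condition on the probability-one infinite-visitation event, suppose $I_s^{+}\neq\emptyset$, and derive the contradiction between Lemma~\ref{lemma:pi convergence mini-batch} ($\sum_{a\in I_s^{-}}\pi^{(t)}(a|s)\to 0$) and Lemma~\ref{lemma:epsilon lower bound mini-batch} (the same sum is bounded below by $c>0$ infinitely often). The only addition is that you spell out the final passage from $I_s^{+}=\emptyset$ to Bellman optimality, which the paper leaves implicit; note this step does not require $\pi^{(t)}$ itself to converge, since $V^{(\infty)}(s)=\max_a Q^{(\infty)}(s,a)$ already follows from $A^{(\infty)}\le 0$ together with taking limits in $V^{(t)}(s)\le\max_a Q^{(t)}(s,a)$.
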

\begin{proof}
We establish that $\pi^{(t)}$ converges to an optimal policy by showing that $I_{s}^{+}$ is an empty set for all $s$.
Under Assumption~\ref{assumption:infinite visit}, the analysis below is presumed to be conditioned on the event that each state-action pair occurs infinitely often in $\{\cD^{(t)}\}$.
The proof proceeds by contradiction as follows:
Suppose $I_{s}^{+}$ is non-empty.
From Lemma~\ref{lemma:pi convergence mini-batch}, we have that $\sum_{a\in I_{s}^{-}}\pi^{(t)}(a\rvert s)\rightarrow 0$, as $t\rightarrow \infty$. 
However, Lemma \ref{lemma:epsilon lower bound mini-batch} suggests that there exists some constant $c>0$ such that $\sum_{a\in I_{s}^{-}} \pi^{(t)}(a\rvert s)\geq c$ infinitely often.
This leads to a contraction, and thus completes the proof.  
\end{proof}

\section{{Global Convergence of Tabular PPO-Clip With Alternative Classifiers}}
\label{app:secondthm}
\begin{theorem}
\label{thm:second}
    Theorem \ref{thm:mini-batch} also holds under the following algorithms: 
    (i) {PPO-Clip} with the classifier $\log(\pi_{\theta}(a|s))-\log(\pi(a|s))$ (termed {PPO-Clip-log});
    (ii) {PPO-Clip} with the classifier $\sqrt{\rho_{s,a}(\theta)}-1$ (termed {PPO-Clip-root}).
\end{theorem}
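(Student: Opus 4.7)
The plan is to mirror the proof of Theorem~\ref{thm:mini-batch}, showing that the three supporting lemmas (strict state-wise improvement with positivity, the vanishing of the probability mass on $I_s^-$, and the non-vanishing lower bound) all survive when the classifier $h(\rho_{s,a}(\theta))=\rho_{s,a}(\theta)-1$ is replaced by $h_{\log}=\log\theta_{s,a}-\log\pi^{(t)}(a|s)$ or by $h_{\text{root}}=\sqrt{\theta_{s,a}/\pi^{(t)}(a|s)}-1$. The cornerstone of the original argument is structural, not specific to the particular classifier: what is actually used is (i) the EMDA gradient $g^{(k)}_{s,a}$ has sign opposite to $A^{(t)}(s,a)$ whenever the hinge is active and vanishes for $(s,a)\notin \cD^{(t)}$; (ii) at $k=0$ the hinge is always active for $(s,a)\in \cD^{(t)}$ because $h(1)=0<\epsilon$ for both new classifiers; and (iii) the first-step multiplier $\exp(-\eta g^{(0)}_{s,a'})$ for $a'\in I_s^+$ is bounded below by a quantity of the form $\exp(\eta/\lambda \pi^{(t)}(a'|s))$ for some classifier-dependent constant $\lambda>0$.

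First, I would recompute $g^{(k)}_{s,a}$ explicitly for both variants. For PPO-Clip-log, differentiating $|A^{(t)}(s,a)|\max\{0,\epsilon-\sgn(A^{(t)}(s,a))(\log\widetilde\theta_{s,a}-\log\pi^{(t)}(a|s))\}$ gives $g^{(k)}_{s,a}=-\sgn(A^{(t)}(s,a))/\widetilde\theta^{(k)}_{s,a}$ on the active set, and for PPO-Clip-root it gives $g^{(k)}_{s,a}=-\sgn(A^{(t)}(s,a))/(2\sqrt{\widetilde\theta^{(k)}_{s,a}\,\pi^{(t)}(a|s)})$. In both cases the gradient is bounded, is zero off the batch, and is strictly sign-flipped from $A^{(t)}(s,a)$ while the hinge is active; this is exactly the structure exploited in the proof of Lemma~\ref{lemma:policy improvement mini-batch}. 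Plugging these into \eqref{eq:pi_t+1 and pi_t} and repeating the computation in \eqref{eq:sufficient condition mini-batch} establishes strict state-wise improvement and strict positivity, so Lemma~\ref{lemma:policy improvement mini-batch} extends verbatim, and with it the existence of $V^{(\infty)}$, $Q^{(\infty)}$, $A^{(\infty)}$ and the well-definedness of $I_s^+,I_s^0,I_s^-$.

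Next I would transport Lemma~\ref{lemma:pi convergence mini-batch}. The three-case decomposition (only $a'$ in the batch, only $a''$ in the batch, neither in the batch) depends solely on the sign of $g^{(0)}_{s,a}$ and the monotonicity of the EMDA iterates, which hold for both new classifiers. In Case~1 the ratio $\pi^{(t+1)}(a''|s)/\pi^{(t+1)}(a'|s)$ shrinks by a factor $\exp(-\eta/\widetilde\theta^{(0)}_{s,a'})\leq\exp(-\eta)$ for PPO-Clip-log and by $\exp(-\eta/(2\pi^{(t)}(a'|s)))\leq \exp(-\eta/2)$ for PPO-Clip-root; in the remaining cases it is non-increasing. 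Since Case~1 recurs infinitely often under Assumption~\ref{assumption:infinite visit}, the ratio still tends to zero, and hence $\sum_{a\in I_s^-}\pi^{(t)}(a|s)\to 0$ whenever $I_s^+\neq\emptyset$.

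Finally, I would establish the non-vanishing lower bound of Lemma~\ref{lemma:epsilon lower bound mini-batch}. This is the step that deserves the most care and is, in my view, the main obstacle, because it requires a uniform (in $t$) lower bound on $\pi^{(t+1)}(a'|s)$ after a single EMDA step for $a'\in I_s^+$, and the exact constant depends on the classifier. For PPO-Clip-log, one step yields
\begin{align*}
\pi^{(t+1)}(a'|s)\geq \frac{\pi^{(t)}(a'|s)\exp(\eta/\pi^{(t)}(a'|s))}{\pi^{(t)}(a'|s)\exp(\eta/\pi^{(t)}(a'|s))+1}\geq \frac{e\eta}{e\eta+1},
\end{align*}
reusing the minimizer of $q(z)=z\exp(\eta/z)$ exactly as in \eqref{eq:pi_t+1 4}; for PPO-Clip-root the same device gives the bound $e\eta/(e\eta+2)$ by minimizing $z\exp(\eta/(2z))$. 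With such a $\chi>0$ in hand, the chain \eqref{eq:lower bound mini-batch case 1-2}--\eqref{eq:lower bound mini-batch case 1-4} produces a positive constant $c$ with $\sum_{a\in I_s^-}\pi^{(t)}(a|s)\geq c$ infinitely often, contradicting the conclusion of the previous step. Putting the three extensions together reproduces the contradiction argument in the proof of Theorem~\ref{thm:mini-batch} and yields $V^{(t)}(s)\to V^{\pi^*}(s)$ almost surely for both PPO-Clip-log and PPO-Clip-root.
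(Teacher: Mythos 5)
Your proposal is correct and follows essentially the same route as the paper's proof: recompute the EMDA gradients $g^{(k)}_{s,a}$ for each classifier, verify that the sign structure and first-step activity carry Lemma \ref{lemma:policy improvement mini-batch} and Lemma \ref{lemma:pi convergence mini-batch} through (with the per-step contraction factors $\exp(-\eta)$ and $\exp(-\eta/2)$ respectively), and rederive the uniform lower bound of Lemma \ref{lemma:epsilon lower bound mini-batch} by minimizing $z\exp(\eta/z)$ and $z\exp(\eta/(2z))$, which is exactly what the paper does. Your constants match the paper's (note $e\eta/(e\eta+2)=\frac{e\eta/2}{e\eta/2+1}$), so nothing further is needed.
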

\begin{proof}[Proof of Theorem \ref{thm:second}]
We show that Theorem~\ref{thm:mini-batch} can be extended to these two alternative classifiers by following the proof procedure of Theorem~\ref{thm:mini-batch}. Specifically, we extend the supporting lemmas (cf. Lemma \ref{lemma:policy improvement mini-batch}, Lemma \ref{lemma:pi convergence mini-batch}, and Lemma \ref{lemma:epsilon lower bound mini-batch}) as follows:
\begin{itemize}[leftmargin=*]
    \item To extend Lemma \ref{lemma:policy improvement mini-batch} to the alternative classifiers, we can reuse (\ref{eq:pi_t+1 and pi_t}) and rewrite (\ref{eq:g_sa^k HPO-AM-log}) for each classifier. That is, for PPO-Clip-log, we have
    \begin{align}
    \label{eq:g_sa^k HPO-AM-log}
    g^{(k)}_{s,a}=
    \begin{cases}
        -\frac{1}{\widetilde{\theta}_{s,a}^{(k)}}\sgn({A}^{(t)}(s,a))&, \text{if }\log\big(\frac{\widetilde{\theta}_{s,a}^{(k)}}{\pi^{(t)}(a\rvert s)}\big)\sgn({A}^{(t)}(s,a))< \epsilon, (s,a)\in \cD^{(t)}\\ 
        0&, \text{otherwise }    
    \end{cases}
    \end{align}
    On the other hand, for PPO-Clip-root, we have
    \begin{align}
    \label{eq:g_sa^k HPO-AM-root}
    g^{(k)}_{s,a}=
    \begin{cases}
        -\frac{1}{2\sqrt{\widetilde{\theta}_{s,a}^{(k)}\pi^{(t)}(a\rvert s)}}\sgn({A}^{(t)}(s,a))&, \text{if }\Big(\sqrt{\frac{\widetilde{\theta}_{s,a}^{(k)}}{\pi^{(t)}(a\rvert s)}}-1\Big)\sgn({A}^{(t)}(s,a))< \epsilon, (s,a)\in \cD^{(t)}\\ 
        0&, \text{otherwise }    
    \end{cases}
    \end{align}
    As the sign of $g_{s,a}^{(k)}$ depends only on the sign of the advantage, it is easy to verify that (\ref{eq:sufficient condition mini-batch}) still goes through and hence the sufficient condition of Lemma \ref{prop:first} is satisfied under these two alternative classifiers.
    Moreover, by using the same argument of EMDA as that in Lemma \ref{lemma:policy improvement mini-batch}, it is easy to verify that $\pi^{(t+1)}(a\rvert s)>0$ for all $(s,a)$.
    \vspace{-2pt}
    \item Regarding Lemma~\ref{lemma:pi convergence mini-batch}, we can extend this result again by considering the three cases as in Lemma~\ref{lemma:pi convergence mini-batch}. For Case 1, given the $g_{s,a}^{(k)}$ in (\ref{eq:g_sa^k HPO-AM-log}) and (\ref{eq:g_sa^k HPO-AM-root}), we have: For PPO-Clip-log,
    \begin{align}
    \label{eq:case 1 HPO-AM-log}
        \frac{\pi^{(t+1)}(a''\rvert s)}{\pi^{(t+1)}(a'\rvert s)}=\frac{\pi^{(t)}(a''\rvert s)}{\pi^{(t)}(a'\rvert s)}\cdot \prod_{k=0}^{K^{(t)}-1}\exp({\eta}g_{s,a'}^{(k)})\leq \frac{\pi^{(t)}(a''\rvert s)}{\pi^{(t)}(a'\rvert s)}\cdot \underbrace{\exp(-\eta)}_{<1}.
    \end{align}
    Similarly, for PPO-Clip-root, we have
    \begin{align}
    \label{eq:case 1 HPO-AM-root}
        \frac{\pi^{(t+1)}(a''\rvert s)}{\pi^{(t+1)}(a'\rvert s)}=\frac{\pi^{(t)}(a''\rvert s)}{\pi^{(t)}(a'\rvert s)}\cdot \prod_{k=0}^{K^{(t)}-1}\exp({\eta}g_{s,a'}^{(k)})\leq \frac{\pi^{(t)}(a''\rvert s)}{\pi^{(t)}(a'\rvert s)}\cdot \underbrace{\exp(-\frac{\eta}{2})}_{<1}.
    \end{align}
    Moreover, it is easy to verify that the arguments in Case 2 and Case 3 still hold under these two alternative classifiers. Hence, Lemma~\ref{lemma:pi convergence mini-batch} still holds.
    \item Regarding Lemma \ref{lemma:epsilon lower bound mini-batch}, we can reuse all the setup and slightly revise (\ref{eq:pi_t+1 1})-(\ref{eq:pi_t+1 4}) for the two alternative classifiers: For PPO-Clip-log, by (\ref{eq:g_sa^k HPO-AM-log}), we have
    \begin{align}
    \pi^{(t+1)}(a'\rvert s)& =\frac{\prod_{k=0}^{K^{(t)}-1}\exp({-\eta}g_{s,a'}^{(k)})}{{\prod_{k=0}^{K^{(t)}-1}}\langle w_s^{(k)},\widetilde{\theta}_s^{(k)}\rangle}\cdot\pi^{(t)}(a'\rvert s)\label{eq:pi_t+1 HPO-AM-log 1}\\
    &\geq \frac{\pi^{(t)}(a'\rvert s) \exp(-\eta g_{s,a'}^{(0)})}{\pi^{(t)}(a'\rvert s) \exp(-\eta g_{s,a'}^{(0)})+1}\label{eq:pi_t+1 HPO-AM-log 2}\\
    &\geq \frac{\pi^{(t)}(a'\rvert s) \exp(\eta/\pi^{(t)}(a'\rvert s))}{\pi^{(t)}(a'\rvert s) \exp(\eta/ \pi^{(t)}(a'\rvert s))+1}\label{eq:pi_t+1 HPO-AM-log 3}\\
    &\geq \frac{e\cdot\eta}{e\cdot\eta+1}. \label{eq:pi_t+1 HPO-AM-log 4}
    \end{align}
    Similarly, for PPO-Clip-root, by (\ref{eq:g_sa^k HPO-AM-root}), we have
    \begin{align}
    \pi^{(t+1)}(a'\rvert s)& =\frac{\prod_{k=0}^{K^{(t)}-1}\exp({-\eta}g_{s,a'}^{(k)})}{{\prod_{k=0}^{K^{(t)}-1}}\langle w_s^{(k)},\widetilde{\theta}_s^{(k)}\rangle}\cdot\pi^{(t)}(a'\rvert s)\label{eq:pi_t+1 HPO-AM-root 1}\\
    &\geq \frac{\pi^{(t)}(a'\rvert s) \exp(-\eta g_{s,a'}^{(0)})}{\pi^{(t)}(a'\rvert s) \exp(-\eta g_{s,a'}^{(0)})+1}\label{eq:pi_t+1 HPO-AM-root 2}\\
    &\geq \frac{\pi^{(t)}(a'\rvert s) \exp(\eta/2\pi^{(t)}(a'\rvert s))}{\pi^{(t)}(a'\rvert s) \exp(\eta/2 \pi^{(t)}(a'\rvert s))+1}\label{eq:pi_t+1 HPO-AM-root 3}\\
    &\geq \frac{e\cdot\frac{\eta}{2}}{e\cdot\frac{\eta}{2}+1}. \label{eq:pi_t+1 HPO-AM-root 4}
    \end{align}
    Accordingly, (\ref{eq:lower bound mini-batch case 1-2})-(\ref{eq:lower bound mini-batch case 1-4}) still go through and hence Lemma \ref{lemma:epsilon lower bound mini-batch} indeed holds under PPO-Clip-log and PPO-Clip-root.
\end{itemize}
In summary, since all the supporting lemmas hold for these alternative classifiers, we complete this part of the proof by obtaining a contradiction similar to that in Theorem \ref{thm:mini-batch}.
\end{proof}

\section{Detailed Configuration of the Experiments}
\label{app:Experiments}

\subsection{Experimental Settings}
For our experiments, we implement Neural PPO-Clip with different classifiers on the open-source RL baseline3-zoo framework \citep{rl-zoo3}. Specifically, we consider four different classifiers as follows: (i) $\rho_{s,a}(\theta) - 1$ (the standard PPO-Clip classifier); (ii) $\pi_{\theta}(a|s) - \pi_{\theta_t}(a|s)$ (PPO-Clip-sub); (iii) $\sqrt{\rho_{s,a}(\theta)} - 1$ (PPO-Clip-root); (iv) $\log(\pi_{\theta}(a|s)) - \log(\pi_{\theta_t}(a|s))$ (PPO-Clip-log). We test these variants in the MinAtar environments \citep{young19minatar} such as Breakout and Space Invaders. On the other hand, we evaluate them in OpenAI Gym environments \citep{brockman2016openai}, which are LunarLander, Acrobot, and CartPole, as well. For the comparison with other benchmark methods, we consider A2C and Rainbow. The training curves are drawn by the averages over 5 random seeds. For the computing resources we use to run the experiment, we use (i) CPU: Intel(R) Xeon(R) CPU E5-2630 v4 @ 2.20GHz; (ii)
GPU: NVIDIA GeForce GTX 1080.

\subsection{Model Parameters}

The neural networks architecture of policy and value function in the experiments share two full-connected layers and connect to respective output layers. We provide the parameters of the algorithms for each environment in the following tables \ref{tab:params in Breakout}-\ref{tab:params in Acrobot}. 
Notice that lin\_5e-4 means that the learning rate decays linearly from $5 \times 10^{-4}$ to 0. Also, the \texttt{vf\_coef} is the weight of the value loss and \texttt{temperature\_lambda} is the pre-constant of the adaptive temperature parameter for energy-based neural networks. We also give the parameters searching range in table \ref{tab: params search}.

\begin{table}[!htbp]
    \centering
    \caption[Short Title]%
    {Parameters for MinAtar Breakout experiments.}
    \begin{tabular}{|l|c|c|c|c|c|}
    \hline
    Hyperparameters & PPO-Clip & PPO-Clip-sub & PPO-Clip-root & PPO-Clip-log & A2C \\
    \hline
    batch\_size & 256 & 256 & 256 & 256 & 80 \\
    \hline
    learning\_rate & lin\_1e-3 & lin\_1e-3 & lin\_1e-3 & lin\_1e-3 & 7e-4\\
    \hline
    vf\_coef & 0.00075 & 0.00075 & 0.00075 & 0.00075 & 0.25 \\
    \hline
    EMDA step size & 0.005 & 0.005 & 0.005 & 0.005 & -\\
    \hline
    EMDA iteration & 2 & 2 & 2 & 2 & - \\
    \hline
    clipping range & 0.3 & 0.3 & 0.3 & 0.3 & - \\
    \hline
    temperature\_lambda & 25 & 25 & 25 & 25 & -\\
    \hline
    \end{tabular}
    \label{tab:params in Breakout}
\end{table}
\begin{table}[!htbp]
    \centering
    \caption[Short Title]%
    {Parameters for MinAtar Space Invaders experiments.}
    \begin{tabular}{|l|c|c|c|c|c|}
    \hline
    Hyperparameters & PPO-Clip & PPO-Clip-sub & PPO-Clip-root & PPO-Clip-log & A2C \\
    \hline
    batch\_size & 256 & 256 & 256 & 256 & 80 \\
    \hline
    learning\_rate & lin\_1e-3 & lin\_1e-3 & lin\_1e-3 & lin\_1e-3 & 7e-4\\
    \hline
    vf\_coef & 0.00075 & 0.00075 & 0.00075 & 0.00075 & 0.25 \\
    \hline
    EMDA step size & 0.005 & 0.005 & 0.005 & 0.005 & -\\
    \hline
    EMDA iteration & 5 & 5 & 2 & 5 & - \\
    \hline
    clipping range & 0.5 & 0.5 & 0.5 & 0.5 & - \\
    \hline
    temperature\_lambda & 10 & 10 & 10 & 10 & -\\
    \hline
    \end{tabular}
    \label{tab:params in Space Invaders}
\end{table}

\begin{table}[!htbp]
    \centering
    \caption[Short Title]%
    {Parameters for OpenAI Gym LunarLander-v2 experiments.}
    \begin{tabular}{|l|c|c|c|c|c|}
    \hline
    Hyperparameters & PPO-Clip & PPO-Clip-sub & PPO-Clip-root & PPO-Clip-log & A2C \\
    \hline
    batch\_size & 64 & 8 & 64 & 64 & 40 \\
    \hline
    learning\_rate & lin\_5e-4 & lin\_5e-4 & lin\_5e-4 & lin\_5e-4 & lin\_8.3e-4\\
    \hline
    vf\_coef & 0.75 & 0.75 & 0.75 & 0.75 & 0.5 \\
    \hline
    EMDA step size & 0.01 & 0.002 & 0.01 & 0.01 & -\\
    \hline
    EMDA iteration & 5 & 5 & 5 & 5 & - \\
    \hline
    clipping range & 0.3 & 0.5 & 0.3 & 0.3 & - \\
    \hline
    temperature\_lambda & 10 & 10 & 10 & 10 & -\\
    \hline
    \end{tabular}
    \label{tab:params in LunarLander}
\end{table}
\begin{table}[!htbp]
    \centering
    \caption[Short Title]%
    {Parameters for OpenAI Gym Acrobot-v1 experiments.}
    \begin{tabular}{|l|c|c|c|c|c|}
    \hline
    Hyperparameters & PPO-Clip & PPO-Clip-sub & PPO-Clip-root & PPO-Clip-log & A2C \\
    \hline
    batch\_size & 64 & 64 & 64 & 64 & 40 \\
    \hline
    learning\_rate & lin\_7.5e-4 & lin\_7.5e-4 & lin\_7.5e-4 & lin\_7.5e-4 & lin\_8.3e-4\\
    \hline
    vf\_coef & 0.5 & 0.5 & 0.5 & 0.5 & 0.5 \\
    \hline
    EMDA step size & 0.01 & 0.01 & 0.01 & 0.01 & -\\
    \hline
    EMDA iteration & 5 & 5 & 5 & 5 & - \\
    \hline
    clipping range & 0.3 & 0.3 & 0.3 & 0.3 & - \\
    \hline
    temperature\_lambda & 10 & 10 & 10 & 10 & -\\
    \hline
    \end{tabular}
    \label{tab:params in Acrobot}
\end{table}
\begin{table}[!htbp]
    \centering
    \caption[Short Title]%
    {Parameters for OpenAI Gym CartPole-v1 experiments.}
    \begin{tabular}{|l|c|c|c|c|c|}
    \hline
    Hyperparameters & PPO-Clip & PPO-Clip-sub & PPO-Clip-root & PPO-Clip-log & A2C \\
    \hline
    batch\_size & 64 & 64 & 64 & 64 & 40 \\
    \hline
    learning\_rate & lin\_7.5e-4 & lin\_7.5e-4 & lin\_7.5e-4 & lin\_7.5e-4 & lin\_8.3e-4\\
    \hline
    vf\_coef & 0.5 & 0.5 & 0.5 & 0.5 & 0.5 \\
    \hline
    EMDA step size & 0.01 & 0.01 & 0.01 & 0.01 & -\\
    \hline
    EMDA iteration & 5 & 5 & 5 & 5 & - \\
    \hline
    clipping range & 0.3 & 0.3 & 0.3 & 0.3 & - \\
    \hline
    temperature\_lambda & 10 & 10 & 10 & 10 & -\\
    \hline
    \end{tabular}
    \label{tab:params in CartPole}
\end{table}
\begin{table}[!htbp]
    \centering
    \caption[Short Title]%
    {Parameters searching range for the experiments.}
    \begin{tabular}{|l|c|}
    \hline
    Hyperparameters & Searching Range \\
    \hline
    batch\_size & 64, 128, 256 \\
    \hline
    learning\_rate & lin\_1e-3, lin\_7.5e-4, lin\_5e-4, lin\_2.5e-4\\
    \hline
    vf\_coef & 0.00075, 0.0005, 0.3, 0.5, 0.75, 0.8 \\
    \hline
    EMDA step size & 0.001, 0.005, 0.075, 0.02, 0.05, 0.01, 0.1 \\
    \hline
    EMDA iteration & 2, 5, 10 \\
    \hline
    clipping range & 0.3, 0.5, 0.7 \\
    \hline
    temperature\_lambda & 0.1, 0.5, 1, 5, 10, 25, 40, 60, 75\\
    \hline
    \end{tabular}
    \label{tab: params search}
\end{table}

\section{Supplementary Related Works}

\noindent \textbf{RL as Classification.} Regarding the general idea of casting RL as a classification problem, it has been investigated by the existing literature \citep{lagoudakis2003reinforcement,lazaric2010analysis,farahmand2014classification}, which view the one-step greedy update (e.g. in Q-learning) as a binary classification problem.
However, a major difference is the labeling: classification-based approximate policy iteration labels the action with the largest Q value as positive; Generalized PPO-Clip labels the actions with positive advantage as positive.
Despite the high-level resemblance, our paper is fundamentally different from the prior works \citep{lagoudakis2003reinforcement,lazaric2010analysis,farahmand2014classification} as our paper is meant to study the theoretical foundation of PPO-Clip, from the perspective of hinge loss.

\section{{Comparison of the Clipped Objective and the Generalized PPO-Clip Objective}}
\label{app:compare PPO-Clip and HPO}
Recall that the original objective of PPO-Clip is
\begin{equation}
    L^{\text{clip}}(\theta) =\mathbb{E}_{s\sim d_{\mu_{0}}^{\pi},a\sim\pi(\cdot|s)}\big[\min\{\rho_{s,a}(\theta)A^{\pi}(s,a), \text{clip}(\rho_{s,a}(\theta),1-\epsilon,1+\epsilon)A^{\pi}(s,a)\}\big],
\end{equation}
where $\rho_{s,a}(\theta)=\frac{\pi_{\theta}(a\rvert s)}{\pi(a\rvert s)}$.
In practice, $L^{\text{clip}}(\theta)$ is approximated by the sample average as
\begin{align}
    L^{\text{clip}}(\theta) &\approx \hat{L}^{\text{clip}}(\theta)=\frac{1}{\lvert \cD\rvert} \sum_{(s,a)\in \cD}\min\{\rho_{s,a}(\theta){A}^{\pi}(s,a), \text{clip}(\rho_{s,a}(\theta),1-\epsilon,1+\epsilon){A}^{\pi}(s,a)\}\\
    &=\frac{1}{\lvert \cD\rvert} \sum_{(s,a)\in \cD}\lvert {A}^{\pi}(s,a)\rvert \cdot\underbrace{\min\{\rho_{s,a}(\theta)\sgn({A}^{\pi}(s,a)), \text{clip}(\rho_{s,a}(\theta),1-\epsilon,1+\epsilon)\sgn({A}^{\pi}(s,a))\}}_{=:{H}^{\text{clip}}_{s,a}(\theta)}.
\end{align}
Note that ${H}^{\text{clip}}_{s,a}(\theta)$ can be further written as
\[    {H}^{\text{clip}}_{s,a}(\theta)=
\begin{cases}
        1+\epsilon&, \text{if } {A}^{\pi}(s,a)>0 \text{ and } \rho_{s,a}(\theta)\geq 1+\epsilon  \\
        \rho_{s,a}(\theta)&, \text{if } {A}^{\pi}(s,a)>0 \text{ and } \rho_{s,a}(\theta)<1+\epsilon\\
        -\rho_{s,a}(\theta)&, \text{if } {A}^{\pi}(s,a)<0 \text{ and } \rho_{s,a}(\theta)>1-\epsilon\\
        -(1-\epsilon)&, \text{if } {A}^{\pi}(s,a)<0 \text{ and } \rho_{s,a}(\theta)\leq 1-\epsilon\\
        0&, \text{otherwise}
    \end{cases}\]
{Recall that the generalized objective of PPO-Clip with hinge loss takes the form as}
\begin{align}
    L(\theta) &\approx \hat{L}(\theta)=\frac{1}{\lvert \cD\rvert} \sum_{(s,a)\in \cD}\lvert {A}^{\pi}(s,a)\rvert \cdot\underbrace{\max\big\{0,\epsilon-(\rho_{s,a}(\theta)-1)\sgn({A}^{\pi}(s,a))\big\}}_{=:{H}_{s,a}(\theta)}.
\end{align}
Similarly, ${H}_{s,a}(\theta)$ can be further written as
\[    {H}_{s,a}(\theta)=
\begin{cases}
        0&, \text{if } {A}^{\pi}(s,a)>0 \text{ and } \rho_{s,a}(\theta)\geq 1+\epsilon  \\
        -\rho_{s,a}(\theta)+(1+\epsilon)&, \text{if } {A}^{\pi}(s,a)>0 \text{ and } \rho_{s,a}(\theta)<1+\epsilon\\
        \rho_{s,a}(\theta)-(1-\epsilon)&, \text{if } {A}^{\pi}(s,a)<0 \text{ and } \rho_{s,a}(\theta)>1-\epsilon\\
        0&, \text{if } {A}^{\pi}(s,a)<0 \text{ and } \rho_{s,a}(\theta)\leq 1-\epsilon\\
        \epsilon&, \text{otherwise}
    \end{cases}\]
Therefore, it is easy to verify that $\hat{L}^{\text{clip}}(\theta)$ and $-\hat{L}(\theta)$ only differ by a constant with respect to $\theta$. This also implies that $\nabla_{\theta}\hat{L}^{\text{clip}}(\theta)= -\nabla_{\theta}\hat{L}(\theta)$.

\end{document}